\newtheorem*{rep@theorem}{\rep@title}
\newcommand{\newreptheorem}[2]{%
\newenvironment{rep#1}[1]{%
 \def\rep@title{#2 \ref{##1}}%
 \begin{rep@theorem}}%
 {\end{rep@theorem}}}
\newtheorem*{theorem*}{Theorem}
\newtheorem{fact}{Fact}
\newtheorem{lemma}{Lemma}
\newtheorem{definition}{Definition}
\newtheorem{corollary}{Corollary}
\newtheorem{example}{Example}
\newtheorem{assumption}{Assumption}
\newtheorem{proposition}{Proposition}
\newtheorem{theorem}{Theorem}
\newcommand{\R}{\mathbb{R}} 
\newcommand{\N}{\mathcal{N}}
\newcommand{\E}{\mathbb{E}}
\renewcommand{\part}[2]{\frac{\partial #1}{\partial #2}}
\newcommand{\mgf}{\mathsf{mgf}}
\newcommand{\step}{h}
\newcommand{\error}{\varepsilon}
\newcommand{\Cov}{\mathrm{Cov}}
\def\Cov{\mathrm{Cov}}
\begin{document}

\title{Convergence of the Inexact Langevin Algorithm in KL Divergence with Application to Score-based Generative Models}
\author{Kaylee Yingxi Yang\thanks{Department of Statistics and Data Science, Yale
University}, ~ 
Andre Wibisono\thanks{Department of Computer Science, Yale
University \\
\indent\indent \texttt{yingxi.yang@yale.edu}, ~ \texttt{andre.wibisono@yale.edu}}}

\date{}
\maketitle

\begin{abstract}
  Motivated by the increasingly popular Score-based Generative Modeling (SGM),
  we study the Inexact Langevin Dynamics (ILD) and Inexact Langevin Algorithm (ILA) where a score function estimate is used in place of the exact score. We establish {\em stable} biased convergence guarantees in terms of the Kullback-Leibler (KL) divergence. To achieve these guarantees, we impose two key assumptions:\ 1) the target distribution satisfies the log-Sobolev inequality, and 2) the error of score estimator exhibits a sub-Gaussian tail, referred to as Moment Generating Function (MGF) error assumption. Under the stronger $L^\infty$ score error assumption, we obtain a stable convergence bound in R\'enyi divergence.
  We also generalize the proof technique to SGM, and derive a stable convergence bound in KL divergence.
  In addition, we explore the question of how to obtain a provably accurate score estimator. We demonstrate that a simple estimator based on kernel density estimation fulfills the MGF error assumption for sub-Gaussian target distributions, at the population level.
\end{abstract}

\section{Introduction}

Score-based Generative Modeling (SGM) is a family of sampling methods which have achieved the state-of-the-art performance in many applications, including image, video and text generation~\cite{SE19, HJA20, CYA+20, SE20, SSDK21, CZZ21, AJH+21, LWYL2022, GLF+22, ZCP+22, YSM22, YZS+22}.
Motivated by the demonstrated empirical successes, the theoretical understanding of the SGM methods has been actively developed.
A crucial component of SGM is a good estimator of the {\em score function} (i.e.\ gradient of log-density) of the target distribution along a diffusion process.
Recent results~\cite{DBTHD21, debortoli2022, LLT22a, LLT22b, CLL2022, CCL+22} have established performance guarantees of SGM algorithms under some assumptions on the data distribution and error bounds on the score estimator.
Notably,~\cite{CCL+22, LLT22b, CLL2022} showed SGM enjoys a strong guarantee: the iteration complexity for SGM under general data distribution assumptions (such as smoothness and bounded second moment) matches the iteration complexity of the Langevin algorithm under isoperimetry.
The reverse process of SGM is built using the score functions of the distributions along the forward process, and
the guarantees of SGM hold assuming the score estimators have small error.
Motivated by these results, in this paper we study the problem of sampling with an inexact score function.

\subsection{Related work} 

In the case where exact evaluation of score function is computationally costly or even not available, many previous work including~\cite{HZ2017, DK2017, MMS2020} have studied Langevin algorithm using approximated score (e.g.\ via stochastic gradient). 
When the score estimator is random and has bounded bias and variance, the Wasserstein distance converges to a biased limit under strong log-concavity and smoothness assumptions, see~\cite[Theorem 3.4]{HZ2017}, \cite[Theorem 4]{DK2017} and \cite[Theorem 1.4]{MMS2020}. 
However, their assumptions on the error of score approximation
require a bounded $L^2$ error with respect to all distributions along the Langevin dynamics. This is satisfied e.g.\ when we have an $L^\infty$-accurate score estimator; otherwise, this is not an easily verifiable condition. 

There has been a surge of recent work in the theoretical analysis of SGM algorithms.~\cite{DBTHD21} studied the convergence in Total Variation (TV) under $L^\infty$ error assumption on the score estimator. Although $L^\infty$ is sufficient to ensure convergence, it may be overly stringent to satisfy in practice
since it requires a uniformly finite error at every point. \cite{BMR20} provided the first convergence result under $L^2$ error assumption; the result is in Wasserstein distance of order 2 but the error bound suffers from curse of dimensionality. \cite{LLT22a, debortoli2022} also studied convergence under $L^2$ error assumption. Their results are in TV and Wasserstein distance of order 1 respectively.
All the aforementioned work on SGM assumed either strong log-concavity or isoperimetry such as log-Sobolev inequality (LSI). More recently,~\cite{CCL+22, LLT22b, CLL2022} generalized the convergence analysis in TV and KL divergence to general data distributions with minimal assumptions such as smoothness and bounded second moment, without requiring isoperimetry, under $L^2$ score error assumption.
The convergence bounds in the works above typically diverge as $T \to \infty$, so the iteration complexity guarantees for SGM are derived by running the algorithm for a moderate amount of time, which cannot be too large.

\subsection{Contributions} 

In this paper, we study the Inexact Langevin Dynamics (ILD) and Inexact Langevin Algorithm (ILA), which are the classical Langevin dynamics and the unadjusted Langevin algorithm when we only have an inexact score estimator. This can be viewed as a special case of SGM, in which we don't have a forward process and only consider a single Langevin dynamics toward the target. We focus on the LSI target, in which case we have rapid convergence guarantees for KL divergence and R\'enyi divergence for Langevin dynamics and algorithms with exact score function (see Appendix~\ref{sec:review} for a review of the convergence results). We also derive a stable convergence bound for SGM in KL divergence, which does not diverge as running time increases. 
Our contributions can be summarized as follows:
\begin{enumerate}
    \item 
We establish biased convergence guarantees in KL divergence for both ILD and ILA. These convergence results are in line with state-of-the-art results for exact Langevin dynamics and ULA. A summary of our results can be viewed in Table \ref{tab:summary}. These results are obtained under the assumptions that the target distribution is LSI and the error in score estimation exhibits a sub-Gaussian tail. This is referred to as the Moment Generating Function (MGF) error assumption. 
Notably, our convergence bounds are {\em stable}, which means the upper bound remains bounded for all time.
Stable bounds provide better guarantees on the algorithm, e.g.\ an estimate on the asymptotic bias of the algorithm.

\renewcommand{\arraystretch}{1.2}
\begin{table*}
\caption{Comparison of Convergence Results for Exact and Inexact Langevin Dynamics and Algorithms} \label{tab:summary}
\begin{center}
\begin{tabular}{ |c|c|c|c|c| } 
\hline
Time & Score & Convergence under LSI & Reference \\
\hline
\multirow{2}{5em}{continuous} & exact & $H_\nu(\tilde \rho_t) \le e^{-2\alpha t} H_\nu(\rho_0)$ \hspace{30pt} &  \cite{OV2000} \\
& inexact & $H_{\nu}(\rho_t) \lesssim e^{-\frac{\alpha t}{2}} H_{\nu}(\rho_0) + \error_{\mgf}^2$ & Theorem \ref{thm:conv-dynamic} \\ 
\hline
\multirow{2}{5em}{discrete} & exact & $H_{\nu}(\tilde \rho_k) \lesssim e^{-\alpha \step k} H_{\nu}(\rho_0) + \step$ \hspace{30pt} & \cite{VW19} \\
& inexact & $ H_{\nu}(\rho_{k}) \lesssim e^{-\frac{\alpha hk}{4}} H_{\nu}(\rho_0) + h + \error_{\mgf}^2$ & Theorem \ref{thm:kl-bounded-MGF} \\ 
\hline
\end{tabular}
\end{center}
\end{table*}
\item Under the stronger $L^\infty$ error assumption on the score estimation, we prove convergence guarantees for ILA in R\'enyi divergence, which is stronger than KL divergence; see Theorem \ref{thm:renyi-max-error-bd}. The bound is also stable, which is important in applications such as differential privacy, in which R\'enyi divergence represents an important quantity (e.g.\ amount of information leaked) that we want to control and ensure remains small; e.g.\ see~\cite{GT2020, AT22}. 
We also present a convergence bound for R\'enyi divergence in the setting when the estimator is the score of another distribution which satisfies isoperimetry; see Appendix~\ref{sec:renyi-est-scorefxc}.

\item We generalize our proof to SGM and derive a convergence guarantee in KL divergence under LSI target and the MGF score error assumption; see Theorem~\ref{thm:conv-ddpm}. Contrary to previous results, our convergence result is again stable, indicating that the error is controlled and will not grow beyond a certain limit no matter how long the algorithm runs.  Unlike unstable bounds, stable bounds are more robust to the choice of running time $T$. In addition, we can read off the asymptotic bias of the algorithm from a stable bound. Theorem~\ref{thm:kl-bounded-MGF} implies that for ILA, as $k \to \infty$ and $h \to 0$, KL divergence is on the order of $\error_\mgf^2/\alpha$ and Theorem~\ref{thm:conv-ddpm} implies SGM has an asymptotic bias in KL divergence on the order of $\error_\mgf^2\log \frac{1}{\alpha}$.

\item We explore the question of how to get a provably accurate score estimator.
In Section~\ref{sec:kde-estimator} we demonstrate that when the target distribution is sub-Gaussian, a simple score estimator using Kernel Density Estimation (KDE) satisfies the MGF error assumption at the population level; see Lemma~\ref{lem-subgau-kde}. Consequently, it also satisfies the weaker $L^2$ assumption. As a result, we obtain an iteration complexity guarantee for ILA with a KDE-based score estimator; see Corollary~\ref{thm:cmplx}.

\end{enumerate}

\section{Problem setting}

Suppose we want to sample from a probability distribution $\nu$ on $\R^d$. We assume $\nu$ has full support on $\R^d$,
and it has a density function $\nu(x) \propto e^{-f(x)}$ with respect to the Lebesgue measure.
We assume $f \colon \R^d \to \R$ is differentiable.
The {\em score function} of $\nu$ is the vector-valued function $s_\nu \colon \R^d \to \R^d$ given by
\begin{align}\label{Eq:Score}
    s_\nu(x) = \nabla \log \nu(x) = -\nabla f(x).
\end{align}
In the classical setting of Langevin dynamics and algorithm, we assume we have access to $s_\nu$.
But in many practical settings, including SGM, we may only have an estimator of $s_\nu$.

\subsection{Sampling with exact score function via Langevin}

Suppose we have access to the score function $s_\nu = \nabla \log \nu = -\nabla f$.
Then one natural way to sample from $\nu$ is to run the Langevin dynamics in continuous time, which is the following stochastic process in $\R^d$:
\begin{align}\label{Eq:LD}
    dX_t = s_\nu(X_t) \, dt + \sqrt{2} \, dW_t
\end{align}
where $W_t$ is the standard Brownian motion in $\R^d$. There have been extensive studies on the convergence properties of the Langevin dynamics under various assumptions such as strong log-concavity or weaker isoperimetric inequality such as log-Sobolev inequality (LSI), which allows for some non-log-concavity~\cite{JK1998, OV2000}.
We recall that under LSI or Poincar\'e inequality, the Langevin dynamics converges to the target distribution $\nu$ exponentially fast (e.g.\ in KL divergence or chi-squared divergence). Under LSI, the Langevin dynamics also enjoys an exponentially fast convergence of the R\'enyi divergence~\cite{VW19}.
Conversely, if the target does not satisfy isoperimetry, then the Langevin dynamics may have slow convergence rate (e.g.\ when the target is multimodal).

In discrete time, a simple discretization of the Langevin dynamics is the Unadjusted Langevin Algorithm (ULA):
\begin{align}\label{Eq:ULA}
    x_{k+1} = x_k + \step \,s_\nu(x_k) + \sqrt{2\step} \, z_k
\end{align}
where $\step > 0$ is step size and $z_k \sim \N(0,I_d)$ is an independent standard Gaussian in $\R^d$.
We have convergence to a biased limit at a rate which matches the continuous-time convergence in KL and R\'enyi divergence under LSI (see Appendix~\ref{sec:review} for a review of the convergence results for the exact Langevin dynamics and ULA under LSI).

\subsection{Sampling with inexact score function via Langevin}

Suppose we only have an estimate $s \colon \R^d \to \R^d$ of the score function $s_\nu$ of $\nu$. As an analogy to Langevin dynamics, in continuous time we can run the following {\em Inexact Langevin Dynamics (ILD)}:
\begin{equation}
\label{Eq:ILD}
dX_t = s(X_t) \, dt + \sqrt{2} \, dW_t.
\end{equation}
If $s$ is a good estimator of $s_\nu$, then we might hope that the evolution of ILD~\eqref{Eq:ILD} approximately converges to $\nu$.
In discrete time, we can run the {\em Inexact Langevin Algorithm (ILA)}:
\begin{align}\label{Eq:ILA}
    x_{k+1} = x_k + \step \, s(x_k) + \sqrt{2\step} z_k
\end{align}
where $\step > 0$ is step size and $z_k \sim \N(0,I_d)$ is an independent standard Gaussian in $\R^d$. Under some error assumption between $s$ and $s_\nu$, we derive a biased convergence rate of the ILA~\eqref{Eq:ILA} to $\nu$.

\subsection{Sampling with score-based generative models}

In this section, we provide a brief review of a specific type of SGMs called Denoising Diffusion Probabilistic Modeling (DDPM) where the forward process is the Ornstein-Uhlenbeck (OU) process, a.k.a.\ variance preserving SDE; we refer to~\cite{HJA20} for more detail. 
Recall that DDPM proceeds via forward and backward processes as follows.
\paragraph{Forward Process}
For the forward process, we start from $X_0 \sim \nu_0 = \nu$ which is the data distribution, and follow the OU process targeting $\gamma = \N(0, I_d)$:
\begin{align}
\label{eq:ou}
dX_t = - X_t \, dt + \sqrt{2} dW_t.
\end{align}
Let $\nu_t \triangleq \text{Law}(X_t)$ be the measures along the OU flow above, and $s_t = \nabla \log \nu_t$ be the score function of $\nu_t$. The forward process can be interpreted as transforming samples from the data distribution $\nu$ into pure noise. Since the target measure $\gamma$ is LSI, we know $\nu_t \to \gamma$ exponentially fast.
\paragraph{Backward Process}
Suppose we run the forward process until time $T > 0$, ending at $\nu_T$.
If we reverse the forward SDE~\eqref{eq:ou} in time, then we convert the noise back into samples. This reversal allows us to generate new samples from $\nu$; and this can be achieved by the following SDE: 
\begin{align}\label{eq:backwardOU}
  d\tilde{Y}_{t} = (\tilde{Y}_{t} + 2 s_{T-t}(\tilde{Y}_{t})) dt + \sqrt{2} dW_t.
\end{align}
Let $\mu_t \triangleq \text{Law}(\tilde{Y}_t)$. 
If $\tilde{Y}_{0} \sim \mu_0 = \nu_T$, then by construction of~\eqref{eq:backwardOU} $\tilde{Y}_{t} \sim \mu_t = \nu_{T-t}$ for $0 \le t \le T$, so at time $T$, $\tilde{Y}_{T} \sim \mu_T = \nu$ is an exact sample from the target distribution~\cite{Anderson1982}.

However, in practice we do not know $\nu_T$ and $(s_t)_{0 \le t \le T}$, so we start the backward process at $\gamma$ the target distribution of forward process, and simulate the backward process in discrete time with a score estimator.

\paragraph{Algorithm} Based on the above, we consider the following algorithm, which aligns with other theoretical analysis works such as~\cite{CLL2022, CCL+22}. 
Let $\step > 0$ be the step size, and $K = \frac{T}{\step}$ so $T = K \step$ (assume $K \in \mathbb{N}$).
We construct a continuous-time process $(Y_t)_{0 \le t \le T}$ that starts from $Y_0 \sim \rho_0 = \gamma = \N(0, I_d)$.
In each step, from time $\step k$ to $\step(k+1)$, our process follows the SDE:
\begin{align}\label{eq:sgm-sde}
dY_{\step k + t} = (Y_{\step k + t} + 2 \hat s_{\step (K-k)}(Y_{\step k})) dt + \sqrt{2} dW_t
\end{align}
where $t \in [0, h]$, and $\hat s_{\step k}(y)$ is an approximation to $s_{\step k}(y)$, the score function of $\nu_{\step k}$. Then we update $y_{k+1}$ as the solution of the SDE~\eqref{eq:sgm-sde} at time $t=\step$ starting from $Y_{hk} = y_k$, i.e.
\begin{align}\label{eq:sgm}
y_{k+1} = e^{\step} y_k + 2(e^{ \step}-1) \hat s_{\step (K-k)}(y_k) + \sqrt{e^{2 \step}-1}\, z_k \qquad \textrm{\bf (DDPM)}
\end{align}
where $z_k \sim \N(0,I_d)$ is independent of everything so far. 
We refer to the algorithm~\eqref{eq:sgm} above as DDPM.

\subsection{Notations and definitions} 

In this section, we review notations and definitions of KL divergence, relative Fisher information and R\'enyi divergence. 
Let $\rho, \nu$ be two probability distributions in $\R^d$ denoted by their probability density functions w.r.t. Lebesgue measure on $\R^d$. Assume $\rho$ and $\nu$ have full support on $\R^d$, and they have differentiable log density functions.
\begin{definition}[KL divergence]
The Kullback-Leibler (KL) divergence of $\rho$ w.r.t. $\nu$ is 
\[H_{\nu}(\rho) = \int_{\R^d} \rho \log \frac{\rho}{\nu} dx.\]
\end{definition}

\begin{definition}[Relative Fisher information]
The relative Fisher information of $\rho$ w.r.t. $\nu$ is 
\[J_{\nu}(\rho) = \int_{\R^d} \rho \big\| \nabla \log \frac{\rho}{\nu}\big\|^2 dx.\]
\end{definition}

\begin{definition}[R\'enyi divergence]
For $q\geq 0$, $q\neq 1$, the R\'enyi divergence of order $q$ of $\rho$ w.r.t. $\nu$ is 
\[R_{q,\nu}(\rho) = \frac{1}{q-1} \log F_{q, \nu} (\rho)\]
where $F_{q, \nu} (\rho) = \E_{\nu}\left[ \left( \frac{\rho}{\nu}\right)^q\right]$.
\end{definition}
Recall when $q \to 1$, R\'enyi divergence recovers the KL divergence. Furthermore, $q \mapsto R_{q, \nu}(\rho)$ is increasing, so R\'enyi divergence bounds are stronger than KL divergence bounds. 

\subsection{Assumptions}
In this section, we introduce the assumptions that are necessary for our results.
\begin{assumption}[LSI]
\label{assump:lsi}
The target probability distribution $\nu$ is supported on $\R^d$ and satisfies LSI with constant $\alpha > 0$, which means for any probability distribution $\rho$ on $\R^d$:
\[ H_{\nu}(\rho) \leq \frac{1}{2\alpha} J_{\nu}(\rho).\]
\end{assumption}

We consider the following ways to measure the quality of the score estimator $s$ of $s_\nu = \nabla \log \nu$. For convergence in KL divergence, we require a MGF score error; for convergence in R\'enyi divergence, we assume the stronger $L^\infty$ assumption.

\begin{assumption}[MGF error assumption]
\label{bdd-mgf-assump} 
The error of $s$ has a finite moment generating function of some order $r > 0$ under $\nu$:
\[\error_{\mgf}^2 \equiv \error_{\mgf}^2(r,s,\nu) = \frac{1}{r}\log \E_\nu[\exp( r \|s(x) - s_\nu(x)\|^2)] < \infty.\]
\end{assumption}
\begin{assumption}[$L^\infty$ error assumption]
\label{assump:l-inf}
The error of $s(x)$ is bounded at every $x$, i.e.
\[\error_{\infty} = \sup_{x \in \R^d} \| s_{\nu}(x) - s(x) \| < \infty.\]
\end{assumption}

To establish convergence bound for discretized algorithms, we also assume the target measure has a Lipschitz score function, and the score estimator is also Lipschitz.

\begin{assumption}[$L$-smoothness]
\label{assump:lsmooth}
$f = -\log \nu$ is $L$-smooth for some $0 \le L < \infty$, which means $\nabla f \colon \R^d \to \R^d$ is $L$-Lipschitz: $\|\nabla f(x)-\nabla f(y)\| \le L\|x-y\|$ for all $x,y \in \R^d$.
\end{assumption}
\begin{assumption}[Lipschitz score estimator]
\label{assump:smoothscore}
The score estimator $s$ is $L_s$-Lipschitz for some $0 \le L_s < \infty$: $\|s(x)-s(y)\| \le L_s\|x-y\|$ for all $x,y \in \R^d$.
\end{assumption}

\section{Main results}
\label{Sec:Convergence}

\subsection{Convergence of ILD and ILA}
We first consider continuous time and compare ILD~\eqref{Eq:ILD} with the exact Langevin dynamics~\eqref{Eq:LD}. Recall that if the target distribution $\nu$ satisfies $\alpha$-LSI, then along the Langevin dynamics~\eqref{Eq:LD}, KL divergence is decreasing exponentially fast; see Appendix \ref{sec:review} for a brief review. When $s$ is an approximation of score function $s_{\nu}$ and it has a bounded MGF error, we show a similar convergence rate with an additional bias term induced by score estimation error.
\begin{theorem}[Convergence of KL divergence for ILD]
\label{thm:conv-dynamic}
Assume $\nu$ is $\alpha$-LSI and score estimator $s$ has a bounded MGF error with $r = \frac{1}{\alpha}$ (Assumptions~\ref{assump:lsi} and~\ref{bdd-mgf-assump}).
Then for $X_t \sim \rho_t$ along the ILD~\eqref{Eq:ILD} with score estimator $s$, we have
\[H_{\nu}(\rho_t) \le e^{-\frac{1}{2}\alpha t} H_{\nu}(\rho_0) + \frac{2}{\alpha} \,(1 - e^{-\frac{1}{2}\alpha t}) \, \error_{\mgf}^2.\]
\end{theorem}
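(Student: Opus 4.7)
The plan is to follow the standard entropy-dissipation argument along the Fokker-Planck flow of the ILD, and to handle the score error through a change-of-measure step that converts the MGF assumption into a usable bound along $\rho_t$.

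First, I would write down the Fokker-Planck equation associated with \eqref{Eq:ILD}, which in the form most convenient for entropy computations is
\begin{align*}
\partial_t \rho_t = \nabla \cdot\bigl(\rho_t(\nabla \log \rho_t - s)\bigr).
\end{align*}
Differentiating $H_\nu(\rho_t) = \int \rho_t \log(\rho_t/\nu)\,dx$ in time, using $\int \partial_t \rho_t\,dx = 0$, and integrating by parts yields, after decomposing $\nabla \log \rho_t - s = \nabla \log(\rho_t/\nu) - (s - s_\nu)$,
\begin{align*}
\frac{d}{dt} H_\nu(\rho_t) = -J_\nu(\rho_t) + \int \rho_t \langle s - s_\nu,\ \nabla \log(\rho_t/\nu)\rangle\, dx.
\end{align*}
The first term drives the decay; the second is a cross term carrying the score error.

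Next I would split the cross term by Young's inequality with a parameter $c>0$:
\begin{align*}
\int \rho_t \langle s - s_\nu,\ \nabla \log(\rho_t/\nu)\rangle\, dx \le \tfrac{c}{2}\E_{\rho_t}\!\bigl[\|s - s_\nu\|^2\bigr] + \tfrac{1}{2c} J_\nu(\rho_t).
\end{align*}
The main obstacle is that Assumption~\ref{bdd-mgf-assump} bounds the exponential moment of $\|s-s_\nu\|^2$ under $\nu$, whereas the quantity appearing above is its expectation under $\rho_t$. This is exactly what the Donsker--Varadhan variational representation handles: for any $r > 0$,
\begin{align*}
\E_{\rho_t}\!\bigl[\|s - s_\nu\|^2\bigr] \le \tfrac{1}{r}\Bigl(H_\nu(\rho_t) + \log \E_\nu\bigl[e^{r\|s - s_\nu\|^2}\bigr]\Bigr).
\end{align*}
Choosing $r = 1/\alpha$ as in the hypothesis converts this into $\alpha H_\nu(\rho_t) + \varepsilon_{\mgf}^2$, so the price for changing measure is exactly one extra $\alpha H_\nu(\rho_t)$ term.

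Plugging back in and applying $\alpha$-LSI in the form $J_\nu(\rho_t) \ge 2\alpha H_\nu(\rho_t)$ to absorb the residual $\alpha H_\nu(\rho_t)$ term, then tuning the Young parameter $c$ so that the net dissipation coefficient in front of $H_\nu(\rho_t)$ equals $\alpha/2$, leaves a clean scalar differential inequality of the form
\begin{align*}
\frac{d}{dt} H_\nu(\rho_t) \le -\tfrac{\alpha}{2} H_\nu(\rho_t) + \varepsilon_{\mgf}^2.
\end{align*}
Gronwall's inequality then integrates this to the stated bound, with bias prefactor $\frac{2}{\alpha}(1-e^{-\alpha t/2})$. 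The only real care is in the bookkeeping of the Young constant: a suboptimal choice still gives exponential convergence to a finite bias, but the advertised rate $\alpha/2$ and prefactor $2/\alpha$ pin down a specific balance between the $\frac{c}{2}$ and $\frac{1}{2c}$ contributions.
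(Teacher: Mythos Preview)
Your proposal is correct and follows essentially the same approach as the paper: entropy dissipation along the Fokker--Planck equation, Young's inequality on the cross term, Donsker--Varadhan to change measure from $\rho_t$ to $\nu$, then LSI and Gronwall. The paper fixes the Young split as $\langle a,b\rangle \le \|a\|^2 + \tfrac{1}{4}\|b\|^2$ (your $c=2$) from the outset rather than tuning at the end, but the resulting differential inequality and constants are identical.
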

The proof of Theorem \ref{thm:conv-dynamic} is in Appendix~\ref{sec:pf-conv-dynamic}. We note that the bound above is {stable, which means the right-hand side does not diverge as $t \to \infty$. 
This implies an estimate of the asymptotic bias of the ILD: $H_\nu(\nu_s) \le 2 \error_{\mgf}^2 / \alpha$, where $\nu_s$ is the limiting distribution of ILD with score estimator $s$. In discrete time, we also derive a convergence bound for ILA as follows.

\begin{theorem}[Convergence of KL divergence for ILA]
\label{thm:kl-bounded-MGF}
Assume $\nu$ is $\alpha$-LSI and $f=-\log \nu$ is $L$-smooth, and score estimator $s$ is $L_s$-Lipschitz and has bounded MGF error with $r = \frac{9}{\alpha}$ (Assumptions~\ref{assump:lsi},~\ref{bdd-mgf-assump},~\ref{assump:lsmooth} and~\ref{assump:smoothscore}). If $\,0 < h < \min(\frac{\alpha}{12 L_s L}, \frac{1}{2\alpha})$, then after $k$ iterations of ILA~\eqref{Eq:ILA},
\[ H_{\nu}(\rho_{k}) \le e^{-\frac{1}{4}\alpha hk} H_{\nu}{(\rho_0)} + C_1 d \,h + C_2 \,\error_{\mgf}^2\]
where $C_1=128 L_s (L_s + L)/\alpha$ and $C_2=8/(3 \alpha)$.
\end{theorem}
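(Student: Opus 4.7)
The plan is to follow the interpolation approach of Vempala--Wibisono, pushing the proof strategy of Theorem~\ref{thm:conv-dynamic} into discrete time by running one step of ILA as an SDE and tracking how the KL divergence to $\nu$ evolves along it. Concretely, for $t \in [0,\step]$ I would consider $dY_t = s(x_k)\,dt + \sqrt{2}\,dW_t$ with $Y_0 = x_k \sim \rho_k$, so that $Y_\step \sim \rho_{k+1}$, let $\rho_t = \text{Law}(Y_t)$, and introduce $v_t(y) = \E[s(x_k)\mid Y_t = y]$. The Fokker--Planck equation gives $\partial_t \rho_t = -\nabla\cdot(\rho_t v_t) + \Delta \rho_t$, and integrating by parts yields
\[\frac{d}{dt} H_\nu(\rho_t) = -J_\nu(\rho_t) + \E_{\rho_t}[\langle \nabla \log(\rho_t/\nu),\, v_t - s_\nu\rangle].\]

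Next I would control the cross term by Young's inequality at weight $1/4$, leaving $-\tfrac{3}{4} J_\nu(\rho_t) + \E_{\rho_t}\|v_t - s_\nu\|^2$, and decompose the error as $v_t - s_\nu(Y_t) = (v_t - s(Y_t)) + (s(Y_t) - s_\nu(Y_t))$. The first piece is dominated by $L_s^2\,\E\|Y_t - x_k\|^2$ via Jensen and Assumption~\ref{assump:smoothscore}, and a direct It\^o estimate gives $\E\|Y_t - x_k\|^2 \le t^2 \E\|s(x_k)\|^2 + 2dt$. The remaining quantity $\E_{\rho_k}\|s(x_k)\|^2$ I would handle via $\|s\|^2 \le 2\|s - s_\nu\|^2 + 2\|\nabla f\|^2$ combined with the Talagrand $T_2$ inequality (a consequence of LSI) and $L$-smoothness to get $\E_{\rho_k}\|\nabla f\|^2 \le \tfrac{4L^2}{\alpha} H_\nu(\rho_k) + 2Ld$.

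The key ingredient is the MGF change-of-measure inequality
\[\E_\rho[g] \le \tfrac{1}{r}\bigl(H_\nu(\rho) + \log \E_\nu[e^{r g}]\bigr),\]
applied with $g = \|s - s_\nu\|^2$ and $r = 9/\alpha$, which upgrades Assumption~\ref{bdd-mgf-assump} to $\E_\rho\|s - s_\nu\|^2 \le \tfrac{\alpha}{9} H_\nu(\rho) + \error_{\mgf}^2$. Assembling all bounds and invoking LSI ($J_\nu(\rho_t) \ge 2\alpha H_\nu(\rho_t)$), I would arrive at a per-step differential inequality of the shape
\[\frac{d}{dt} H_\nu(\rho_t) \le -\tfrac{\alpha}{2} H_\nu(\rho_t) + A\,H_\nu(\rho_k) + B\,d + C\,\error_{\mgf}^2,\]
where $A$ collects the $L_s^2 L^2 h/\alpha$ contribution coming from the discretization error. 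The step-size hypothesis $h < \alpha/(12 L_s L)$ is tailored so that, after integrating, $A$ is a small fraction of the dominant $\tfrac{\alpha}{2}$ decay.

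Finally I would integrate this ODE over $[0,\step]$ via Gronwall to obtain a one-step recursion of the form
\[H_\nu(\rho_{k+1}) \le e^{-\alpha h/4} H_\nu(\rho_k) + c_1\,d\,h^2 + c_2\,h\,\error_{\mgf}^2,\]
and then iterate it, summing the resulting geometric series $\sum_j e^{-\alpha h j/4}$ to produce the stated bound with $C_1 = 128 L_s(L_s+L)/\alpha$ and $C_2 = 8/(3\alpha)$. I expect the main obstacle to be the constant bookkeeping: the step-size threshold on $h$ has to simultaneously absorb (i) the $L_s^2 L^2 h/\alpha$ factor from the Lipschitz-plus-smoothness bound on $\E\|Y_t-x_k\|^2$, (ii) the $\tfrac{\alpha}{9}$ loss from the MGF change-of-measure, and (iii) the slack from integrating the ODE over one step, all while retaining a clean $e^{-\alpha h/4}$ contraction rate and a final $L_s(L_s+L)$ (rather than $L_s^2 L^2$) dependence in $C_1$.
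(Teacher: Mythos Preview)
Your proposal is correct and follows essentially the same interpolation strategy as the paper: write one ILA step as the SDE $dY_t = s(x_k)\,dt + \sqrt{2}\,dW_t$, compute $\partial_t H_\nu(\rho_t)$ via Fokker--Planck, use Young's inequality to isolate $-\tfrac34 J_\nu(\rho_t)$, decompose the residual into a discretization piece and a score-error piece, invoke the Vempala--Wibisono bound $\E_\rho\|\nabla f\|^2 \le \tfrac{4L^2}{\alpha}H_\nu(\rho) + 2Ld$ and Donsker--Varadhan with $r=9/\alpha$, apply LSI, integrate, and iterate.

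The one substantive variation is in how you handle the discretization term. The paper uses a self-bounding trick (its Lemma~\ref{score-lipschitz}) to re-express $\|s(x_0)\|$ in terms of $\|s(x_t)\|$, so that every quantity in the differential inequality is taken under $\rho_t$ and one obtains a pure ODE $\partial_t H_\nu(\rho_t) \le -\tfrac{\alpha}{4} H_\nu(\rho_t) + \text{const}$. You instead bound $\E_{\rho_k}\|s(x_k)\|^2$ directly under $\rho_k$, which leaves an extra $A\,H_\nu(\rho_k)$ on the right-hand side; this must then be absorbed into the contraction factor after Gronwall. Both routes work under the stated step-size condition (your $A$ is of order $L_s^2 L^2 h^2/\alpha \le \alpha/9$, small enough to preserve an $e^{-\alpha h/4}$ contraction), and in fact your direct bookkeeping yields slightly smaller per-step constants on the $d$-terms; the paper's version buys a cleaner ODE and a more transparent one-step recursion. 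The final constants $C_1,C_2$ you quote may not match on the nose, but constants of the same form and order are attainable.
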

The proof is an extension of the interpolation technique of~\cite{VW19}. We sketch the major steps here and provide the full proof in Appendix~\ref{sec:pf-kl-bdd-MGF}.
\begin{proof}[Proof sketch of Theorem~\ref{thm:kl-bounded-MGF}]
We first note that one step of the ILA~\eqref{Eq:ILA} is the solution $x_{k+1}=X_h$ of the following interpolated SDE at time $t = \step$ starting from $X_0=x_k$:
\begin{equation}
\label{eq:interpolation-SDE}
    dX_t = s(X_0)dt + \sqrt{2}dW_t
\end{equation}
where $W_t$ is the standard Brownian motion in $\R^d$, and $t$ is from $0$ to $h$. We can bound the time derivative of KL divergence along~\eqref{eq:interpolation-SDE} as follows.
\begin{lemma}
\label{lemma-time-deriv-kl}
Suppose the assumptions in Theorem \ref{thm:kl-bounded-MGF} hold. Let $\rho_t \triangleq \text{Law}(X_t)$ where $X_t$ follows SDE~\eqref{eq:interpolation-SDE}, then
\[\frac{\partial}{\partial t}H_{\nu}{(\rho_t)} \leq -\frac{3}{4} J_{\nu}(\rho_t) + \E_{\rho_{0t}}\left[ \| s(x_0) - \nabla \log \nu(x_t) \|^2 \right].\]
\end{lemma}
Additionally, we would need the following bound on $\| s(x_t) - s(x_0) \|^2$.
\begin{lemma}
\label{score-lipschitz}
If the score estimator $s(x)$ is $L_s$-Lipschitz and $t \leq \frac{1}{3L_s}$, then for any fixed $z_0$,
\begin{align*}
\| s(x_t) - s(x_0) \|^2 \leq 18 L_s^2 t^2 \| s(x_t)  - \nabla \log \nu (x_t) \|^2  + 18 L_s^2 t^2 \| \nabla \log \nu(x_t) \|^2 + 6L_s^2 t \| z_0 \|^2
\end{align*}
where $x_t = x_0 + t s (x_0) + \sqrt{2t}z_0$.
\end{lemma}
Using Lemma~\ref{lemma-time-deriv-kl} and~\ref{score-lipschitz}, along with the Donsker-Varadham representation of the KL divergence for a change of measure, we obtain the following one-step contraction.
\begin{lemma}
\label{lemma-one-step-conv-mgf}
Suppose the assumptions in Theorem \ref{thm:kl-bounded-MGF} hold, then along each step of ILA~\eqref{Eq:ILA},
\begin{align*}
H_{\nu}(\rho_{k+1}) &\leq e^{-\frac{1}{4}\alpha h} H_{\nu}{(\rho_{k})} + 144 d L_s^2 L h^3 + 24 d L_s^2 h^2 +\frac{9}{2} \error_{\mgf}^2 \,h.    
\end{align*}
\end{lemma}
Applying the above contraction recursively for $k$ times yields the desired result.
\end{proof}

We note that the first two terms in the upper bound above match state-of-the-art result for ULA~\eqref{Eq:ULA} (with exact score function) under LSI: $H_{\nu}(\rho_k) \lesssim e^{-\alpha \step k} H_{\nu}(\rho_0) + \frac{d L^2}{\alpha}\step$; see Table~\ref{tab:summary}. In ILA case, there is an extra non-vanishing term induced by the error of score estimator. 
So in order to have a small asymptotic error, we need an accurate score estimator with a small MGF error. In practice, score matching is used, but its theoretical understanding lags behind. 
In section~\ref{sec:kde-estimator}, we show that if we use a score estimator based on Gaussian KDE (at the population level), then the score estimator satisfies the MGF error assumption.

\subsubsection{Convergence in R\'enyi divergence}

We also derive a stable convergence rate for ILA in R\'enyi divergence, which is stronger than KL divergence, under the stronger $L^\infty$ error assumption. The bound matches with that for ULA but has an extra non-vanishing term resulting from the error introduced by score estimation.
\begin{theorem}[Convergence of R\'enyi divergence for ILA under $L^\infty$ error]
\label{thm:renyi-max-error-bd}
Assume $\nu$ is $\alpha$-LSI and $f=-\log \nu$ is $L$-smooth, and score estimator $s$ is $L_s$-Lipschitz and has a bounded $L^\infty$ error (Assumptions~\ref{assump:lsi},~\ref{assump:l-inf},~\ref{assump:lsmooth} and~\ref{assump:smoothscore}). Let $q \geq 1$. If $\,0 < h < \min(\frac{\alpha}{12 L L_s q}, \frac{q}{4\alpha})$, then after $k$ iterations of ILA~\eqref{Eq:ILA},
\[ R_{q, \nu}(\rho_{k}) \leq e^{-\frac{\alpha hk}{q}}  R_{q, \nu}(\rho_{0}) + C_1 d h + ( C_2 h^2 + C_3)\, \error_{\infty}^2,\]
where $C_1=\frac{16 L_s q}{\alpha}(L + 2L_s q)$ and $C_2=\frac{96 L_s^2 q^2}{\alpha}$ and $C_3 = \frac{16 q^2}{3\alpha}$.
\end{theorem}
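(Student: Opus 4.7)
The approach is to adapt the proof of Theorem~\ref{thm:kl-bounded-MGF} from KL divergence to R\'enyi divergence, replacing the MGF error assumption with the simpler $L^\infty$ bound. Fix iteration $k$ and interpolate ILA in continuous time via the frozen-drift SDE $dX_t = s(X_0)\,dt + \sqrt{2}\,dW_t$ for $t \in [0,h]$, started from $X_0 \sim \rho_k$; let $\rho_t$ denote its marginal law, so that $\rho_h = \rho_{k+1}$. The goal is a one-step differential inequality on $R_{q,\nu}(\rho_t)$ combining exponential contraction from LSI with a small error term, which can then be iterated across the $K$ steps.

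First I would compute $\tfrac{d}{dt} F_{q,\nu}(\rho_t)$ via the Fokker--Planck equation $\partial_t \rho_t = -\nabla\cdot(\rho_t b_t) + \Delta \rho_t$, where the effective drift is the conditional expectation $b_t(x) = \E[\,s(X_0)\mid X_t = x\,]$. Writing $f_t = \rho_t/\nu$, integration by parts produces the good term $-\tfrac{4(q-1)}{q}\,\E_\nu[\|\nabla f_t^{q/2}\|^2]$ together with a cross term of the form $q(q-1)\,\E_\nu[\,f_t^{q-1} \nabla f_t \cdot (b_t - s_\nu)\,]$. Combining the good term with $\alpha$-LSI applied to $f_t^{q/2}$ (in the style of Vempala--Wibisono) yields a decay of at least $-\tfrac{2\alpha}{q}\, R_{q,\nu}(\rho_t)\,F_{q,\nu}(\rho_t)$; reserving half of this for Young's inequality still leaves enough to produce the $e^{-\alpha h/q}$ per-step contraction after dividing by $F_{q,\nu}(\rho_t)$.

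Next I would split the cross term using $b_t - s_\nu(X_t) = \bigl(s(X_t) - s_\nu(X_t)\bigr) + \bigl(b_t - s(X_t)\bigr)$. The first piece has sup-norm at most $\error_\infty$ by Assumption~\ref{assump:l-inf}, so after Young's inequality it contributes $O(\error_\infty^2)$. For the second piece, $L_s$-Lipschitzness of $s$ gives $\|b_t - s(X_t)\| \le L_s \,\E[\|X_t - X_0\|\mid X_t]$; the SDE yields a second-moment bound of the form $\E\|X_t - X_0\|^2 \lesssim hd + h^2 \E\|s(X_0)\|^2$, and controlling $\|s(X_0)\|$ via $L$-smoothness of $f$ and the $L^\infty$ error contributes discretization pieces of order $L_s^2(dh + h^2 \error_\infty^2)$ plus a small multiple of $R_{q,\nu}(\rho_t)$ that can be absorbed using the step-size condition. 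Assembling everything gives a one-step inequality schematically of the form $\tfrac{d}{dt} R_{q,\nu}(\rho_t) \le -\tfrac{\alpha}{q}R_{q,\nu}(\rho_t) + L_s(L+L_s q)\,d\,h + L_s^2 q\,h^2 \error_\infty^2 + q\,\error_\infty^2$, and integrating from $0$ to $h$, then iterating across the $K$ steps via a geometric sum, recovers the bound with the claimed constants.

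The main obstacle is executing the R\'enyi-specific differential calculation cleanly: unlike the KL case, where the relative Fisher information $J_\nu(\rho)$ appears directly and LSI is a one-line step, here one must carefully navigate the identities relating $\E_\nu[\|\nabla f_t^{q/2}\|^2]$, $F_{q,\nu}(\rho_t)$, and $R_{q,\nu}(\rho_t)$, and choose Young's inequality weights with $q$-dependent coefficients so that the cross-term bounds neither destroy the contraction nor inflate the final constants beyond the stated $C_1,C_2,C_3$. Pinning down precisely which weights close the induction is what forces the step-size restriction $h < \min\bigl(\tfrac{\alpha}{12 L L_s q},\tfrac{q}{4\alpha}\bigr)$ and is the most delicate computational part of the proof.
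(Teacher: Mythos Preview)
Your overall strategy---interpolate one ILA step via the frozen-drift SDE, derive a differential inequality for $R_{q,\nu}(\rho_t)$, split the drift error into a score-error piece and a discretization piece, then iterate---is the same as the paper's. The gap is in the discretization piece.

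After Young's inequality, the cross term in the R\'enyi computation does not give an unweighted expectation: it gives $\E_{\rho_{0t}}\bigl[\psi_t(x_t)\,\|s(x_0)-\nabla\log\nu(x_t)\|^2\bigr]$ with weight $\psi_t=(\rho_t/\nu)^{q-1}/F_{q,\nu}(\rho_t)$ (this is the paper's Lemma~\ref{time-deriv-renyi}). Your proposal to bound the discretization part through $\E\|X_t-X_0\|^2 \lesssim hd + h^2\E\|s(X_0)\|^2$ is an unweighted second-moment estimate and does not control the $\psi_t$-weighted object. If you then try to control $\|s(X_0)\|$ by $\|\nabla f(X_0)\|+\error_\infty$, you are left with $\E_{\rho_{0t}}\bigl[\psi_t(x_t)\,\|\nabla f(x_0)\|^2\bigr]$, a mixed quantity with integrand at $x_0$ and weight at $x_t$; there is no direct route from this to ``a small multiple of $R_{q,\nu}(\rho_t)$,'' because the analogue of \cite[Lemma~12]{VW19} you seem to have in mind applies only to expectations at $x_t$ under the correct weight.

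The paper closes this gap with two ingredients you do not mention. First, Lemma~\ref{score-lipschitz} is a self-bounding trick that rewrites $\|s(x_t)-s(x_0)\|^2$ entirely in terms of quantities at $x_t$ (namely $\|s(x_t)-\nabla\log\nu(x_t)\|^2$, $\|\nabla\log\nu(x_t)\|^2$, and $\|z_0\|^2$), so that the $\psi_t(x_t)$ weight aligns with the integrand. Second, the resulting term $\E_{\rho_t\psi_t}\bigl[\|\nabla\log\nu\|^2\bigr]$ is controlled not by $R_{q,\nu}(\rho_t)$ but by the R\'enyi information ratio, via \cite[Lemma~16]{CEL+22}: $\E_{\rho_t\psi_t}\bigl[\|\nabla\log\nu\|^2\bigr]\le q^2\,G_{q,\nu}(\rho_t)/F_{q,\nu}(\rho_t)+2dL$. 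The absorption then happens at the level of $G_{q,\nu}/F_{q,\nu}$ (into the good term $-\tfrac{3}{4}q\,G_{q,\nu}/F_{q,\nu}$), and only afterward is LSI (\cite[Lemma~5]{VW19}) invoked to pass from $G_{q,\nu}/F_{q,\nu}$ to $R_{q,\nu}$. Your sketch collapses these two steps into ``plus a small multiple of $R_{q,\nu}(\rho_t)$,'' which is not how the argument actually closes.
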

The proof adapts the interpolating method from KL divergence to R\'enyi divergence. The detailed proof for Theorem~\ref{thm:renyi-max-error-bd} can be found in Appendix~\ref{sec:pf-renyi-max-error-bd}.

However, we note that an $L^\infty$-accurate score estimator requires a uniformly finite error at every point, which is quite strong and may not hold in practice (see Example~\ref{ex:GaussianEx} below). 
If MGF error assumption is used instead of $L^\infty$, the current proof technique breaks at~\eqref{eq:lem8eq2} since the score error is measured with respect to some measure different from $\rho_k$ and we cannot perform a change of measure argument to relate it in terms of R\'enyi divergence of $\rho_k$ w.r.t.\ $\nu$. But we anticipate the bound holds under MGF error, analogous to the case of KL divergence, and leave it for future work. 
\begin{example}[Comparison of different score error assumptions in a simple Gaussian case]
\label{ex:GaussianEx}
Let $\nu = \N(0, {\alpha^{-1}}I_d)$, so $s_\nu(x) = -\alpha x$. Suppose we estimate $s_\nu$ by the score of $\N(0, {\hat{\alpha}}^{-1}I_d)$, so $\hat{s}(x) = -\hat{\alpha} x$.
Assume $\hat \alpha \neq \alpha$, which means we did not estimate the variance of the distribution correctly.
Then the $L^\infty$ error is unbounded: 
\[\error_{\infty} = \sup_{x \in \R^d} \|s_\nu(x) - \hat s(x)\| = |\hat{\alpha} - \alpha| \sup_{x \in \R^d} \| x \| = \infty.\]
On the other hand, the $L^2$ error is bounded: $\error_2^2 = (\hat{\alpha} - \alpha)^2 \E_{\nu} [\|X\|^2] = \frac{(\hat{\alpha} - \alpha)^2 d}{\alpha}.$ The $L^p$ error ($p$-th order moment of $\nu$) is also bounded for all $p < \infty$.
The MGF error is also bounded for $r < \frac{\alpha}{2 (\hat{\alpha} - \alpha)^2}:$
\begin{align*}
\error_\mgf^2 &\coloneqq \error_{\mgf}^2(r, \hat{s}, \nu) = \frac{1}{r} \log \E_\nu[e^{r(\hat{\alpha} - \alpha)^2 \|X\|^2}] = \frac{d}{2r} \log \left(\frac{\alpha}{\alpha - 2r(\hat{\alpha} - \alpha)^2}\right).  
\end{align*}

\end{example}

\paragraph{When score estimator is a score function}
Our results thus far are applicable to any estimator $s$ of the score function $s_\nu$ that satifies the assumptions of being Lipschitz and having bounded MGF (or $L^\infty$) error. It is common for the estimator to actually be the score function of another distribution $\hat{\nu}$ which approximates $\nu$, as in the example above. Consequently, when employing the score estimator $s = s_{\hat \nu}$, ILD for $\nu$ becomes equivalent to running the exact Langevin dynamics for $\hat \nu$. Therefore, we can characterize the performance of ILD by considering the performance of Langevin dynamics for $\hat \nu$.
For example, when $\hat \nu$ satisfies LSI, we know that R\'enyi divergence with respect to $\hat \nu$ converges exponentially fast for both Langevin dynamics and ULA targeting $\hat \nu$; see Appendix~\ref{sec:review}. Combining the exponential convergence rate with the generalized triangle inequality for R\'enyi divergence, we can obtain a biased convergence rate of R\'enyi divergence with respect to $\nu$ for both ILD and ILA. 
The formal statements and proofs of the bounds can be found in Appendix~\ref{sec:renyi-est-scorefxc}, where we also provide a detailed comparison with Theorem~\ref{thm:renyi-max-error-bd}.

\subsection{Application to score-based generative models}
\label{sec:ddpm}
In this section, we generalize our proof technique from ILA to DDPM. To derive a stable convergence bound for DDPM, an additional assumption is required:

\begin{assumption}[MGF error for SGM]
\label{assumption:mgf-sgm} 
For any $k \in [K]$, $\hat{s}_{k\step}$ has a bounded error against the continuous-time true score over the interpolation interval $[0,\step)$:
\[\error_{\mgf}^2 \coloneqq  \sup_{k \in [K]} \sup_{t \in [0,\step)}\frac{1}{r_{k\step - t}} \log \E_{\nu_{k \step - t}}[\exp(r_{k\step - t} \|\hat s_{k \step} - s_{k \step - t}\|^2)] < \infty.\]
\end{assumption}
We now present our convergence result as follows.
\begin{theorem}
\label{thm:conv-ddpm}
Assume the target distribution $\nu$ is $\alpha$-LSI $(\alpha < 1/2)$ and $L$-smooth, and for any $k \in [K]$ the score estimator $\hat{s}_{k \step}$ is $L_s$-Lipschitz and has a bounded MGF error with $r_{k\step - t}=\frac{65}{6\alpha_{k\step - t}}$ where $t \in [0, \step)$ and $\alpha_{k\step - t} = \frac{\alpha}{\alpha + (1-\alpha) e^{-2({k\step - t})}}$.  (Assumptions~\ref{assump:lsi},~\ref{assump:lsmooth},~\ref{assump:smoothscore} and~\ref{assumption:mgf-sgm}).
Let $\rho_k$ be the law of output of Algorithm~\eqref{eq:sgm} at the $k$-th step starting from $\rho_0 = \gamma = \N(0, I_d)$. If $0 < \step \le \frac{1}{96 L_s L}$, then
\[H_{\nu}(\rho_{K}) \lesssim \alpha^{-5/4} e^{-\frac{5Kh}{2}} H_{\nu}(\gamma)+ \left(  \error_{\mgf}^2 +  L_s^2 d\step\right)\log \frac{1}{\alpha}.\]
\end{theorem}
We interpret the bound above in the following manner: The first term arises from the initialization error of the algorithm. Recall that the true backward process should start from $\nu_T$ (the terminal distribution of forward process), but the algorithm starts from $\gamma = \N(0, I_d)$. 
The second term is from the score estimation error which is non-vanishing.
The third term is discretization error which scales with step size $h$. 
Compared to the results in \cite{CCL+22, LLT22b}, our convergence result is stable (the bound above does not diverge as $K \to \infty$). But we note that this stability comes at the cost of requiring stronger assumptions, including LSI and MGF error. Theorem~\ref{thm:conv-ddpm} directly implies the following complexity result.
\begin{corollary}
Suppose the assumptions in Theorem \ref{thm:conv-ddpm} hold. For any $\varepsilon > 0$, if the score estimator $\hat{s}$ along the forward process has MGF error $\error_{\mgf}^2 =O(\error/\log\frac{1}{\alpha})$, then running the DDPM with step size $\step = O\left( \dfrac{\error }{d L_s^2 \log \frac{1}{\alpha}} \right)$ for $K=\Omega\left(\dfrac{d L_s^2\log \frac{1}{\alpha}}{\error} \log \dfrac{ H_{\nu}(\gamma)}{\error \alpha^{5/4}} \right)$ will reach a distribution $\rho_K$ with $H_{\nu}(\rho_{ K}) \le \varepsilon$.
\end{corollary}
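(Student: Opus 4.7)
The plan is a direct three-way term-balancing exercise applied to the stable bound of Theorem~\ref{thm:conv-ddpm},
\[
H_{\nu}(\rho_K) \;\lesssim\; e^{-\frac{9\alpha \step K}{4}} H_\nu(\gamma) \;+\; \frac{\error_\mgf^2}{\alpha} \;+\; \frac{L_s^2 d \step}{\alpha}.
\]
The right-hand side splits cleanly into an initialization term, a score-estimation-error term, and a discretization term. To guarantee $H_{\nu}(\rho_K) \le \varepsilon$, I would force each of the three contributions to be at most a constant fraction of $\varepsilon$, absorbing the implicit constant of $\lesssim$ into that fraction.

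First, the middle term is discharged directly by the hypothesis $\error_\mgf^2 \le \alpha\varepsilon/33$, giving $\error_\mgf^2/\alpha \le \varepsilon/33$, which sits comfortably below $\varepsilon$ once the $\lesssim$ constant is accommodated. Next, I would pick the step size so that the discretization term is $O(\varepsilon)$: solving $L_s^2 d \step/\alpha \lesssim \varepsilon$ yields $\step = O(\varepsilon\alpha/(d L_s^2))$, matching the claimed scaling. For this choice to be admissible one must also respect the hypothesis $\step \le \alpha/(96 L_s L)$ of Theorem~\ref{thm:conv-ddpm}; for sufficiently small $\varepsilon$ the new $\varepsilon$-dependent bound is the binding constraint, and otherwise we simply take the minimum of the two.

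Finally, to kill the initialization term I would impose $e^{-9\alpha \step K/4} H_\nu(\gamma) \le \varepsilon$, which rearranges to $K \ge \frac{4}{9\alpha \step}\log(H_\nu(\gamma)/\varepsilon)$. Substituting $\step = \Theta(\varepsilon\alpha/(d L_s^2))$ then produces the advertised iteration count $K = O\bigl(\tfrac{d L_s^2}{\varepsilon \alpha^2} \log \tfrac{H_\nu(\gamma)}{\varepsilon}\bigr)$. There is no real analytical obstacle here --- the corollary is a mechanical consequence of Theorem~\ref{thm:conv-ddpm} --- so the only work is careful constant bookkeeping: the specific factor $1/33$ in the MGF hypothesis is exactly what lets all three pieces fit inside the $\varepsilon$ budget with slack for the hidden $\lesssim$ constant, and one must verify at the end that the chosen $\step$ remains in the admissible range required by Theorem~\ref{thm:conv-ddpm}.
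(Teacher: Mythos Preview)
Your proposal is correct and is essentially the approach the paper takes: the corollary is stated as a direct consequence of Theorem~\ref{thm:conv-ddpm} with no separate proof, so the intended argument is exactly the three-term balancing you describe. The factor $1/33$ is calibrated to the explicit constant $11/\alpha$ appearing in the proof of Theorem~\ref{thm:conv-ddpm} (so that $\tfrac{11}{\alpha}\cdot\tfrac{\alpha\varepsilon}{33}=\varepsilon/3$), confirming your reading of the constant bookkeeping.
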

We sketch the major steps of the proof of Theorem~\ref{thm:conv-ddpm} here and provide the full proof in Appendix~\ref{sec:pf-conv-ddpm}.
\begin{proof}[Proof sketch of Theorem~\ref{thm:conv-ddpm}]
The proof is via extending the interpolation approach from the static Langevin case to DDPM. Specifically, we compare the evolution of KL divergence along one step of the DDPM with the evolution along the true backward process~\eqref{eq:backwardOU} in continuous time. For simplicity, suppose $k=0$, so we start the backward process at $\tilde{y}_{0} \sim \mu_0$ 
and the DDPM at $y_0 \sim \rho_0$.
Then we can write one step of the DDPM
\[y_{1} = e^{ \step} y_0 + 2(e^{ \step}-1)\hat s_{T_0}(y_0) + \sqrt{e^{2 \step}-1}\, z_0\]
where $T_k = T - kh$
as the output at time $t=h$ of the following SDE
\begin{align}\label{eq:interp-sde}
dY_t = (Y_t + 2 \hat{s}_{T_0}(Y_0)) dt + \sqrt{2} dW_t,\quad t \in [0, \step]
\end{align}
where $W_t$ is the standard Brownian motion in $\mathbb{R}^d$ starting at $W_0 = 0$. 

The key of our proof is the following lemma which bounds contraction of KL divergence along one iteration of DDPM.
\begin{lemma}\label{lemma:one-step-ddpm}
Suppose the assumptions in Theorem \ref{thm:conv-ddpm} hold. Let $\mu_k$ be the distribution at $t=\step k$ along the true backward process~\eqref{eq:backwardOU} starting from $\mu_0 = \nu_T$, so that $\mu_k = \nu_{T-hk}$ and $\mu_K = \nu_0 = \nu$. Let $\rho_k$ be the distribution of output of~\eqref{eq:sgm} at $k$-th step starting from $\rho_0 = N(0, I_d)$. Assume $0 < \step \le \frac{1}{96 L_s L}$.
Then for all $k = 0,1,\dots,K-1$,
\begin{align*}
H_{\mu_{k+1}}(\rho_{k+1}) &\le \left(\frac{\alpha e^{2T_{k+1}} + 1-\alpha}{\alpha e^{2T_k} + 1-\alpha}\right)^{1/4}  H_{\mu_{k}}(\rho_{k})  + \frac{65}{8} \error_{\mgf}^2 h+ \frac{9}{2} L_s (3 + 32 L_s)\, d\step^2.
\end{align*}
\end{lemma}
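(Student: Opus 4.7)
The plan is to adapt the Vempala--Wibisono interpolation argument behind Theorem~\ref{thm:kl-bounded-MGF} to the time-inhomogeneous DDPM backward step. Fix a step index $k$ and write $\tau=T-\step k$, so that the algorithm uses $\hat s_{\tau}$ frozen in space at $Y_0$ while the true backward score at inner time $t\in[0,\step]$ is $s_{\tau-t}$. I would introduce the in-step continuous interpolations $\bar\rho_t$ (law of $Y_t$ along~\eqref{eq:interp-sde} with $Y_0\sim\rho_k$) and $\bar\mu_t$ (law of $\tilde Y_t$ along~\eqref{eq:backwardOU} over $[\step k,\step k+t]$ with $\tilde Y_0\sim\mu_k$); then $\bar\mu_t=\nu_{\tau-t}$, $\bar\mu_0=\mu_k$, $\bar\mu_\step=\mu_{k+1}$, and analogously $\bar\rho_\step=\rho_{k+1}$. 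The drift gap is
\[
\Delta_t(y):=2\bigl(\E[\hat s_{\tau}(Y_0)\mid Y_t=y]-s_{\tau-t}(y)\bigr),
\]
and the standard Fokker--Planck computation for two diffusions sharing the same noise coefficient gives
\[
\frac{d}{dt}H_{\bar\mu_t}(\bar\rho_t)=-J_{\bar\mu_t}(\bar\rho_t)+\E_{\bar\rho_t}\bigl\langle \Delta_t,\nabla\log(\bar\rho_t/\bar\mu_t)\bigr\rangle.
\]
I would split the inner product by Young's inequality to absorb a fraction of $J_{\bar\mu_t}(\bar\rho_t)$, and invoke the $\alpha$-LSI of $\bar\mu_t=\nu_{\tau-t}$ (LSI is preserved along the OU semigroup with target $\gamma=\N(0,\alpha^{-1}I_d)$ starting from the $\alpha$-LSI distribution $\nu$) to convert the remaining Fisher information into KL via $J\ge 2\alpha H$, producing a differential inequality of the form $\frac{d}{dt}H_{\bar\mu_t}(\bar\rho_t)\le -\beta H_{\bar\mu_t}(\bar\rho_t)+C\,\E_{\bar\rho_t}\|\Delta_t\|^2$ with constants to be tuned later.

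Next I would decompose
\[
\hat s_{\tau}(Y_0)-s_{\tau-t}(Y_t)=\underbrace{[\hat s_{\tau}(Y_0)-\hat s_{\tau}(Y_t)]}_{(I)}+\underbrace{[\hat s_{\tau}(Y_t)-s_{\tau}(Y_t)]}_{(II)}+\underbrace{[s_{\tau}(Y_t)-s_{\tau-t}(Y_t)]}_{(III)}
\]
and bound $\E_{\bar\rho_t}\|\cdot\|^2$ of each summand. Term (I) uses Lipschitzness of $\hat s_{\tau}$ together with a short-time moment estimate $\E\|Y_t-Y_0\|^2\lesssim td+t^2(\alpha^2\E\|Y_0\|^2+\E\|\hat s_{\tau}(Y_0)\|^2)$, read off from~\eqref{eq:interp-sde} via It\^o's isometry and Gronwall, with $\E\|Y_0\|^2$ controlled through $H_{\mu_k}(\rho_k)$ and the Gaussian moments of $\mu_k$. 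Term (III) is the temporal regularity of the true score along the OU flow: the representation $e^{-\alpha t}X+\sqrt{\alpha^{-1}(1-e^{-2\alpha t})}\,Z\sim\nu_t$ for $X\sim\nu$ and an independent $Z\sim\N(0,I_d)$, combined with $L$-smoothness of $\nu$, yields $\|s_{\tau}(y)-s_{\tau-t}(y)\|=O(Lt(1+\|y\|))$, whose $\bar\rho_t$-expectation reduces to the same second-moment bound. Integrating either contribution over $[0,\step]$ produces an $O(d\step^2)$ term, which is what accounts for $18L_s(3\alpha+32L_s)\,d\step^2$ after assembly.

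For the non-vanishing term (II) I would apply the Donsker--Varadhan variational inequality: for any $Z\ge 0$ and any measures $\rho,\eta$,
\[
\E_\rho[Z]\le \tfrac1r\bigl(H_{\eta}(\rho)+\log\E_{\eta}[e^{rZ}]\bigr).
\]
Taking $\eta=\nu_{\tau}$ and $Z=\|\hat s_{\tau}-s_{\tau}\|^2$ makes the second term at most $r\,\error_{\mgf}^2$ by Assumption~\ref{bdd-mgf-assump}, giving $\E_{\bar\rho_t}\|\hat s_{\tau}-s_{\tau}\|^2\le H_{\nu_{\tau}}(\bar\rho_t)/r+\error_{\mgf}^2$. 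Comparing $H_{\nu_{\tau}}(\bar\rho_t)$ to $H_{\bar\mu_t}(\bar\rho_t)=H_{\nu_{\tau-t}}(\bar\rho_t)$ costs only $O(\step)$ lower-order terms, controllable via the OU regularity of $t\mapsto\nu_{\tau-t}$, so that after calibrating the Young parameter together with the MGF order $r=65/(6\alpha)$, the $H/r$ piece is dominated by $\tfrac{\alpha}{4}H_{\bar\mu_t}(\bar\rho_t)$ and leaves a net contraction rate $\alpha/4$. Gronwall on the resulting linear ODE over $[0,\step]$ then produces $e^{-\alpha\step/4}H_{\mu_k}(\rho_k)$ plus the stated bias terms, with $\tfrac{65}{32}\step\,\error_{\mgf}^2$ emerging from $\int_0^\step$ of the constant in front of $\error_{\mgf}^2$ after the tuning.

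The main obstacle I anticipate is precisely this calibration in (II): the MGF assumption is stated against $\nu_{\tau}$ while the natural reference along the interpolation is $\nu_{\tau-t}$, and the bookkeeping that (a) passes between the two at $O(\step)$ cost without inflating the bias, and (b) simultaneously chooses the Young and change-of-measure constants so that the contraction rate is exactly $\alpha/4$ with the specific numerical constants $18L_s(3\alpha+32L_s)$ and $65/32$, requires careful tuning. A secondary technical point is controlling $\E\|Y_0\|^2$ uniformly in the step index $k$, which I would handle by a short induction that couples the desired KL bound with a moment bound via $L$-smoothness and the $\alpha$-LSI of $\mu_k$.
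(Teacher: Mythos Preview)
Your overall strategy coincides with the paper's: interpolate one backward step via the SDE~\eqref{eq:interp-sde}, differentiate $H_{\bar\mu_t}(\bar\rho_t)$ from the two Fokker--Planck equations (Lemma~\ref{lemma:continu-eq-bou-score}), apply Young's inequality to absorb part of the Fisher information, convert the remainder to KL via the $\alpha$-LSI of $\bar\mu_t=\nu_{\tau-t}$ (this preservation is exactly Lemma~\ref{lemma:lsi-ou} with $\beta=\alpha$), and then Gronwall. The paper packages the differential inequality as Lemma~\ref{lemma:deriv-kl-ddpm} and the one-step displacement bound as Lemma~\ref{lemma:score-lipschitz}.

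The one real difference is your three-term decomposition of the drift gap. The paper keeps only two pieces, $[\hat s_\tau(y_0)-\hat s_\tau(y_t)]+[\hat s_\tau(y_t)-s_{\tau-t}(y_t)]$, and applies Donsker--Varadhan to the \emph{second} piece with reference $\bar\mu_t=\nu_{\tau-t}$ rather than $\nu_\tau$. This buys two simplifications at once: the KL term coming out of DV is exactly $H_{\bar\mu_t}(\bar\rho_t)$, so the comparison you flagged between $H_{\nu_\tau}(\bar\rho_t)$ and $H_{\nu_{\tau-t}}(\bar\rho_t)$ never arises; and no separate temporal-regularity estimate~(III) for the true score is needed. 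The constants $\tfrac{65}{32}$ and $18L_s(3\alpha+32L_s)$ then drop out from a single Young parameter tuned against $r=\tfrac{65}{6\alpha}$. Your three-term route would also go through, but it is the harder way around and is precisely what makes the calibration you worry about delicate.

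Your concern about controlling $\E\|Y_0\|^2$ at step $k>0$ is well founded. Lemma~\ref{lemma:score-lipschitz} produces a $\|y_0\|^2$ term, and the paper's proof of Lemma~\ref{lemma:deriv-kl-ddpm} evaluates its expectation as $d/\alpha$---strictly valid only for $y_0\sim\gamma$---before the renaming $\rho_0\mapsto\rho_k$ in Lemma~\ref{lemma:one-step-ddpm}. Your proposed fixes (a change of measure from $\rho_k$ to $\mu_k$ using the sub-Gaussianity of $\mu_k$ to feed an extra $H_{\mu_k}(\rho_k)$ term back into the contraction, or a coupled moment/KL induction) are the right kind of patch and cost only a small adjustment in constants.
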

We provide the proof of Lemma \ref{lemma:one-step-ddpm} in Appendix \ref{sec:pf-lemma-one-step-ddpm}. Recursively applying Lemma \ref{lemma:one-step-ddpm} will give us the contraction of KL divergence starting from $H_{\mu_0}(\rho_0) = H_{\nu_T} (\gamma)$. Recall that $\nu_T$ is the distribution at time $T$ along the OU process with target distribution $\gamma$, and that the OU process converges exponentially fast. Note that here we are measuring KL divergence with respect to $\nu_T$ (instead of $\gamma$), but the convergence still holds, see the following lemma.
\begin{lemma}\label{lem:conv-ou}
Assume $\nu$ is $\alpha$-LSI ($\alpha > 0$).
Let $X_0 \sim \nu_0 = \nu$ and $X_t \sim \nu_t$ evolve along the OU process targeting $\gamma = \N(0, \alpha^{-1}I_d)$.
Then at any $T \ge 0$:
    \[H_{\nu_T} (\gamma) \le \frac{H_{\nu_0}(\gamma)}{\alpha e^{2T} + 1-\alpha}. \]
\end{lemma}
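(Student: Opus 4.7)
The plan is to follow the standard entropy-dissipation strategy: identify $\frac{d}{dt} H_{\nu_t}(\gamma)$ as minus a relative Fisher information, then use a log-Sobolev inequality to close a Gronwall estimate. What is slightly unusual here is that $\nu_t$ sits in the denominator of the log, so we will need LSI of the moving measure $\nu_t$ rather than of the fixed target $\gamma$.

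First I would observe that because $\gamma$ is invariant under OU, the OU SDE is precisely the Langevin dynamics targeting $\gamma$, so $\nu_t$ obeys the Fokker-Planck equation $\partial_t \nu_t = \nabla \cdot (\nu_t \nabla \log(\nu_t/\gamma))$. Substituting this into $H_{\nu_t}(\gamma) = \int \gamma \log(\gamma/\nu_t)\,dx$, differentiating in $t$, and integrating by parts (boundary terms vanishing by decay at infinity) yields the identity
\[\frac{d}{dt} H_{\nu_t}(\gamma) = -\int \gamma \,\|\nabla \log(\nu_t/\gamma)\|^2\,dx = -J_{\nu_t}(\gamma),\]
matching the paper's definition of relative Fisher information with $\gamma$ integrated in the outer measure.

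Next I would argue that $\nu_t$ inherits $\alpha$-LSI from $\nu$ for every $t \ge 0$. Writing the OU solution explicitly as $X_t = e^{-\alpha t} X_0 + \sqrt{(1-e^{-2\alpha t})/\alpha}\,Z$ with $X_0 \sim \nu$ and $Z \sim \N(0, I_d)$ independent, $\nu_t$ is the convolution of the scaled law $(e^{-\alpha t})_\# \nu$, which is $\alpha e^{2\alpha t}$-LSI by the usual scaling rule for LSI constants, with the Gaussian $\N(0, (1-e^{-2\alpha t})\alpha^{-1} I_d)$, whose LSI constant is $\alpha/(1-e^{-2\alpha t})$. Since LSI constants of independent sums combine harmonically, $\nu_t$ is $\beta$-LSI with $1/\beta = e^{-2\alpha t}/\alpha + (1-e^{-2\alpha t})/\alpha = 1/\alpha$, hence $\beta = \alpha$. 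Equivalently, one can carry out a direct Bakry--\'Emery computation using the gradient-commutation identity $\nabla P_t f = e^{-\alpha t} P_t \nabla f$ for the OU semigroup together with the Gaussian LSI applied to the noise $Z$.

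Applying the LSI of $\nu_t$ to the test distribution $\gamma$ gives $J_{\nu_t}(\gamma) \ge 2\alpha H_{\nu_t}(\gamma)$, so substituting into the derivative identity produces the differential inequality $\frac{d}{dt} H_{\nu_t}(\gamma) \le -2\alpha H_{\nu_t}(\gamma)$, and Gronwall's inequality delivers $H_{\nu_T}(\gamma) \le e^{-2\alpha T} H_\nu(\gamma)$, as desired. The main obstacle is the LSI-preservation step: propagating $\alpha$-LSI from $\nu$ to $\nu_t$ with exactly the constant $\alpha$ (rather than some degraded constant) is the delicate ingredient, relying either on the classical harmonic-sum combination of LSI under Gaussian convolution or on the equivalent semigroup computation sketched above; once this is in hand, the derivative identity and Gronwall step are routine.
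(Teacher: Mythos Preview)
Your proposal is correct and follows essentially the same route as the paper: compute $\frac{d}{dt}H_{\nu_t}(\gamma)=-J_{\nu_t}(\gamma)$ via the Fokker--Planck equation, show $\nu_t$ remains $\alpha$-LSI by writing $X_t=e^{-\alpha t}X_0+\sqrt{(1-e^{-2\alpha t})/\alpha}\,Z$ and combining the scaled-LSI and Gaussian-convolution rules harmonically, then apply LSI and Gronwall. The paper isolates the LSI-preservation step as a separate lemma (their Lemma~\ref{lemma:lsi-ou}, citing \cite{VW19} for the scaling and convolution facts), but the argument is identical to what you wrote.
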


We provide the proof of Lemma \ref{lem:conv-ou} in Appendix \ref{sec:pf-lsi-ou}. Theorem \ref{thm:conv-ddpm} then follows from Lemma \ref{lemma:one-step-ddpm} and Lemma \ref{lem:conv-ou}; see complete proof in Appendix \ref{sec:pf-conv-ddpm2}. 
\end{proof}

\section{KDE-based score estimator}
\label{sec:kde-estimator}

A crucial ingredient in the SGM results~\cite{CCL+22, LLT22b}, as well as in our work, is a good score estimator that satisfies the required error assumption.
In practice, methods such as score matching~\cite{HD05, H07}~ are popular and have been successful in producing impressive empirical results, although the theoretical guarantees are still being developed.
There are classical methods such as Maximum Likelihood Estimator (MLE) that can also be used~\cite{KHR2022}.

Here we study a simple score estimator based on Kernel Density Estimation (KDE), in particular with Gaussian kernel. Given i.i.d.\ samples $X_1,\dots, X_n$ from the unknown measure $\rho$, we consider the following kernel density estimator with  bandwidth $\eta > 0$
\[\hat \rho_\eta = \frac{1}{n} \sum_{i=1}^n \N(X_i, \eta I_d).\] 
Then we estimate $s = \nabla \log \rho$ by the score of this KDE:
\begin{align}
\label{eq:kde-score-data}
\hat s_\eta(y) = \nabla \log \hat \rho_\eta (y)= \frac{\sum_{i=1}^n (X_i-y) e^{-\frac{\|y-X_i\|^2}{2\eta}}}{\eta \sum_{i=1}^n e^{-\frac{\|y-X_i\|^2}{2\eta}}}.
\end{align}

We analyze the performance of this estimator in the population level ($n\to \infty$).
This means we estimate the score $s = \nabla \log \rho$ by 
$\hat s_\eta = \nabla \log \rho_\eta$
where $\rho_\eta = \rho \ast \N(0,\eta I_d)$.
We show that when the data distribution $\rho$ is sub-Gaussian, the KDE score estimator with sufficiently small $\eta$ satisfies the MGF error assumption (and thus also satisfies the $L^2$ error assumption). 
Recall we say that a probability distribution $\rho$ is $\alpha$-sub-Gaussian for some $0 \le \alpha < \infty$ if $\E[\exp(v^\top (X - \E X))] \le \exp(\frac{\alpha^2 \|v\|^2}{2})$ for all $v \in \R^d$.
\begin{lemma}
\label{lem-subgau-kde}
For $\eta \ge 0$. Let $\rho_\eta = \rho \ast \N(0, \eta I_d)$ and
$s_\eta = \nabla \log \rho_\eta$.
Assume $\rho$ is $L$-smooth and $\sigma$-sub-Gaussian for some $0 < \sigma < \infty$.
For all $r > 0$ and for all $0 \le \eta \le \min\{\frac{d}{\|s(0)\|^2}, \frac{1}{2\sqrt{2} \sigma \sqrt{r} L^2}\}$, 
\[\error_{\mgf}^2 = \frac{1}{r}\log \E_\rho[e^{r\|s_\eta-s\|^2}] \lesssim \eta L^2 (d + \sigma^2\eta L^2)\]
where $\lesssim$ hides absolute constant.
\end{lemma}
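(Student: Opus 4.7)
The plan is to derive the pointwise bound
\[\|s_\eta(y) - s(y)\|^2 \lesssim L^2 \eta d + L^2 \eta^2 \|s(0)\|^2 + L^4 \eta^2 \|y\|^2\]
and then take the MGF under $\rho$, using sub-Gaussianity only to handle the $\|y\|^2$ term.

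The representation I would use is Tweedie's formula: set $X \sim \rho$ and $Z \sim \N(0, I_d)$ independent with $Y = X + \sqrt{\eta}\, Z$, so that $Y \sim \rho_\eta$. Differentiating $\rho_\eta(y) = \int \rho(x)\, (2\pi\eta)^{-d/2} e^{-\|y-x\|^2/(2\eta)} dx$ under the integral yields both $s_\eta(y) = \E[s(X) \mid Y=y]$ and $\E[X \mid Y=y] = y + \eta\, s_\eta(y)$. Jensen applied to the first representation together with the $L$-Lipschitzness of $s$ then gives
\[\|s_\eta(y) - s(y)\|^2 \le L^2\, \E[\|X - y\|^2 \mid Y=y] = L^2 \bigl(\eta^2 \|s_\eta(y)\|^2 + \Tr \Cov[X \mid Y=y]\bigr),\]
where the equality is the bias--variance decomposition with Tweedie supplying the bias. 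To bound the covariance, I would note that the posterior density $p(x|y) \propto \rho(x) \exp(-\|y-x\|^2/(2\eta))$ has $\nabla_x^2 \log p(x|y) \preceq (L - \eta^{-1}) I$ by $L$-smoothness, so it is $(\eta^{-1} - L)$-strongly log-concave once $\eta L < 1$. Brascamp--Lieb then gives $\Cov[X | Y=y] \preceq \frac{\eta}{1 - \eta L} I$. Combining with the triangle inequality $\|s_\eta\|^2 \le 2\|s\|^2 + 2\|s_\eta - s\|^2$ (absorbing the $\|s_\eta - s\|^2$ piece into the LHS for $\eta L$ small) and $\|s(y)\| \le \|s(0)\| + L\|y\|$ finishes the pointwise bound.

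For the MGF step, I would take $\E_\rho[\exp(r\,\cdot)]$ of the pointwise estimate; the pointwise bound is an identity in $y$, so it does not matter that $y$ is drawn from $\rho$ rather than $\rho_\eta$. The two deterministic contributions exponentiate to constants, and the $r L^4 \eta^2 \|y\|^2$ term is handled by the standard sub-Gaussian estimate $\log \E_\rho[\exp(\lambda \|y\|^2)] \lesssim \lambda \sigma^2 d$, valid whenever $\lambda \sigma^2$ lies below a universal constant. The bandwidth restriction $\eta \le 1/(2\sqrt{2}\, \sigma \sqrt{r}\, L^2)$ is exactly what puts $\lambda = r L^4 \eta^2$ in the safe regime, and $\eta \le d/\|s(0)\|^2$ is exactly what converts the deterministic $L^2 \eta^2 \|s(0)\|^2$ into a multiple of $L^2 \eta d$. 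Dividing by $r$ yields the stated $\eta L^2 (d + \sigma^2 \eta L^2)$.

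The main obstacle is producing the $\Theta(\eta)$ covariance bound in Step 2 without assuming any log-concavity of $\rho$. The crucial observation is that the Gaussian likelihood factor in the posterior overwhelms the possibly nonconvex log-prior once $\eta L < 1$, making $p(\cdot | y)$ strongly log-concave and bringing Brascamp--Lieb into play. Without this trick one would need a more delicate tail or concentration argument for $\Cov[X|Y=y]$, and the sub-Gaussianity of $\rho$ alone would not obviously be enough.
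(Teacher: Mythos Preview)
Your proposal is correct and follows the same two-step structure as the paper: first establish the pointwise estimate $\|s_\eta(y)-s(y)\|^2 \lesssim L^2\eta d + L^4\eta^2\|y\|^2$ (after absorbing the $\|s(0)\|$ term via $\eta \le d/\|s(0)\|^2$), then exponentiate and invoke sub-Gaussianity of $\rho$ to control $\E_\rho[e^{cL^4\eta^2\|y\|^2}]$ under the bandwidth restriction.

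The only difference is packaging: the paper obtains the pointwise bound by quoting \cite[Lemma~13]{CCL+22} as a black box (which states $\|s_\eta(x)-s(x)\| \lesssim L\sqrt{\eta d} + L\eta\|s(x)\|$), whereas you reprove that lemma from scratch via Tweedie's formula, the bias--variance split of $\E[\|X-y\|^2\mid Y=y]$, and the Brascamp--Lieb covariance bound for the $(\eta^{-1}-L)$-strongly-log-concave posterior. Your derivation is exactly what underlies the cited lemma, so the arguments are substantively identical; yours is simply self-contained. (Incidentally, the same posterior-covariance computation appears elsewhere in the paper, in the proof that smoothness is preserved along the heat flow.) One cosmetic point: your sub-Gaussian MGF estimate $\log\E_\rho[e^{\lambda\|y\|^2}]\lesssim \lambda\sigma^2 d$ carries an extra factor of $d$ relative to the paper's stated bound $\eta L^2(d+\sigma^2\eta L^2)$, but this does not affect the argument's validity, only the constant in the $\lesssim$.
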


The proof of Lemma \ref{lem-subgau-kde} is in Appendix \ref{sec:pf-subgau-kde}. 
Recall that $\alpha$-LSI ($\alpha > 0$) implies $\frac{1}{\alpha}$-sub-Gaussian \cite{Ledoux99}.
It follows that the KDE score estimator with a sufficiently small bandwidth has a small MGF error for LSI target distribution. 
Leveraging Theorem~\ref{thm:kl-bounded-MGF}, we obtain the following complexity result of ILA with a KDE-based score estimator.

\begin{corollary}
\label{thm:cmplx}
Assume the target distribution $\nu$ is $\alpha$-LSI ($\alpha > 0$) and $L$-smooth. For any $\varepsilon > 0$, suppose we estimate the score function $s_{\nu}$ by $\hat s = \nabla \log \nu *\N(0, \eta I_d)$ using Gaussian kernel (at the population level) with bandwidth $\,\eta= O\left( \dfrac{\error \alpha}{dL^2}\right)$. Then running ILA~\eqref{Eq:ILA} with score estimator $\hat s$ and step size $h=O\left(\dfrac{\error \alpha}{dL^2 }\right)$ for at least $k = O\left(\dfrac{dL^2}{\error \alpha^2}\log \dfrac{H_{\nu}(\rho_0)}{\error}\right)$ iterations reaches $H_{\nu}(\rho_{k}) \leq \error$.
\end{corollary}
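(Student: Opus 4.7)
The plan is to combine the MGF-error guarantee for the KDE-based score from Lemma~\ref{lem-subgau-kde} with the biased KL-convergence rate of ILA from Theorem~\ref{thm:kl-bounded-MGF}, and then balance the three resulting terms against $\error$.

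First I would verify the hypotheses of Lemma~\ref{lem-subgau-kde}. The target $\nu$ is $L$-smooth by assumption, and since $\nu$ is $\alpha$-LSI, the Herbst argument (cited in the paper via \cite{Ledoux99}) gives that $\nu$ is $\sigma$-sub-Gaussian with $\sigma^2 = 1/\alpha$. Theorem~\ref{thm:kl-bounded-MGF} requires the MGF at order $r = 9/\alpha$, so I apply Lemma~\ref{lem-subgau-kde} with this choice of $r$. The admissibility condition $\eta \le \min(d/\|s_\nu(0)\|^2,\, 1/(2\sqrt{2}\sigma\sqrt{r}\,L^2))$ reduces to $\eta \lesssim \alpha^{3/2}/L^2$, which is automatic once the hidden constant in $\eta = O(\error\alpha/(dL^2))$ is small enough. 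This yields
\[
\error_{\mgf}^2 \;\lesssim\; \eta L^2\!\left(d + \eta L^2/\alpha\right),
\]
and in the chosen parameter regime the $\eta^2 L^4 d/\alpha$ piece is dominated by the leading $\eta L^2 d$ term. One also needs to note that $\hat s = \nabla\log(\nu * \N(0,\eta I_d))$ is Lipschitz: Gaussian convolution preserves semi-concavity of $-\log\nu$, so $\hat s$ has some finite Lipschitz constant $L_s$ that is polynomial in the problem parameters and will be absorbed into the $O(\cdot)$ notation.

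Next I would plug into Theorem~\ref{thm:kl-bounded-MGF} to obtain
\[
H_\nu(\rho_k) \;\le\; e^{-\alpha \step k/4}\,H_\nu(\rho_0) \;+\; C_1\, d\,\step \;+\; C_2\,\error_{\mgf}^2 \;\lesssim\; e^{-\alpha \step k/4}\,H_\nu(\rho_0) \;+\; \frac{L_s(L_s+L)\,d\step}{\alpha} \;+\; \frac{\eta L^2 d}{\alpha},
\]
and then force each of the three summands to be at most $\error/3$. The MGF term gives $\eta = O(\error\alpha/(dL^2))$; the discretization term gives $\step = O(\error\alpha/(dL^2))$; and the initialization term gives $k \ge \frac{4}{\alpha \step}\log(H_\nu(\rho_0)/\error) = O\!\left(\frac{dL^2}{\error\alpha^2}\log\frac{H_\nu(\rho_0)}{\error}\right)$, matching the statement. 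I would finally double-check that the step-size constraint $0 < \step < \min(\alpha/(12 L_s L),\,1/(2\alpha))$ in Theorem~\ref{thm:kl-bounded-MGF} is satisfied for the stated $\step$ once $\error$ is sufficiently small.

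The only mildly delicate point, and the step I expect to need the most care, is controlling the Lipschitz constant $L_s$ of the smoothed score $\hat s$ in the relevant regime so that the coefficient $C_1 = 128 L_s(L_s+L)/\alpha$ does not worsen the claimed scaling of $k$. Everything else is a routine combination of the two cited results and a term-by-term balancing in $\eta$, $\step$, and $k$.
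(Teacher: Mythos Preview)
Your proposal is correct and follows exactly the route the paper intends: combine Lemma~\ref{lem-subgau-kde} (applied with $\sigma$-sub-Gaussianity inherited from $\alpha$-LSI and $r=9/\alpha$) with Theorem~\ref{thm:kl-bounded-MGF}, then balance the exponential, discretization, and score-error terms. The paper does not spell out a separate proof of the corollary beyond the sentence ``Leveraging Theorem~\ref{thm:kl-bounded-MGF}, we arrive the following complexity result,'' so your write-up is in fact more detailed than what appears there.

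Regarding the point you flagged as delicate---the Lipschitz constant $L_s$ of $\hat s=\nabla\log(\nu*\N(0,\eta I_d))$---the paper supplies exactly the tool you need: Lemma~\ref{lemma:heat-flow-smooth} in the appendix shows that if $-\log\nu$ is $L$-smooth then $\nu*\N(0,\eta I_d)$ is $2L$-smooth for all $0\le\eta\le 1/(2L)$. Hence $L_s\le 2L$ in the regime $\eta=O(\error\alpha/(dL^2))$, and the coefficient $C_1=128L_s(L_s+L)/\alpha$ becomes $O(L^2/\alpha)$, which is precisely what makes the choice $h=O(\error\alpha/(dL^2))$ work. Once you cite that lemma, the step you were worried about is routine.
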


An interesting question is to investigate the finite-sample error of score estimation. This question was explored in \cite{CHZ+2023}, where neural networks were employed to study the subject. The authors derived a finite-sample bound by carefully selecting a network architecture and tuning the parameters. The obtained results require the target distribution being sub-Gaussian and the data lying in a low dimensional linear subspace.
More recently,~\cite{wibisono2024optimal} studied this question for a more general data distribution, only requiring sub-Gaussianity and Lipschitz score, and derived a finite sample error bound for a KDE-based score estimator.


\section{Conclusion and discussion}
In this paper, we derived stable convergence guarantees of ILD and ILA in KL divergence for LSI target distribution under the assumption that the score estimator error has a bounded MGF error (Assumption \ref{bdd-mgf-assump}). 
The MGF error is weaker than the $L^\infty$ error assumption (which may be too strong to hold in practice) yet stronger than the $L^2$ error bound assumption which has been used in recent work for SGM, albeit resulting in unstable convergence bounds. Under the $L^\infty$ error assumption, we derived a stable convergence result in R\'enyi divergence for ILA. We also generalized the proof technique for ILA to the setting of SGM and obtained a convergence bound for SGM in KL divergence for LSI target under the MGF error assumption.
A feature of our result is that our convergence bounds are stable: the upper bound remains finite as $T \to \infty$, and thus gives an estimate on the asymptotic bias. We also demonstrated that a simple KDE-based score estimator satisfies the MGF error assumption at the population level for sub-Gaussian target.

This work has the following limitations. Firstly, our convergence results in KL divergence are established under the MGF score error assumption. However, it remains an open question whether it is feasible to prove an analogous stable convergence bound under the weaker assumption of $L^2$ score error. It would be valuable to investigate if such a result can be derived, or if there exists a counterexample demonstrating its impossibility.
Furthermore, we have not addressed the finite-sample MGF bound for score estimation, which presents an exciting avenue for further research. In another future direction, our proof technique can be extended to examine the convergence guarantees of other sampling algorithms that utilize inexact score functions, including the underdamped Langevin dynamics. By applying similar analysis techniques, we can explore the convergence properties and establish theoretical guarantees for these algorithms as well. Additionally, it would be interesting to study the stability of sampling algorithms when we change the target distribution or its score function.


\bibliographystyle{amsalpha}
\bibliography{ref}

\appendix
\section{Review on convergence results for Langevin dynamics and ULA}
\label{sec:review}
In this section, we review some convergence results for Langevin dynamics and ULA under LSI case, following~\cite{OV2000, VW19, CEL+22}. Convergence results are also available under weaker assumptions such as Poincar\'e inequality (PI), but PI is not in the scope of this paper so we skip results under PI.

We first recall that under LSI assumption, KL divergence decreases exponentially fast to 0 along the Langevin dynamics.
\begin{fact}[Convergence of KL divergence for Langevin dynamics]
Suppose $\nu$ satisfies LSI with constant $\alpha > 0$. Along the Langevin dynamics~\eqref{Eq:LD},
\[H_{\nu}(\rho_t) \le e^{-2\alpha t} H_{\nu}(\rho_0).\]
\end{fact}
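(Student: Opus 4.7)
The plan is to run the standard Bakry--Emery/Otto--Villani energy dissipation argument: differentiate $H_\nu(\rho_t)$ in time along the Langevin flow, obtain the de Bruijn identity relating the time derivative to the relative Fisher information $J_\nu(\rho_t)$, apply the log-Sobolev inequality to turn this dissipation inequality into a linear differential inequality in $H_\nu(\rho_t)$, and then conclude by Grönwall.

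More concretely, I would first write down the Fokker--Planck equation associated with the Langevin SDE~\eqref{Eq:LD}. Since the drift is $s_\nu = \nabla \log \nu$ with additive noise of variance $2$, the density $\rho_t$ satisfies
\[
\partial_t \rho_t \;=\; \nabla \cdot \Bigl(\rho_t \, \nabla \log \tfrac{\rho_t}{\nu}\Bigr),
\]
which is the Wasserstein gradient flow of $H_\nu$. Then I would differentiate under the integral sign,
\[
\frac{d}{dt} H_\nu(\rho_t) \;=\; \int \partial_t \rho_t \cdot \bigl(1 + \log \tfrac{\rho_t}{\nu}\bigr)\, dx,
\]
substitute the Fokker--Planck expression, and integrate by parts (assuming enough regularity/decay at infinity to drop boundary terms, which is standard under the smoothness assumptions in the paper). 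The $1$-term drops by conservation of mass and one obtains the classical identity $\frac{d}{dt} H_\nu(\rho_t) = -J_\nu(\rho_t)$.

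Next I would invoke Assumption~\ref{assump:lsi} applied to the law $\rho_t$: $H_\nu(\rho_t) \le \frac{1}{2\alpha} J_\nu(\rho_t)$. Plugging this into the energy identity yields
\[
\frac{d}{dt} H_\nu(\rho_t) \;\le\; -2\alpha \, H_\nu(\rho_t).
\]
Applying Grönwall's inequality then gives the claimed bound $H_\nu(\rho_t) \le e^{-2\alpha t} H_\nu(\rho_0)$.

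There is no serious obstacle here; this is the textbook computation. The only delicate point is justifying the integration by parts and the interchange of differentiation with integration, which is standard provided $\rho_t$ is smooth with sufficient decay so that the boundary terms vanish; under Assumption~\ref{assump:lsmooth} and standard Langevin regularity theory this is routine, and could alternatively be handled by a mollification argument or by working with smooth compactly supported test functions and passing to the limit.
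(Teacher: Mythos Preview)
Your proposal is correct and is precisely the classical Otto--Villani/Bakry--\'Emery argument. The paper does not actually prove this statement---it records it as a known Fact citing \cite{OV2000}---so there is no paper proof to compare against; your derivation is the standard one and would be accepted as-is.
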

R\'enyi divergence also converges exponentially fast along the Langevin dynamics. Note that when $q\to1$ (R\'enyi divergence which is defined via a limit recovers KL divergence), the following result recovers the exponential convergence of KL divergence.
\begin{fact}[Convergence of R\'enyi divergence for Langevin dynamics]
\label{fact:renyi-langevin-conti}
Suppose $\nu$ satisfies LSI with constant $\alpha > 0$. Let $q \ge 1$. Along the Langevin dynamics~\eqref{Eq:LD},
\[R_{q, \nu}(\rho_t) \le e^{-\frac{2\alpha t}{q}} R_{q, \nu}(\rho_0).\]
\end{fact}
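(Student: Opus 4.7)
The plan is to adapt the standard differential-inequality argument of~\cite{VW19} to R\'enyi divergence along the Langevin dynamics. Set $g_t = \rho_t/\nu$ and $F_q(t) = F_{q,\nu}(\rho_t) = \E_\nu[g_t^q]$, so that $R_{q,\nu}(\rho_t) = \tfrac{1}{q-1}\log F_q(t)$. Since $\nu \propto e^{-f}$, the Fokker--Planck equation $\partial_t \rho_t = \nabla\cdot(\rho_t \nabla\log(\rho_t/\nu))$ transcribes into $\partial_t g_t = \Delta g_t - \nabla f \cdot \nabla g_t = \mathcal{L} g_t$, where $\mathcal{L}$ is the Langevin generator acting on functions.

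Next, I would differentiate $F_q(t)$ in time. Using $\frac{d}{dt} F_q = q\,\E_\nu[g_t^{q-1}\, \mathcal{L} g_t]$ together with the integration-by-parts identity $\E_\nu[u\,\mathcal{L} v] = -\E_\nu[\nabla u \cdot \nabla v]$, a short calculation yields
\[ \frac{d}{dt} F_q(t) = -\frac{4(q-1)}{q}\,\E_\nu\bigl[\|\nabla g_t^{q/2}\|^2\bigr], \]
and dividing by $(q-1)F_q$ gives $\frac{d}{dt} R_{q,\nu}(\rho_t) = -\frac{4}{qF_q}\,\E_\nu[\|\nabla g_t^{q/2}\|^2]$. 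Plugging in Assumption~\ref{assump:lsi} applied to the test function $h=g_t^{q/2}$, namely $\mathrm{Ent}_\nu(g_t^q) \le \tfrac{2}{\alpha}\,\E_\nu[\|\nabla g_t^{q/2}\|^2]$, one obtains
\[ \frac{d}{dt} R_{q,\nu}(\rho_t) \,\le\, -\frac{2\alpha}{q F_q}\,\mathrm{Ent}_\nu(g_t^q). \]

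The main obstacle --- and the step that produces the exponent $-2\alpha/q$ rather than $-2\alpha$ --- is to lower-bound $\mathrm{Ent}_\nu(g_t^q)$ by $F_q\cdot R_{q,\nu}(\rho_t) = \tfrac{F_q \log F_q}{q-1}$. My plan is to rewrite
\[ q\,\E_\nu[g_t^q \log g_t] \,=\, \frac{q}{q-1}\,\E_{\rho_t}\!\bigl[g_t^{q-1}\log g_t^{q-1}\bigr] \]
and then apply Jensen's inequality to the convex function $u\mapsto u\log u$ under the measure $\rho_t$, using the identity $\E_{\rho_t}[g_t^{q-1}] = \E_\nu[g_t^q] = F_q$. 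This yields $\E_{\rho_t}[g_t^{q-1}\log g_t^{q-1}] \ge F_q\log F_q$, and a simple algebraic rearrangement then gives $\mathrm{Ent}_\nu(g_t^q) \ge F_q R_{q,\nu}(\rho_t)$; note that all quantities are nonnegative because $F_q \ge 1$ (by Jensen applied to the convex function $x\mapsto x^q$ and the fact that $\E_\nu[g_t]=1$).

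Combining the two displays gives the closed differential inequality $\frac{d}{dt} R_{q,\nu}(\rho_t) \le -\frac{2\alpha}{q}\, R_{q,\nu}(\rho_t)$, and Gronwall's lemma then yields the stated exponential decay $R_{q,\nu}(\rho_t)\le e^{-2\alpha t/q}\,R_{q,\nu}(\rho_0)$. Sending $q\to 1$ recovers the KL-divergence convergence stated just above, providing a sanity check on the prefactor.
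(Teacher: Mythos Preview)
Your proof is correct. The paper does not actually prove this statement; it is recorded in Appendix~\ref{sec:review} as a review fact citing~\cite{VW19}, with no accompanying argument. Your derivation is precisely the standard one from that reference: the time derivative $\tfrac{d}{dt}R_{q,\nu}(\rho_t) = -\tfrac{4}{qF_q}\E_\nu[\|\nabla g_t^{q/2}\|^2]$ is exactly $-q\,G_{q,\nu}(\rho_t)/F_{q,\nu}(\rho_t)$ in the paper's notation (cf.\ the definition of R\'enyi information in Appendix~\ref{sec:pf-renyi-max-error-bd}), and your LSI-plus-Jensen step $\mathrm{Ent}_\nu(g_t^q)\ge F_q R_{q,\nu}(\rho_t)$ is the content of~\cite[Lemma~5]{VW19}, which the paper invokes explicitly elsewhere (e.g.\ in the proof of Lemma~\ref{lemma-one-step-renyi}).

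Two small remarks. First, Assumption~\ref{assump:lsi} in the paper is stated in the $H_\nu(\rho)\le\tfrac{1}{2\alpha}J_\nu(\rho)$ form, not the functional entropy form $\mathrm{Ent}_\nu(h^2)\le\tfrac{2}{\alpha}\E_\nu[\|\nabla h\|^2]$ you invoke; the equivalence is standard but worth a one-line mention. Second, your Jensen argument is written for $q>1$ (the factor $\tfrac{1}{q-1}$ and the direction of convexity both rely on this); the case $q=1$ is the KL case you cite as a sanity check, so the statement for all $q\ge1$ follows by continuity.
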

However, in discrete time, ULA is an inexact discretization of the Langevin dynamics. When the target distribution $\nu$ is LSI and log-smooth, ULA converges exponentially fast to a biased limit.
\begin{fact}[Convergence of KL divergence for ULA]
Suppose $\nu$ satisfies LSI with constant $\alpha > 0$ and is $L$-smooth. For any $X_0 \sim \rho_0$ with $H_{\nu}(\rho_0) < \infty$ and step size $0 < h \le \frac{\alpha}{4L^2}$, then after $k$ iterations of ULA~\eqref{Eq:ULA},
\[ H_{\nu}(\rho_{k}) \le e^{-\alpha hk} H_{\nu}{(\rho_0)} + O\left(\frac{ dhL^2}{\alpha}\right).\]
\end{fact}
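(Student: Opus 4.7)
The plan is to use the standard interpolation argument for ULA, adapted to measure progress in KL divergence under LSI. I would first view one step of ULA as the terminal value of a short SDE with frozen drift: if $x_k \sim \rho_k$, then $x_{k+1}$ has the same law as $X_h$, where $(X_t)_{t \in [0,h]}$ solves $dX_t = s_\nu(X_0)\, dt + \sqrt{2}\, dW_t$ starting at $X_0 = x_k$. Let $\rho_t$ denote the law of $X_t$ and $b_t(x) = \mathbb{E}[s_\nu(X_0) \mid X_t = x]$ the conditional drift, so the Fokker-Planck equation reads $\partial_t \rho_t = -\nabla \cdot(\rho_t b_t) + \Delta \rho_t$.

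Plugging into the standard identity for the time derivative of the relative entropy yields
\[
\frac{d}{dt} H_\nu(\rho_t) = -J_\nu(\rho_t) + \mathbb{E}\bigl[\langle b_t(X_t) - s_\nu(X_t),\, \nabla \log(\rho_t/\nu)(X_t)\rangle\bigr].
\]
I would split the cross term by Young's inequality with a tunable constant (taking it such that a fixed fraction of $J_\nu(\rho_t)$ survives), and use Jensen's inequality together with the tower property to bound $\mathbb{E}\|b_t(X_t) - s_\nu(X_t)\|^2 \le \mathbb{E}\|s_\nu(X_0) - s_\nu(X_t)\|^2 \le L^2 \mathbb{E}\|X_t - X_0\|^2$, invoking $L$-smoothness. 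Since $X_t - X_0 = t\, s_\nu(X_0) + \sqrt{2}(W_t - W_0)$, this squared displacement is at most $2t^2 \mathbb{E}\|s_\nu(X_0)\|^2 + 2dt$. Combined with the LSI bound $J_\nu(\rho_t) \ge 2\alpha H_\nu(\rho_t)$, this gives a differential inequality of the form $\frac{d}{dt} H_\nu(\rho_t) \le -\alpha H_\nu(\rho_t) + C L^2\bigl(t^2\, \mathbb{E}\|s_\nu(X_0)\|^2 + t d\bigr)$.

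The remaining ingredient is a uniform bound on $\mathbb{E}_{\rho_k}\|s_\nu(X_0)\|^2$. Using $L$-smoothness one has the standard moment inequality $\mathbb{E}_\nu\|s_\nu\|^2 \le L d$, and a short change-of-measure argument (or a direct Donsker-Varadhan step) upgrades this to $\mathbb{E}_{\rho_k}\|s_\nu\|^2 \lesssim L d + L \cdot H_\nu(\rho_k)$, which is absorbable into the first term thanks to the step-size condition $h \le \alpha/(4L^2)$. Integrating the differential inequality over $[0,h]$ then yields a one-step contraction of the form $H_\nu(\rho_{k+1}) \le e^{-\alpha h} H_\nu(\rho_k) + O(d h^2 L^2)$, and iterating this recursion with the geometric series $\sum_{j=0}^{k-1} e^{-\alpha h j} \le 1/(\alpha h)$ produces the advertised bound $H_\nu(\rho_k) \le e^{-\alpha h k} H_\nu(\rho_0) + O(d h L^2 / \alpha)$.

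The main obstacle is carrying the moment bound on $\|s_\nu(X_0)\|^2$ along the iterations without letting the dependence on $H_\nu(\rho_k)$ blow up; this is what forces the step-size restriction $h \lesssim \alpha/L^2$, and handling it cleanly (e.g., so that the second-order-in-$h$ discretization term really absorbs into the linear-in-$h$ bias at the end) is the only delicate bookkeeping. Everything else is a direct application of Fokker-Planck, Young, LSI, and smoothness.
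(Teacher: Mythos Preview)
Your proposal is correct and follows the same interpolation strategy the paper relies on (this Fact is quoted from \cite{VW19} without proof, but the paper reuses exactly this argument in proving Theorem~\ref{thm:kl-bounded-MGF} for ILA): view one ULA step as the time-$h$ output of the SDE with frozen drift, differentiate $H_\nu(\rho_t)$ via Fokker--Planck, split the cross term by Young so that a fraction of $J_\nu(\rho_t)$ survives, invoke LSI, control the discretization error by $L$-smoothness, integrate over $[0,h]$, and sum the resulting geometric series.

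The one technical difference is in how the term $\mathbb{E}\|s_\nu(X_0)\|^2$ is handled. The paper (via the argument of Lemma~\ref{score-lipschitz}, specialized to $s=s_\nu$) first converts $\|s_\nu(x_0)\|$ into $\|s_\nu(x_t)\|$ by a short triangle-inequality trick, and only then applies the moment bound $\mathbb{E}_{\rho_t}\|\nabla\log\nu\|^2 \le \tfrac{4L^2}{\alpha} H_\nu(\rho_t) + 2dL$ from \cite[Lemma~12]{VW19}; this yields a closed differential inequality purely in $H_\nu(\rho_t)$. You instead keep the moment at time $0$ and bound it in terms of $H_\nu(\rho_k)$, which produces an extra $O(L^4 h^3/\alpha)\,H_\nu(\rho_k)$ term that must be absorbed into the contraction factor. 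Both routes work under $h \lesssim \alpha/L^2$; the paper's version simply sidesteps the ``delicate bookkeeping'' you flagged. One small correction: your stated bound $\mathbb{E}_{\rho_k}\|s_\nu\|^2 \lesssim Ld + L\cdot H_\nu(\rho_k)$ has the wrong coefficient on $H_\nu$ --- whether via Donsker--Varadhan (which needs $r \lesssim \alpha/L^2$ for the MGF of $\|s_\nu\|^2$ to be finite) or via \cite[Lemma~12]{VW19}, the factor is $L^2/\alpha$, not $L$. This does not change the conclusion but is precisely what forces the $\alpha/L^2$ step-size threshold.
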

As $h \to 0$, ULA recovers the continuous-time Langevin dynamics. However for any fixed $h > 0$, as $k \to \infty$, KL divergence does not go to 0, it has an asymptotic bias scaling with step size $h$. 

Last we recall the convergence of R\'enyi divergence for ULA which was discovered more recently~\cite{CEL+22}.
\begin{fact}[Convergence of R\'enyi divergence for ULA]
\label{fact:renyi-ula}
Suppose $\nu$ satisfies LSI with constant $\alpha > 0$ and is $L$-smooth. Assume $q \ge 3$. For any $X_0 \sim \rho_0$ with $R_{2, \nu}(\rho_0) < \infty$ and step size $0 < h \le \frac{\alpha}{192q^2L^2}$, then after $k \ge K_0 \coloneqq \frac{2}{\alpha h} \log \frac{q-1}{2}$ iterations of ULA~\eqref{Eq:ULA},
\[ R_{q,\nu}(\rho_{k}) \le \exp\left(-\frac{\alpha h (k-K_0)}{4}\right) R_{2, \nu}(\rho_0) + \tilde{O}\left( \frac{dhqL^2}{\alpha}\right)\]
\end{fact}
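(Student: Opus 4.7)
The plan is to mimic the interpolation/entropy-flow strategy used for KL divergence (Theorem \ref{thm:kl-bounded-MGF} in the exact-score case) but at the level of $F_{q,\nu}(\rho_t)=\mathbb E_\nu[(\rho_t/\nu)^q]$, and then convert the resulting differential inequality into a discrete recursion for $R_{q,\nu}$. For one step, introduce the continuous interpolation of ULA,
\[
dX_t = s_\nu(X_{kh})\,dt + \sqrt{2}\,dW_t,\qquad t\in[kh,(k+1)h],
\]
with $X_{kh}\sim\rho_k$. Writing the Fokker--Planck equation for the joint law of $(X_t,X_{kh})$ and differentiating $F_q(\rho_t)$ gives an identity of the form
\[
\tfrac{d}{dt}F_{q,\nu}(\rho_t)\;=\;-\tfrac{4(q-1)}{q}\,\mathbb E_\nu\bigl[\|\nabla f_t\|^2\bigr]\;+\;\text{(discretization error)},
\]
where $f_t=(\rho_t/\nu)^{q/2}$ and the error comes from the frozen drift $s_\nu(X_{kh})-s_\nu(X_t)$.

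The next step is to bound the ``good'' Fisher term from below via the $\alpha$-LSI of $\nu$ applied to $f_t$: this yields a Rényi-LSI in the form $\mathbb E_\nu[\|\nabla f_t\|^2]\ge \tfrac{\alpha}{2}\operatorname{Ent}_\nu(f_t^2)$, which rewrites as $\alpha\,(q-1)F_{q,\nu}(\rho_t)\,R_{q,\nu}(\rho_t)$ up to constants. For the bad term, use $L$-smoothness of $f=-\log\nu$ together with Cauchy--Schwarz in the change-of-measure $\rho_t\to\nu$, bounding $\mathbb E_{\rho_t}\|s_\nu(X_{kh})-s_\nu(X_t)\|^2\le L^2\,\mathbb E\|X_t-X_{kh}\|^2$, and controlling the Brownian-plus-drift displacement by $O(hd+h^2\mathbb E_{\rho_k}\|\nabla f\|^2)$. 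The tilted moment $\mathbb E_{\rho_k}\|\nabla f\|^{2}$ is pushed onto $\nu$ via Hölder at exponent $q/(q-1)$, producing a factor of $F_{q,\nu}(\rho_k)^{1/q}$ times a sub-Gaussian moment under $\nu$ (sub-Gaussianity is automatic from LSI, cf.\ Ledoux). After dividing the resulting inequality by $F_{q,\nu}(\rho_t)$, one obtains a per-step differential inequality in $R_{q,\nu}$ of the form
\[
\tfrac{d}{dt}R_{q,\nu}(\rho_t)\;\le\;-\tfrac{2\alpha}{q}R_{q,\nu}(\rho_t)\;+\;C\,\tfrac{hqL^2 d}{\alpha}\;+\;(\text{lower-order terms in }h),
\]
valid as long as $h\le\alpha/(192 q^2 L^2)$ so that the error term truly is dominated.

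Integrating over one step and chaining across iterations gives the recursion $R_{q,\nu}(\rho_{k+1})\le e^{-2\alpha h/q}R_{q,\nu}(\rho_k)+\tilde O(dhqL^2/\alpha)$. A direct geometric sum here would yield contraction rate $2\alpha h/q$, which is \emph{worse} than the advertised $\alpha h/4$ and which is the reason a two-phase argument is needed: this is where I expect the main technical difficulty. In Phase 1 ($k\le K_0$), the order $q$ is still ``active'' and one uses the above recursion to drive $R_{q,\nu}(\rho_k)$ down to $R_{2,\nu}(\rho_0)+\tilde O(dhqL^2/\alpha)$ after $K_0=\tfrac{2}{\alpha h}\log\tfrac{q-1}{2}$ steps; the factor $\log\tfrac{q-1}{2}$ comes from unraveling the $e^{-2\alpha h/q}$ contraction down to a threshold where a sharper interpolation Rényi-order argument (à la Chewi et al.) applies. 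In Phase 2 ($k\ge K_0$), one re-runs the analysis but now the effective Rényi order is reduced by interpolation, so LSI yields the improved contraction $e^{-\alpha h/4}$ per step, and the standard geometric-series bookkeeping gives the claimed biased exponential decay.

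The main obstacle is therefore not the per-step entropy flow but the two-phase/order-reduction bookkeeping that converts a rate of $2\alpha h/q$ (which is weak when $q$ is large) into the $q$-independent rate $\alpha h/4$ after burn-in $K_0$. Concretely, I anticipate spending most of the work (i) verifying that the discretization-error term remains dominated uniformly over Phase 1 so that the moment-transfer factor $F_q^{1/q}$ does not blow up, and (ii) tracking constants through the Rényi-order interpolation lemma that replaces $q$ by something of order $2$ after time $K_0$; both require the smallness $h\le\alpha/(192q^2L^2)$ in an essential way, which explains the precise step-size restriction in the statement. Once this bookkeeping is in place, the stated bound follows by summing the geometric series.
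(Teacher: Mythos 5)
This statement is not proved in the paper at all: it appears in the review appendix as a recalled result and is quoted directly from \citet{CEL+22} (see also \citet{VW19} for the continuous-time analogue), so the only fair comparison is with that cited proof. Measured against it, your per-step skeleton (interpolated SDE, differentiating $F_{q,\nu}(\rho_t)$, LSI for the R\'enyi information term, smoothness for the frozen-drift error) is the right starting point and indeed mirrors the machinery the paper itself uses for Theorem \ref{thm:renyi-max-error-bd}. The genuine gap is in your two-phase bookkeeping, which is described backwards and would not work as stated. First, your Phase 1 runs a recursion on $R_{q,\nu}(\rho_k)$ starting from $k=0$, but the statement only assumes $R_{2,\nu}(\rho_0)<\infty$; $R_{q,\nu}(\rho_0)$ may be infinite, so that recursion cannot even be initialized. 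Second, even granting finiteness, contracting at rate $e^{-2\alpha h/q}$ for $K_0=\frac{2}{\alpha h}\log\frac{q-1}{2}$ steps only gains a factor $\left(\frac{q-1}{2}\right)^{-4/q}\to 1$ as $q\to\infty$, so it cannot ``drive $R_{q,\nu}(\rho_k)$ down to $R_{2,\nu}(\rho_0)$'' as you claim. Third, the R\'enyi order is not ``reduced by interpolation'' in the second phase: under LSI the order \emph{increases} along the dynamics (hypercontractivity), and that is the missing key idea.

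The actual argument in \citet{CEL+22} (discrete version of \citet{VW19}) tracks $R_{q_t,\nu}(\rho_t)$ with a time-increasing order, roughly $q_t-1\propto e^{\Theta(\alpha t)}$: one lets the divergence decay at low order (order about $2$, hence per-step rate $\Theta(\alpha h)$, degraded to $\alpha h/4$ by discretization slack) for $k-K_0$ iterations, and then spends the final $K_0$ iterations boosting the order from $2$ up to $q$ via this hypercontractive mechanism, paying only the additive $\tilde{O}(dhqL^2/\alpha)$ bias. This is exactly where $K_0=\frac{2}{\alpha h}\log\frac{q-1}{2}$ comes from and why the final rate $\alpha/4$ is $q$-independent, whereas any argument that stays at fixed order $q$ (as yours does) is stuck at rate $2\alpha/q$ and additionally needs $R_{q,\nu}(\rho_0)<\infty$. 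A secondary, smaller issue: your change-of-measure for the discretization term via H\"older with a factor $F_{q,\nu}^{1/q}$ is not how the error is controlled; one must work under the tilted measure $\rho_t\psi_t$ (as in Lemma \ref{lemma-one-step-renyi} and \citet[Lemma 16]{CEL+22}), otherwise the error cannot be reabsorbed into the R\'enyi information term.
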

For any fixed $h >0$, as $k \to \infty$, we obtain an asymptotic estimate of the bias: $\tilde{O}\left( \dfrac{dhqL^2}{\alpha}\right)$.

\section{Proof of Theorem~\ref{thm:conv-dynamic}}
\label{sec:pf-conv-dynamic}
\begin{proof}
The Fokker-Planck equation of the Langevin dynamics \eqref{Eq:ILD} is
\[\frac{\partial \rho_{t}}{\partial t} = \nabla \cdot (-\rho_t s) + \Delta \rho_t.\]
It follows that the time derivative of KL divergence can be written as
\begin{align*}
    \frac{\partial}{\partial t}H_{\nu}{(\rho_t)} &= \int_{\R^d} \frac{\partial \rho_t}{\partial t}\log \frac{\rho_t}{\nu} dx \\
    &= \int_{\R^d} \left(\nabla \cdot (-\rho_t s) + \Delta \rho_t \right) \log \frac{\rho_t}{\nu} dx \\
    &= \int_{\R^d} \left(-\nabla \cdot (\rho_t s ) + \nabla \cdot (\rho_t \nabla \log \frac{\rho_t}{\nu}) + \nabla \cdot (\rho_t \nabla \log \nu)\right) \log \frac{\rho_t}{\nu} dx \\
    &= \int_{\R^d} \left( \nabla \cdot \left(\rho_t (\nabla \log \frac{\rho_t}{\nu} - s + \nabla \log \nu)\right)\right)\log \frac{\rho_t}{\nu} dx  \\
    &= -\int_{\R^d}  \rho_t \langle \nabla \log \frac{\rho_t}{\nu} - s + \nabla \log \nu, \nabla \log \frac{\rho_t}{\nu} \rangle dx \qquad \text{\footnotesize by integration by parts}\\
    &= -\int_{\R^d}  \rho_t \| \nabla \log \frac{\rho_t}{\nu} \|^2 dx + \int_{\R^d} \rho_t \langle s - \nabla \log \nu, \nabla \log \frac{\rho_t}{\nu} \rangle dx \\
    &= -J_{\nu}(\rho_t) + \int_{\R^d} \rho_t \langle s - \nabla \log \nu, \nabla \log \frac{\rho_t}{\nu} \rangle dx \\
    &\le -J_{\nu}(\rho_t) + \E_{\rho_{t}}\left[ \| s - \nabla \log \nu \|^2 \right] + \frac{1}{4} \E_{\rho_{t}}\left[ \| \nabla \log \frac{\rho_{t}}{\nu} \|^2 \right] \qquad \text{\footnotesize by $\langle a, b \rangle \leq \| a \|^2 + \frac{1}{4} \| b \|^2$} \\
    &= -\frac{3}{4}J_{\nu}(\rho_t) + \E_{\rho_{t}}\left[ \| s - \nabla \log \nu \|^2 \right].
\end{align*}
Recall the following Donsker-Varadhan variational characterization of KL divergence: Let $P, Q$ be two measures on $\R^d$ and $f: \R^d \to \R$ be any function such that $\E_{Q}e^{f(x)} < \infty$, then
\[\E_{P}[f(x)] \leq \log \E_{Q}e^{f(x)} + H_Q(P).\]
It follows that we can bound the $L^2(\rho_t)$ error in terms of MGF error as follows:
\begin{align}
 r\E_{\rho_{t}}\left[ \| s - \nabla \log \nu \|^2 \right] &\le \log \E_\nu[\exp(  r\|s(x) - s_\nu(x)\|^2)] +H_{\nu}(\rho_t) \notag \\
 &= r \error_{\mgf}^2 + H_{\nu}(\rho_t) \label{eq:dv-kl}
\end{align}
for $r=\frac{1}{\alpha}$. Therefore,
\begin{align*}
    \frac{\partial}{\partial t}H_{\nu}{(\rho_t)} &\le -\frac{3}{4}J_{\nu}(\rho_t) + \error_{\mgf}^2 + \alpha H_{\nu}(\rho_t) \\
    &\le -\frac{3}{2} \alpha  H_{\nu}(\rho_t) + \error_{\mgf}^2 + \alpha \, H_{\nu}(\rho_t) \qquad \text{\footnotesize by $\alpha$-LSI} \\
    &= -\frac{1}{2} \alpha  H_{\nu}(\rho_t) +  \error_{\mgf}^2 .
\end{align*}
This is equivalent to 
\[\frac{\partial}{\partial t} e^{\frac{1}{2} \alpha t}  H_{\nu}(\rho_t) \le  e^{\frac{1}{2} \alpha t}  \error_{\mgf}^2.\]
Integrating from $0$ to $t$, we have
\[H_{\nu}(\rho_t) \le e^{-\frac{1}{2}\alpha t} H_{\nu}(\rho_0) + \frac{2}{\alpha} \, (1 - e^{-\frac{1}{2}\alpha t}) \, \error_{\mgf}^2.\]
\end{proof}

\section{Proof of Theorem~\ref{thm:kl-bounded-MGF}}
\label{sec:pf-kl-bdd-MGF}

\subsection{Proof of Lemma \ref{lemma-time-deriv-kl}}
\begin{proof}[Proof of Lemma \ref{lemma-time-deriv-kl}]
The continuity equation corresponding to Eq.~\eqref{eq:interpolation-SDE} is 
\[\frac{\partial \rho_t (x)}{\partial t} = -\nabla \cdot \left(\rho_t(x) \E_{\rho_{0 \vert t}}[s(x_0) \vert x_t = x] \right) + \Delta \rho_t (x).\]
Therefore,
\begin{align*}
    \frac{\partial}{\partial t}H_{\nu}{(\rho_t)} 
    &= \int_{\R^d} \left(-\nabla \cdot \left(\rho_t \, \E_{\rho_{0 \vert t}}[s(x_0) \vert x_t = x] \right) + \nabla \cdot (\rho_t \nabla \log \frac{\rho_t}{\nu}) + \nabla \cdot (\rho_t \nabla \log \nu)\right) \log \frac{\rho_t}{\nu} dx \\
    &= \int_{\R^d} \left( \nabla \cdot \left(\rho_t (\nabla \log \frac{\rho_t}{\nu} - \E_{\rho_{0 \vert t}}[s(x_0) \vert x_t = x] + \nabla \log \nu)\right)\right)\log \frac{\rho_t}{\nu} dx \\
    &= -\int_{\R^d}  \rho_t \langle \nabla \log \frac{\rho_t}{\nu} - \E_{\rho_{0 \vert t}}[s(x_0) \vert x_t = x] + \nabla \log \nu, \nabla \log \frac{\rho_t}{\nu} \rangle dx \qquad \text{\footnotesize by integration by parts}\\
    &= -\int_{\R^d}  \rho_t \| \nabla \log \frac{\rho_t}{\nu} \|^2 dx + \int_{\R^d} \rho_t \langle \E_{\rho_{0 \vert t}}[s(x_0) \vert x_t = x] - \nabla \log \nu, \nabla \log \frac{\rho_t}{\nu} \rangle dx \\
    &= -J_{\nu}(\rho_t) + \int_{\R^d} \rho_t \langle \E_{\rho_{0 \vert t}}[s(x_0) \vert x_t = x] - \nabla \log \nu, \nabla \log \frac{\rho_t}{\nu} \rangle dx \\
    &= -J_{\nu}(\rho_t) + \E_{\rho_{0t}}\left[\langle s(x_0) - \nabla \log \nu(x_t), \nabla \log \frac{\rho_t (x_t)}{\nu (x_t)}\rangle \right] \qquad \text{\footnotesize by renaming $x$ as $x_t$} \\
    &\leq -J_{\nu}(\rho_t) + \E_{\rho_{0t}}\left[ \| s(x_0) - \nabla \log \nu(x_t) \|^2 \right] + \frac{1}{4} \E_{\rho_{0t}}\left[ \| \nabla \log \frac{\rho_t (x_t)}{\nu (x_t)} \|^2 \right] \\
    &= -J_{\nu}(\rho_t) + \E_{\rho_{0t}}\left[ \| s(x_0) - \nabla \log \nu(x_t) \|^2 \right] + \frac{1}{4} J_{\nu}(\rho_t) \\
    &= -\frac{3}{4}J_{\nu}(\rho_t) + \E_{\rho_{0t}}\left[ \| s(x_0) - \nabla \log \nu(x_t) \|^2 \right].
\end{align*}
\end{proof}

\subsection{Proof of Lemma \ref{score-lipschitz}}
\begin{proof}[Proof of Lemma \ref{score-lipschitz}]
By $L_s$-Lipschitzness of $s$,
\[ \| s(x_t) - s(x_0) \|^2 \leq L_s^2 \| x_t - x_0 \|^2 = L_s^2 \| ts(x_0) + \sqrt{2t} z_0\|^2 \leq 2 L_s^2 t^2 \| s(x_0) \|^2 + 4L_s^2 t \| z_0\|^2.\]
For the sake of our subsequent analysis, we use a bound in terms of $s(x_t)$ rather than $s(x_0)$. Therefore, we opt to utilize 
\[ L_s \| x_t - x_0 \| \geq \| s(x_t) - s(x_0) \| \geq \|  s(x_0) \| - \| s(x_t) \|.\]
Rearranging it gives
\begin{align*}
    \|  s(x_0) \| &\leq L_s \| x_t - x_0 \| + \| s(x_t) \| \\
    &= L_s \| t s(x_0) + \sqrt{2t}\, z_0 \| + \| s(x_t) \| && \text{\footnotesize since $x_t=x_0 + t s(x_0) + \sqrt{2t} \,z_0 $}\\
    &= L_s t \| s(x_0) \| + L_s \sqrt{2t} \| z_0 \| + \| s(x_t) \| && \text{\footnotesize by triangle inequality} \\
    &\leq \frac{1}{3} \| s(x_0) \| + L_s \sqrt{2t} \| z_0 \| + \| s(x_t) \| && \text{\footnotesize since $t \leq \frac{1}{3L_s}$}.
\end{align*}
It follows that
\begin{equation}
\label{eq:bound1}
     \| s(x_0) \| \leq \frac{3}{2} \| s(x_t) \| + \frac{3}{\sqrt{2}} L_s \sqrt{t} \| z_0 \| 
    \implies  \| s(x_0) \|^2 \leq \frac{9}{2} \| s(x_t) \|^2 + 9 L_s^2 t \| z_0 \|^2  .
\end{equation}
So we can bound $\| s(x_t) - s(x_0) \|^2$ as follows
\begin{align*}
    \| s(x_t) - s(x_0) \|^2 &\leq 2 L_s^2 t^2 \| s(x_0) \|^2 + 4L_s^2 t \| z_0\|^2 \\
    &\leq 2 L_s^2 t^2 \left( \frac{9}{2} \| s(x_t) \|^2 + 9 L_s^2 t \| z_0 \|^2\right) + 4L_s^2 t \| z_0\|^2  \qquad \text{\footnotesize by plugging in Eq.~\eqref{eq:bound1}}\\
    &= 9 L_s^2 t^2 \| s(x_t) \|^2 + (18 L_s^4 t^3 + 4L_s^2 t) \| z_0 \|^2 \\
    &\leq 9 L_s^2 t^2 \| s(x_t) \|^2 + 6L_s^2 t \| z_0 \|^2  \qquad \qquad \text{\footnotesize since $t \leq \frac{1}{3L_s}$} \\
    &= 9 L_s^2 t^2 \| s(x_t)  - \nabla \log \nu (x_t) + \nabla \log \nu (x_t) \|^2 + 6L_s^2 t \| z_0 \|^2 \\
    &\leq 18 L_s^2 t^2 \| s(x_t)  - \nabla \log \nu (x_t) \|^2 + 18 L_s^2 t^2 \| \nabla \log \nu(x_t) \|^2 + 6L_s^2 t \| z_0 \|^2.
\end{align*}
\end{proof}

\subsection{Proof of Lemma \ref{lemma-one-step-conv-mgf}}
\begin{proof}[Proof of Lemma \ref{lemma-one-step-conv-mgf}]
Let $M(x) = \| \nabla \log \nu(x) - s(x) \|^2$. By Lemma \ref{lemma-time-deriv-kl},
\begin{align*}
    \frac{\partial}{\partial t}H_{\nu}{(\rho_t)}
    &\le -\frac{3}{4} J_{\nu}(\rho_t) + \E_{\rho_{0t}}\left[ \| s(x_0) - \nabla \log \nu(x_t) \|^2 \right] \\
    &\le -\frac{3}{4}J_{\nu}(\rho_t) + 2 \E_{\rho_{0t}}\left[ \| s(x_0) - s(x_t) \|^2\right] + 2 \E_{\rho_{t}}\left[ \| s(x_t) - \nabla \log \nu(x_t) \|^2\right] \\
    &\overset{(i)}{\le} -\frac{3}{4}J_{\nu}(\rho_t) + 2 \E_{\rho_{0t}}\left[ 18 L_s^2 t^2  M(x_t) + 18 L_s^2 t^2 \| \nabla \log \nu(x_t) \|^2 + 6L_s^2 t \| z_0 \|^2 \right] + 2 \E_{\rho_{t}}[ M(x)] \\
    &= -\frac{3}{4}J_{\nu}(\rho_t) + \left( 36 L_s^2 t^2 + 2 \right) \E_{\rho_{t}}[M(x)] +  36 L_s^2 t^2  \E_{\rho_{t}}\left[\| \nabla \log \nu(x) \|^2 \right]+ 12 d L_s^2 t \\
    &\le -\frac{3}{4}J_{\nu}(\rho_t) + \frac{9}{4} \E_{\rho_{t}}[M(x)] +  36 L_s^2 t^2  \E_{\rho_{t}}\left[\| \nabla \log \nu(x) \|^2 \right]+ 12 d L_s^2 t \quad \text{\footnotesize since $t^2 \leq h^2 \leq \frac{\alpha^2}{144L_s^2 L^2} \leq \frac{1}{144 L_s^2}$} \\
    &\overset{(ii)}{\le} -\frac{3}{4}J_{\nu}(\rho_t) + \frac{9}{4} \E_{\rho_{t}}[M(x)] +  36 L_s^2 t^2 \left( \frac{4L^2}{\alpha} H_{\nu}(\rho_t) + 2dL\right)+ 12 d L_s^2 t\\
    &= -\frac{3}{4}J_{\nu}(\rho_t) + \frac{9}{4} \E_{\rho_{t}}[M(x)] + \frac{ 144 L_s^2 t^2  L^2 }{\alpha} H_{\nu}(\rho_t) +  72d L_s^2 t^2 L+ 12 d L_s^2 t \\
    &\le -\frac{3}{4}J_{\nu}(\rho_t) + \frac{9}{4} \E_{\rho_{t}}[M(x)] + \alpha H_{\nu}(\rho_t) +  72 d L_s^2 t^2 L+ 12 d L_s^2 t \quad \text{\footnotesize since $t^2 \leq h^2 \leq \frac{\alpha^2}{144L_s^2 L^2}$} \\
    &\le -\frac{1}{2}\alpha H_{\nu}(\rho_t) + \frac{9}{4} \E_{\rho_{t}}[M(x)] + 72 d L_s^2 t^2 L+ 12 d L_s^2 t \quad \text{\footnotesize by $\alpha$-LSI}\\
\end{align*}
where $(i)$ is by Lemma \ref{score-lipschitz} where the condition $t \le \frac{1}{3L_s}$ holds since $t \le h \le \frac{\alpha}{12L_sL}$ and $\alpha < L$. $(ii)$ is by \cite[Lemma 12]{VW19} since $\nu$ is $\alpha$-LSI and $L$-smooth.
By the change of measure argument in Eq.~\eqref{eq:dv-kl}, the second term can be bounded as follows
\begin{align*}
    \E_{\rho_{t}}[ M(x)] \leq \error_{\mgf}^2 + \frac{\alpha}{9} H_{\nu}(\rho_t).
\end{align*}
So we have
\begin{align*}
    \frac{\partial}{\partial t} H_{\nu}{(\rho_t)} & \le -\frac{1}{4}\alpha H_{\nu}(\rho_t) + 72 d L_s^2 t^2 L+ 12 d L_s^2 t + \frac{9}{4} \error_{\mgf}^2 \\
    &\leq -\frac{1}{4}\alpha H_{\nu}(\rho_t) + 72 d L_s^2 h^2 L+ 12 d L_s^2 h + \frac{9}{4} \error_{\mgf}^2\quad \text{ \footnotesize since $t \in (0, h)$}.
\end{align*}
This is equivalent to 
\[\frac{\partial}{\partial t} e^{\frac{1}{4}\alpha t} H_{\nu}{(\rho_t)} \leq e^{\frac{1}{4}\alpha t} \left( 72 d L_s^2 h^2 L+ 12 d L_s^2 h + \frac{9}{4} \error_{\mgf}^2\right).\]
Hence,
\begin{align*}
    H_{\nu}{(\rho_h)} &\le e^{-\frac{1}{4}\alpha h} H_{\nu}{(\rho_0)} + e^{-\frac{1}{4}\alpha h} \, \frac{4 ( e^{\frac{1}{4}\alpha h}-1)}{\alpha} \left( 72 d L_s^2 h^2 L+ 12 d L_s^2 h + \frac{9}{4} \error_{\mgf}^2 \right) \\
    &\le e^{-\frac{1}{4}\alpha h} H_{\nu}{(\rho_0)} + 2h\left( 72 d L_s^2 h^2 L+ 12 d L_s^2 h + \frac{9}{4} \error_{\mgf}^2\right)
\end{align*}
where the last inequality uses $e^{-\frac{1}{4}\alpha h} \leq 1$ and $e^c - 1 \leq 2c$ for $c=\frac{1}{4}\alpha \step \in (0,1)$, which is satisfied since $h < \frac{1}{2 \alpha}$. Renaming $\rho_0$ as $\rho_k$ and $\rho_h$ as $\rho_{k+1}$, we obtain the desired bound
\[ H_{\nu}(\rho_{k+1}) \leq e^{-\frac{1}{4}\alpha h} H_{\nu}{(\rho_k)}  + 144 d L_s^2 L h^3 + 24 d L_s^2 h^2 +\frac{9}{2} \error_{\mgf}^2 h. \]
\end{proof}

\subsection{Proof of Theorem \ref{thm:kl-bounded-MGF}}
\label{sec:pf-kl-mgf}
Theorem \ref{thm:kl-bounded-MGF} directly follows from Lemma \ref{lemma-one-step-conv-mgf}.
\begin{proof}[Proof of Theorem \ref{thm:kl-bounded-MGF}]
Applying the recursion contraction in Lemma \ref{lemma-one-step-conv-mgf} $k$ times, we have
\begin{align*}
    H_{\nu}(\rho_{k}) &\leq e^{-\frac{1}{4}\alpha hk} H_{\nu}{(\rho_0)} + \sum_{i=0}^{k-1}e^{-\frac{1}{4}\alpha hi} \left(144 d L_s^2 L h^3+ 24 d L_s^2 h^2 +\frac{9}{2} \error_{\mgf}^2 \, h\right) \\
    &\overset{(i)}{\le} e^{-\frac{1}{4}\alpha hk} H_{\nu}{(\rho_0)} + \frac{1}{1 - e^{-\frac{1}{4}\alpha h}} \left(144 d L_s^2 L h^3 + 24 d L_s^2 h^2 + \frac{9}{2} \error_{\mgf}^2 \, h\right) \\
    &\le e^{-\frac{1}{4}\alpha hk} H_{\nu}{(\rho_0)} + \frac{16}{3 \alpha h} \left(144 d L_s^2 L h^3 + 24 d L_s^2 h^2 +  \frac{9}{2} \error_{\mgf}^2 \, h \right) \\
    &= e^{-\frac{1}{4}\alpha hk} H_{\nu}{(\rho_0)} + \frac{768 d L_s^2 L}{\alpha} h^2 + \frac{128 d L_s^2}{\alpha} h + \frac{8}{3 \alpha}\error_{\mgf}^2 \\
    &\le e^{-\frac{1}{4}\alpha hk} H_{\nu}{(\rho_0)} + \frac{128\, L_s}{\alpha} (L_s + L)\, d \,h + \frac{8}{3 \alpha}\error_{\mgf}^2 \quad \text{\footnotesize since $h < \frac{\alpha}{12 L_s L} \le \frac{1}{12 L_s}$}
\end{align*}
where $(i)$ uses the inequality $1-e^{-c} \ge \frac{3}{4}c$ for $0 < c = \frac{1}{4} \alpha h \le \frac{1}{4}$, which holds since $h \le \frac{1}{2} \alpha$.
\end{proof}

\section{Proof of Theorem~\ref{thm:renyi-max-error-bd}}
\label{sec:pf-renyi-max-error-bd}
First, we review the definition of R\'enyi information.
\begin{definition}[R\'enyi information]
For $q\geq 0$, the R\'enyi information of order $q$ of $\rho$ w.r.t. $\nu$ is
\[ G_{q, \nu}(\rho) =  \E_{\nu}\left[ \left( \frac{\rho}{\nu}\right)^q \big\| \nabla \log \frac{\rho}{\nu}\big\|^2\right] = \frac{4}{q^2} \E_{\nu}\left[\Big\| \nabla \left( \frac{\rho}{\nu} \right)^{\frac{q}{2}}\Big\|^2\right].\]
\end{definition}
When $q=1$, R\'enyi information recovers the relative Fisher information. 

For the reader's convenience, we restate the full theorem here.
\begin{reptheorem}{thm:renyi-max-error-bd}
Assume $\nu$ is $\alpha$-LSI and $f=-\log \nu$ is $L$-smooth, and score estimator $s$ is $L_s$-Lipschitz and has finite $L^\infty$ error. Let $q \geq 1$. If $\,0 < h < \min(\frac{\alpha}{12 L L_s q}, \frac{q}{4\alpha})$, then after $k$ iterations of ILA~\eqref{Eq:ILA},
\[ R_{q, \nu}(\rho_{k}) \leq e^{-\frac{\alpha hk}{q}}  R_{q, \nu}(\rho_{0}) + C_1 d h + ( C_2 h^2 + C_3)\, \error_{\infty}^2,\]
where $C_1=\frac{16 L_s q}{\alpha}(L + 2L_s q)$ and $C_2=\frac{96 L_s^2 q^2}{\alpha}$ and $C_3 = \frac{16 q^2}{3\alpha}$.
\end{reptheorem}
To prove Theorem~\ref{thm:renyi-max-error-bd}, we first show the following auxiliary results. The proof of Theorem~\ref{thm:renyi-max-error-bd} is in Appendix~\ref{sec:pf-renyi-max-error}.
\begin{lemma}
\label{time-deriv-renyi}
Let $\varphi_t = \frac{\rho_t}{\nu}$ and $\psi_t = \frac{\varphi_t^{q-1}}{\E_{\nu}[\varphi_t^q]} = \frac{\varphi_t^{q-1}}{F_{q, \nu}(\rho_t)}$. Then we have the following bound for the time derivative of R\'enyi divergence,
\[\frac{\partial}{\partial t} R_{q, \nu}(\rho_t) \leq -\frac{3}{4} q \frac{G_{q, \nu}(\rho_t)}{F_{q, \nu}(\rho_t)} + q \E_{\rho_{0t}} \left[ \psi_t(x_t) \| s(x_0) - \nabla \log \nu (x_t) \|^2 \right].\]
\end{lemma}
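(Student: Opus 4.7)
The plan is to differentiate $F_{q,\nu}(\rho_t) = \int \varphi_t^q\, \nu\, dx$ in time, convert the bound to $R_{q,\nu}(\rho_t)$ via the identity $\partial_t R_{q,\nu}(\rho_t) = \frac{1}{q-1}\, \partial_t F_{q,\nu}(\rho_t)/F_{q,\nu}(\rho_t)$, and then apply Jensen's inequality to remove the conditional-expectation drift. First, as in the proof of Lemma~\ref{lemma-time-deriv-kl}, I would write the continuity equation for $\rho_t$ under the interpolated SDE~\eqref{eq:interpolation-SDE} as $\partial_t \rho_t = \nabla \cdot (\rho_t(\nabla \log(\rho_t/\nu) - (\bar{s}_t - \nabla \log \nu)))$, where $\bar{s}_t(x) = \E_{\rho_{0\vert t}}[s(x_0)\mid x_t = x]$.

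Next I would substitute this into $\partial_t F_{q,\nu}(\rho_t) = q\int \varphi_t^{q-1}\, \partial_t\rho_t\, dx$ and integrate by parts. Using $\nabla \varphi_t^{q-1} = (q-1)\varphi_t^{q-1}\nabla\log(\rho_t/\nu)$ together with $\rho_t \varphi_t^{q-1} = \varphi_t^q\, \nu$ produces a negative Fisher-type term $-q(q-1)G_{q,\nu}(\rho_t)$ and a cross term $q(q-1)\int \varphi_t^q\, \langle \nabla \log(\rho_t/\nu),\; \bar{s}_t - \nabla \log \nu\rangle\, \nu\, dx$. I would then apply Young's inequality $\langle a,b\rangle \le \|a\|^2 + \tfrac{1}{4}\|b\|^2$ to absorb one quarter of the Fisher information, leaving
\[\partial_t F_{q,\nu}(\rho_t) \le -\tfrac{3}{4}q(q-1)G_{q,\nu}(\rho_t) + q(q-1)\int \varphi_t^q\, \|\bar{s}_t - \nabla \log \nu\|^2\, \nu\, dx.\]
Dividing through by $(q-1)F_{q,\nu}(\rho_t)$ and recognizing $\psi_t = \varphi_t^{q-1}/F_{q,\nu}(\rho_t)$ turns the first term into $-\tfrac{3}{4}q\, G_{q,\nu}(\rho_t)/F_{q,\nu}(\rho_t)$ and the second into $q\,\E_{\rho_t}[\psi_t(x_t)\,\|\bar{s}_t(x_t) - \nabla \log \nu(x_t)\|^2]$.

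Finally, to eliminate $\bar{s}_t$ and expose the pointwise score $s(x_0)$, I would apply Jensen's inequality to the convex map $u \mapsto \|u\|^2$, giving $\|\bar{s}_t(x_t) - \nabla \log \nu(x_t)\|^2 \le \E[\|s(x_0) - \nabla \log \nu(x_t)\|^2 \mid x_t]$, and then use the tower property to rewrite the outer expectation as one over the joint law $\rho_{0t}$, yielding the claimed bound.

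The main bookkeeping point is keeping track of the $q$-dependence so that after dividing by $q-1$ one is left with a prefactor $q$ rather than $q(q-1)$; this works because differentiating $\varphi_t^{q-1}$ contributes exactly one factor of $q-1$ that cancels the $1/(q-1)$ in the definition of $R_{q,\nu}$. Aside from this accounting, the argument is a weighted analogue of the KL computation in Lemma~\ref{lemma-time-deriv-kl}, with the weight $\varphi_t^{q-1}$ playing the role of $1$ in the $q=1$ limit.
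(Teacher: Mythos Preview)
Your proposal is correct and follows essentially the same route as the paper's proof: differentiate $F_{q,\nu}(\rho_t)$ via the continuity equation, integrate by parts to produce $-q(q-1)G_{q,\nu}(\rho_t)$ plus a cross term, and use Young's inequality to sacrifice one quarter of the Fisher term. The only cosmetic differences are that the paper rewrites the cross term through $\nabla\varphi_t^{q/2}$ and applies the weighted form $\langle x,y\rangle \le \tfrac{1}{2q}\|x\|^2 + \tfrac{q}{2}\|y\|^2$ after first passing to the joint law $\rho_{0t}$ by the tower property, whereas you keep the factor $\nabla\log\varphi_t$, use the unweighted $\langle a,b\rangle \le \|a\|^2 + \tfrac14\|b\|^2$, and invoke Jensen at the end to replace $\bar s_t$ by $s(x_0)$; the two routes are algebraically equivalent and yield the identical bound.
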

This is a generalized version of \cite[Proposition 15]{CEL+22} to the setting of estimated score. 
\begin{lemma}
\label{lemma-one-step-renyi}
Suppose the assumptions in Theorem \ref{thm:renyi-max-error-bd} hold, then along each step of ILA~\eqref{Eq:ILA}, we have
\[R_{q, \nu}(\rho_{k+1}) \leq e^{-\frac{\alpha}{q}h} R_{q, \nu} (\rho_k) + 144 L_s^2 dLq  h^3 + 24L_s^2 dqh^2 +\left( 72L_s^2h^3 + 4h\right) \error_{\infty}^2q.\]
\end{lemma}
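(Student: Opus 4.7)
The plan is to mirror the strategy used for the KL-divergence one-step bound (Lemma~\ref{lemma-one-step-conv-mgf}) but with R\'enyi divergence as the Lyapunov functional, starting from the derivative identity in Lemma~\ref{time-deriv-renyi}. The first step is to bound the cross term by writing
\[\|s(x_0)-\nabla\log\nu(x_t)\|^2 \le 2\|s(x_0)-s(x_t)\|^2 + 2\|s(x_t)-\nabla\log\nu(x_t)\|^2,\]
where the second piece is pointwise bounded by $2\error_\infty^2$ via Assumption~\ref{assump:l-inf}, so it contributes $2q\error_\infty^2 \E_{\rho_t}[\psi_t(x_t)] = 2q\error_\infty^2$ using the fact that $\psi_t$ is a probability density weighting (one easily checks $\int \rho_t \psi_t\,dx = 1$). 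For the first piece, I apply Lemma~\ref{score-lipschitz}, whose hypothesis $t \le 1/(3L_s)$ holds since $h < \alpha/(12 L L_s q) \le 1/(3L_s)$, to expand $\|s(x_t)-s(x_0)\|^2$ into a term involving $\|s(x_t)-\nabla\log\nu(x_t)\|^2$ (again controlled by $\error_\infty^2$), a term in $\|\nabla\log\nu(x_t)\|^2$, and a term in $\|z_0\|^2$.

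The key step is to handle the R\'enyi-weighted expectations. For $\E_{\rho_t}[\psi_t(x_t)\|\nabla\log\nu(x_t)\|^2]$ I will use a R\'enyi analogue of \citet[Lemma 12]{VW19}: under $\alpha$-LSI and $L$-smoothness of $\nu$, integration by parts with respect to the tilted density $\rho_t\psi_t \propto \rho_t^q/\nu^{q-1}$ gives a bound of the form $C L^2 R_{q,\nu}(\rho_t)/\alpha + C' dL$; the same identity also controls $\E_{\rho_{0t}}[\psi_t(x_t)\|z_0\|^2]$ by a term of order $d$ after isolating the Gaussian factor (using that $z_0$ is independent of $x_0$ and that $\psi_t(x_t)$ is a density weight). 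Combining these yields
\[\tfrac{\partial}{\partial t} R_{q,\nu}(\rho_t) \le -\tfrac{3}{4} q\, \tfrac{G_{q,\nu}(\rho_t)}{F_{q,\nu}(\rho_t)} + A_1(h) R_{q,\nu}(\rho_t) + A_2(h) d + A_3(h)\error_\infty^2\]
for $A_j(h)$ that are $O(h^2)$, $O(h)$, and $O(1)$ respectively.

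Next, I apply the R\'enyi form of LSI: if $\nu$ is $\alpha$-LSI, then $G_{q,\nu}(\rho)/F_{q,\nu}(\rho) \ge (4\alpha/q^2) R_{q,\nu}(\rho)$, so that $-\tfrac{3}{4} q G_{q,\nu}/F_{q,\nu} \le -(3\alpha/q) R_{q,\nu}$. Choosing $h \le \alpha/(12 L L_s q)$ makes $A_1(h)$ small enough to absorb into the LSI term, leaving $\tfrac{\partial}{\partial t} R_{q,\nu}(\rho_t) \le -(\alpha/q) R_{q,\nu}(\rho_t) + B_1 d h + B_2 \error_\infty^2$ with concrete constants. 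Solving this Gr\"onwall ODE from $0$ to $h$, using $h < q/(4\alpha)$ so that $e^{\alpha h/q}-1 \le 2\alpha h/q$, and relabeling $\rho_0,\rho_h$ as $\rho_k,\rho_{k+1}$ yields the claimed inequality with the constants $144 L_s^2 dLq h^3$, $24 L_s^2 dq h^2$, and $(72 L_s^2 h^3 + 4h) q\error_\infty^2$.

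The main technical obstacle is the R\'enyi-weighted moment bound on $\|\nabla\log\nu(x_t)\|^2$ and $\|z_0\|^2$ under the tilted density $\rho_t\psi_t$: unlike in the KL case, we cannot invoke Donsker–Varadhan, and we need the correct R\'enyi analogue of \citet[Lemma 12]{VW19} together with the standard LSI-to-R\'enyi information inequality; verifying both with the right constants is what drives the precise coefficients in Theorem~\ref{thm:renyi-max-error-bd} and requires the somewhat stringent assumption $h < \min(\alpha/(12 L L_s q), q/(4\alpha))$.
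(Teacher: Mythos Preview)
Your overall architecture matches the paper's: start from Lemma~\ref{time-deriv-renyi}, split the cross term, apply Lemma~\ref{score-lipschitz}, control the moment terms, apply LSI, and Gr\"onwall. The difference---and it is a genuine gap---is in how you handle $\E_{\rho_t\psi_t}[\|\nabla\log\nu(x_t)\|^2]$.

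You propose a ``R\'enyi analogue of \citet[Lemma 12]{VW19}'' yielding a bound of the form $CL^2 R_{q,\nu}(\rho_t)/\alpha + C'dL$, then apply LSI to the drift $-\tfrac{3}{4}q\,G/F$ separately, and finally absorb in $R$-space. But no such analogue in terms of $R_{q,\nu}$ is available. Integration by parts under $L$-smoothness (this is \citet[Lemma 16]{CEL+22}) gives only $\E_{\rho_t\psi_t}[\|\nabla\log\nu\|^2]\le J_\nu(\rho_t\psi_t)+2dL$, and a direct computation shows $J_\nu(\rho_t\psi_t)=q^2\,G_{q,\nu}(\rho_t)/F_{q,\nu}(\rho_t)$. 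LSI bounds $G/F$ \emph{below} by $R$, so you cannot convert this upper bound into one on $R_{q,\nu}$. Likewise, applying \citet[Lemma 12]{VW19} to the tilted measure gives $H_\nu(\rho_t\psi_t)$, which does not reduce to $R_{q,\nu}(\rho_t)$ (it involves higher-order R\'enyi quantities).

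The paper's route is the one that closes: keep the moment bound as $q^2\,G/F + 2dL$, combine with the drift to obtain a coefficient $(36L_s^2 t^2 q^3 - \tfrac{3}{4}q)$ on $G/F$, use the step-size constraint $t^2\le h^2\le \alpha^2/(144 L_s^2 q^2 L^2)$ to force this coefficient $\le -\tfrac{1}{2}q$, and \emph{only then} invoke LSI via \citet[Lemma 5]{VW19} (which gives $G/F\ge 2\alpha R/q^2$, not $4\alpha/q^2$ as you wrote) to reach $-\tfrac{\alpha}{q}R$. The order of operations---absorb in $G/F$-space before applying LSI---is what makes the argument work and is exactly what drives the stated constants.
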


\subsection{Proof of Lemma \ref{time-deriv-renyi}}
\begin{proof}[Proof of Lemma \ref{time-deriv-renyi}]
\begin{align*}
    \frac{\partial }{\partial t} R_{q, \nu}(\rho_{t}) &= \frac{1}{q-1} \frac{\int \frac{\frac{\partial}{\partial t} \rho_{t}^{q}}{\nu^{q-1}} dx}{F_{q, \nu}(\rho_{t})} \\
    &= \frac{q}{q-1} \frac{\int \left(\frac{\rho_{t}}{\nu}\right)^{q-1}\frac{\partial \rho_{t}}{\partial t}  dx}{F_{q, \nu}(\rho_{t})} \\
    &= \frac{q}{(q-1)F_{q, \nu}(\rho_{t})}\int \left(\frac{\rho_{t}}{\nu}\right)^{q-1}\frac{\partial \rho_{t}}{\partial t} dx \\
    &= \frac{q}{(q-1)F_{q, \nu}(\rho_{t})} \int \left(\frac{\rho_{t}}{\nu}\right)^{q-1} \left(-\nabla \cdot \left(\rho_{t} \E_{\rho_{0 \vert t}}[s(x_{0}) \vert x_{t} = x] \right) + \Delta \rho_{t} \right) dx \\
    &= \frac{q}{(q-1)F_{q, \nu}(\rho_{t})} \int -\rho_{t} \left\langle \nabla \left(\frac{\rho_{t}}{\nu}\right)^{q-1},  \nabla \log \frac{\rho_{t}}{\nu} - \E_{\rho_{0 \vert t}}[s(x_{0}) \vert x_{t} = x] + \nabla \log \nu \right\rangle dx \\
    &= \frac{q}{(q-1)F_{q, \nu}(\rho_{t})} \Big( \int -\rho_t \left\langle \nabla \left(\frac{\rho_{t}}{\nu}\right)^{q-1},  \nabla \log \frac{\rho_{t}}{\nu}\right\rangle dx \\
    &\qquad + \int \rho_{t} \left\langle \nabla \left(\frac{\rho_{t}}{\nu}\right)^{q-1}, \E_{\rho_{0 \vert t}}[s(x_{0}) \vert x_{t} = x] - \nabla \log \nu \right\rangle dx \Big) \\
    &= \frac{q}{(q-1)F_{q, \nu}(\rho_{t})} \Big( - \underbrace{\int  \nu \left\langle \nabla \left(\frac{\rho_{t}}{\nu}\right)^{q-1},  \nabla \frac{\rho_{t}}{\nu}\right\rangle dx}_{A_1} \\
    & \qquad + \underbrace{\int \rho_{t} \left\langle \nabla \left(\frac{\rho_{t}}{\nu}\right)^{q-1}, \E_{\rho_{0 \vert t}}[s(x_{0}) \vert x_{t} = x] - \nabla \log \nu \right\rangle dx}_{A_2}\Big)
\end{align*}
By noting that
\begin{align*}
    \left\langle \nabla \left(\frac{\rho_{t}}{\nu}\right)^{q-1},  \nabla \frac{\rho_{t}}{\nu}\right\rangle &= (q-1)\left\langle \left(\frac{\rho_{t}}{\nu}\right)^{q-2}\nabla \frac{\rho_{t}}{\nu},\nabla \frac{\rho_{t}}{\nu} \right\rangle \\
    &= (q-1)\left\langle \left(\frac{\rho_{t}}{\nu}\right)^{\frac{q-2}{2}}\nabla \frac{\rho_{t}}{\nu}, \left(\frac{\rho_{t}}{\nu}\right)^{\frac{q-2}{2}}\nabla \frac{\rho_{t}}{\nu} \right\rangle \\
    &= (q-1) \left\| \frac{2}{q} \nabla \left(\frac{\rho_{t}}{\nu} \right)^{\frac{q}{2}}\right\|^2 \\
    &= \frac{4(q-1)}{q^2} \left\| \nabla \left(\frac{\rho_{t}}{\nu} \right)^{\frac{q}{2}}\right\|^2,
\end{align*}
we obtain $A_1 = \dfrac{4(q-1)}{q^2} \E_{\nu}\left[ \left\| \nabla \left(\dfrac{\rho_{t}}{\nu} \right)^{\frac{q}{2}}\right\|^2\right].$

On the other hand, since $\nabla \left(\frac{\rho_{t}}{\nu} \right)^{q-1} = (q-1) \left(\frac{\rho_{t}}{\nu} \right)^{q-2} \nabla \frac{\rho_{t}}{\nu} = (q-1) \left(\frac{\rho_{t}}{\nu} \right)^{\frac{q-2}{2}} \left(\frac{\rho_{t}}{\nu} \right)^{\frac{q-2}{2}} \nabla \frac{\rho_{t}}{\nu} = (q-1) \left(\frac{\rho_{t}}{\nu} \right)^{\frac{q-2}{2}} \frac{2}{q} \nabla \left(\frac{\rho_{t}}{\nu} \right)^{q/2}$, we have
\begin{align*}
    A_2 &= \int \rho_t \left\langle \nabla \left(\frac{\rho_{t}}{\nu}\right)^{q-1}, \E_{\rho_{0 \vert t}}[s(x_0) \vert x_t = x] - \nabla \log \nu \right \rangle dx \\
    &= 2 \frac{q-1}{q} \E_{\rho_{0t}}\left[ \left(\frac{\rho_{t}}{\nu} \right)^{\frac{q-2}{2}} \left\langle \nabla \left(\frac{\rho_{t}}{\nu} \right)^{\frac{q}{2}} , s(x_0) - \nabla \log \nu(x_t) \right\rangle \right] \\
    &= 2 \frac{q-1}{q} \E_{\rho_{0t}}\left[ \big\langle \left(\frac{\rho_{t}}{\nu} \right)^{-\frac{1}{2}} \nabla \left(\frac{\rho_{t}}{\nu} \right)^{\frac{q}{2}} , \left(\frac{\rho_{t}}{\nu} \right)^{\frac{q-1}{2}} \left(s(x_0) - \nabla \log \nu(x_t) \right) \big\rangle \right].
\end{align*}
Applying $\langle x,y \rangle \le \frac{1}{2q} \| x \|^2 + \frac{q}{2} \| y \|^2$, we obtain
\begin{align*}
    & \big\langle \left(\frac{\rho_{t}}{\nu} \right)^{-\frac{1}{2}} \nabla \left(\frac{\rho_{t}}{\nu} \right)^{\frac{q}{2}} , \left(\frac{\rho_{t}}{\nu} \right)^{\frac{q-1}{2}} \left(s(x_0) - \nabla \log \nu(x_t) \right) \big\rangle \\
    \leq & \; \frac{1}{2q} \| \left(\frac{\rho_{t}}{\nu} \right)^{-\frac{1}{2}} \nabla \left(\frac{\rho_{t}}{\nu} \right)^{\frac{q}{2}} \|^2 + \frac{q}{2} \| \left(\frac{\rho_{t}}{\nu} \right)^{\frac{q-1}{2}} \left(s(x_0) - \nabla \log \nu (x_t)\right) \|^2 \\
    = & \;\frac{1}{2q} \frac{\nu}{\rho_t} \| \nabla \left( \frac{\rho_t}{\nu}\right)^{\frac{q}{2}} \|^2 + \frac{q}{2} \left(\frac{\rho_t}{\nu} \right)^{q-1} \| s(x_0) - \nabla \log \nu (x_t) \|^2.
\end{align*}
Therefore,
\begin{align*}
    A_2 &\leq 2 \frac{q-1}{q} \left( \frac{1}{2q} \E_{\nu}\left[\| \nabla \left( \frac{\rho_t}{\nu}\right)^{\frac{q}{2}} \|^2\right] + \frac{q}{2} \E_{\rho_{0t}} \left[ \left(\frac{\rho_t}{\nu} \right)^{q-1} \| s(x_0) - \nabla \log \nu (x_t) \|^2 \right] \right) \\
    &=  \frac{q-1}{q^2} \E_{\nu}\left[\| \nabla \left( \frac{\rho_t}{\nu}\right)^{\frac{q}{2}} \|^2\right] + (q-1) \E_{\rho_{0t}} \left[ \left(\frac{\rho_t}{\nu} \right)^{q-1} \| s(x_0) - \nabla \log \nu (x_t) \|^2 \right].
\end{align*}
Hence,
\begin{align*}
    \frac{\partial}{\partial t} R_{q, \nu}(\rho_t) &= \frac{q}{(q-1)F_{q, \nu}(\rho_t)} ( - A_1 + A_2 ) \\
    &\leq \frac{q}{(q-1)F_{q, \nu}(\rho_t)} \left( -\frac{3(q-1)}{q^2} \E_{\nu}\left[\| \nabla \left( \frac{\rho_t}{\nu}\right)^{\frac{q}{2}} \|^2\right] + (q-1) \E_{\rho_{0t}} \left[ \left(\frac{\rho_t}{\nu} \right)^{q-1} \| s(x_0) - \nabla \log \nu (x_t) \|^2 \right] \right) \\
    &= - \frac{1}{F_{q, \nu}(\rho_t)} \left( \frac{3}{q}\E_{\nu}\left[\| \nabla \left( \frac{\rho_t}{\nu}\right)^{\frac{q}{2}} \|^2\right] - q \E_{\rho_{0t}} \left[ \left(\frac{\rho_t}{\nu} \right)^{q-1} \| s(x_0) - \nabla \log \nu (x_t) \|^2 \right] \right).
\end{align*}
Let $\varphi_t = \frac{\rho_t}{\nu}$ and $\psi_t = \frac{\varphi_t^{q-1}}{\E_{\nu}[\varphi_t^q]} = \frac{\varphi_t^{q-1}}{F_{q, \nu}(\rho_t)}$. We obtain the desired bound
\[ \frac{\partial}{\partial t} R_{q, \nu}(\rho_t) \leq -\frac{3}{4} q \frac{G_{q, \nu}(\rho_t)}{F_{q, \nu}(\rho_t)} + q \E_{\rho_{0t}} \left[ \psi_t(x_t) \| s(x_0) - \nabla \log \nu (x_t) \|^2 \right].\]
\end{proof}

\subsection{Proof of Lemma \ref{lemma-one-step-renyi}}
\begin{proof}[Proof of Lemma \ref{lemma-one-step-renyi}] Following
Lemma \ref{time-deriv-renyi}, we have
\begin{align}
\frac{\partial}{\partial t} R_{q, \nu}(\rho_t) & \le -\frac{3}{4} q \frac{G_{q, \nu}(\rho_t)}{F_{q, \nu}(\rho_t)} + q \E_{\rho_{0t}} \left[ \psi_t(x_t) \| s(x_0) - \nabla \log \nu (x_t) \|^2 \right] \notag\\
&\le -\frac{3}{4} q \frac{G_{q, \nu}(\rho_t)}{F_{q, \nu}(\rho_t)} + 2q\underbrace{\E_{\rho_{0t}} \left[  \psi_t(x_t) \| s(x_0) - s(x_t) \|^2 \right]}_{A_3} + 2q \E_{\rho_{t}} \left[ \psi_t(x) \| s(x) - \nabla \log \nu(x) \|^2 \right]. \label{eq:lem8eq1}
\end{align}
And $A_3$ can be bounded as follows
\begin{align*}
    A_3 &\leq \E_{\rho_{0t}} \left[ \psi_t(x_t) \left( 18 L_s^2 t^2 \| s(x_t)  - \nabla \log \nu (x_t) \|^2 + 18 L_s^2 t^2 \| \nabla \log \nu(x_t) \|^2 + 6L_s^2 t \| z_0 \|^2 \right) \right] \qquad \text{\footnotesize by Lemma \ref{score-lipschitz}} \\
    &=18 L_s^2 t^2  \E_{\rho_{t}} \left[ \psi_t(x) \| s(x)  - \nabla \log \nu (x) \|^2 \right] + 18 L_s^2 t^2 \E_{\rho_{t}} \left[ \psi_t(x) \| \nabla \log \nu(x) \|^2 \right] + 6L_s^2 t d \\
    &=18 L_s^2 t^2  \E_{\rho_{t}\psi_t} \left[ \| s(x)  - \nabla \log \nu (x) \|^2 \right] + 18 L_s^2 t^2 \E_{\rho_{t}\psi_t} \left[ \| \nabla \log \nu(x) \|^2 \right]  + 6L_s^2 t d.
\end{align*}
So we have
\begin{align}
\label{eq:lem8eq2}
\E_{\rho_{0t}} \left[ \psi_t(x_t) \| s(x_0) - \nabla \log \nu (x_t) \|^2 \right] \le (36 L_s^2 t^2 +2)\error_{\infty}^2 + 36 L_s^2 t^2 \underbrace{\E_{\rho_{t}\psi_t} \left[ \| \nabla \log \nu(x) \|^2 \right]}_{A_4}  + 12L_s^2 t d.
\end{align}
By~\cite[Lemma 16]{CEL+22} under the assumption of $\nabla \log \nu$ being $L$-Lipschitz,
\begin{align}
    A_4 &\leq \E_{\rho_{t}\psi_t} \left[ \big\| \nabla \log \frac{\rho_t \psi_t }{\nu} \big\|^2 \right] + 2dL \notag  \\
    &= \E_{\rho_{t}\psi_t} \left[ \big\|  \frac{\nu}{\rho_t \psi_t } \nabla\frac{\rho_t \psi_t }{\nu} \big\|^2 \right] + 2dL \notag \\
    &= \E_{\rho_{t}\psi_t} \left[ \big\|  \frac{\nu}{\rho_t \psi_t } \frac{1}{F_{q,\nu}(\rho_t)}\nabla\varphi_t^q \big\|^2 \right] + 2dL \notag \\
    &= \int \frac{\nu^2}{\rho_t \psi_t F_{q, \nu}^2 (\rho_t)}  \big \| \nabla\varphi_t^q \big\|^2 dx + 2dL \notag \\
    &= \frac{\E_{\nu}\left[ \frac{1}{\varphi_t^q} \big \| \nabla\varphi_t^q \big\|^2\right]}{F_{q, \nu}(\rho_t)} + 2dL \notag \\
    &= \frac{ 4 \E_{\nu}\left[ \big \| \nabla\varphi_t^{\frac{q}{2}} \big\|^2\right]}{F_{q, \nu}(\rho_t)} + 2dL \qquad \text{\footnotesize by $\frac{1}{\varphi_t^q} \big \| \nabla\varphi_t^q \big\|^2 = 4\big \| \nabla\varphi_t^{\frac{q}{2}} \big\|^2$} \notag \\
    &= q^2 \frac{G_{q, \nu}(\rho_t)}{F_{q, \nu}(\rho_t)} + 2dL.\label{eq:lem8eq3}
\end{align}
Combing Eq.~\eqref{eq:lem8eq1}-~\eqref{eq:lem8eq3}, we have
\begin{align*}
    \frac{\partial}{\partial t} R_{q, \nu} (\rho_t) &\leq \left(36L_s^2 t^2 q^3 - \frac{3}{4}q \right) \frac{G_{q, \nu}(\rho_t)}{F_{q, \nu}(\rho_t)} + \left( 36L_s^2t^2 + 2\right) \error_{\infty}^2q + 72L_s^2 t^2 dLq + 12L_s^2 tdq \\
    &\leq - \frac{1}{2}q \frac{G_{q, \nu}(\rho_t)}{F_{q, \nu}(\rho_t)} + \left( 36L_s^2h^2 + 2\right) \error_{\infty}^2q + 72L_s^2  dLq h^2+ 12L_s^2 dqh \qquad \text{\footnotesize since $t^2 \leq h^2 \leq \frac{\alpha^2}{144L_s^2 q^2 L^2}$} \\
    &\leq - \frac{\alpha}{q} R_{q, \nu}(\rho_t) + \left( 36L_s^2h^2 + 2\right) \error_{\infty}^2q + 72L_s^2 dLq h^2 + 12L_s^2 dqh
\end{align*}
where the last inequality is from \cite[Lemma 5]{VW19} under the assumption of $\nu$ satisfying $\alpha$-LSI. It follows that 
\[  \frac{\partial}{\partial t} e^{\frac{\alpha}{q}t}R_{q, \nu} (\rho_t) \leq e^{\frac{\alpha}{q}t}\left( 72L_s^2 dLq  h^2 + 12L_s^2 dqh +\left( 36L_s^2h^2 + 2\right) \error_{\infty}^2q \right).\]
Integrating from $0$ to $h$, we have
\begin{align*}
    e^{\frac{\alpha}{q}h}R_{q, \nu} (\rho_h) &\leq R_{q, \nu} (\rho_0) + \frac{q (e^{\frac{\alpha}{q}h}-1)}{\alpha}\left(72L_s^2 dLq  h^2 + 12L_s^2 dqh +\left( 36L_s^2h^2 + 2\right) \error_{\infty}^2q \right) \\
    &\leq R_{q, \nu} (\rho_0) + 2h\left(72L_s^2 dLq  h^2 + 12L_s^2 dqh +\left( 36L_s^2h^2 + 2\right) \error_{\infty}^2q \right).
\end{align*}
The last inequality uses $e^c-1 \leq 2c$ for $c=\frac{\alpha}{q}h \in (0, 1)$. Rearranging and renaming $\rho_0 \equiv \rho_k, \rho_h \equiv \rho_{k+1}$, we obtain the desired recursive contraction
\[ R_{q, \nu} (\rho_{k+1}) \leq e^{-\frac{\alpha}{q}h} R_{q, \nu} (\rho_k) + 144 L_s^2 dLq  h^3 + 24L_s^2 dqh^2 +\left( 72L_s^2h^3 + 4h\right) \error_{\infty}^2q.\]
\end{proof}

\subsection{Proof of Theorem \ref{thm:renyi-max-error-bd}}
\label{sec:pf-renyi-max-error}
\begin{proof}[Proof of Theorem \ref{thm:renyi-max-error-bd}]
Applying Lemma \ref{lemma-one-step-renyi} $k$ times, we have
\begin{align*}
    R_{q, \nu} (\rho_{k}) &\leq e^{-\frac{\alpha hk}{q}} R_{q, \nu} (\rho_0) + \sum_{i=1}^{k-1}e^{-\frac{\alpha h}{q}i} \left(144 L_s^2 dLq  h^3 + 24L_s^2 dqh^2 +\left( 72L_s^2h^3 + 4h\right) \error_{\infty}^2q \right) \\
    &\leq e^{-\frac{\alpha hk}{q}} R_{q, \nu} (\rho_0) + \frac{1}{1-e^{-\frac{\alpha h}{q}}} \left(144 L_s^2 dLq  h^3 + 24L_s^2 dqh^2 +\left( 72L_s^2h^3 + 4h\right) \error_{\infty}^2q \right) \\
    &\overset{(i)}{\le} e^{-\frac{\alpha hk}{q}} R_{q, \nu} (\rho_0) + \frac{4q}{3\alpha h} \left(144 L_s^2 dLq  h^3 + 24L_s^2 dqh^2 +\left( 72L_s^2h^3 + 4h\right) \error_{\infty}^2q \right) \\
    &\leq e^{-\frac{\alpha hk}{q}}  R_{q, \nu}(\rho_{0}) + \frac{192 d L L_s^2 q^2}{ \alpha} h^2 + \frac{32 d L_s^2 q^2}{\alpha} h + \left(\frac{96 L_s^2 h^2 q^2}{\alpha} + \frac{16q^2}{3\alpha}\right) \error_{\infty}^2 \\
    &\le e^{-\frac{\alpha hk}{q}}  R_{q, \nu}(\rho_{0}) + \frac{16 d L_s q}{\alpha}(L + 2L_s q) h +\frac{q^2}{\alpha} \left( 96 L_s^2 h^2 + \frac{16}{3}\right) \error_{\infty}^2.
\end{align*}
where $(i)$ uses the inequality $1-e^{-c} \geq \frac{3}{4}c$ for $ 0 < c = \frac{\alpha h}{q} < \frac{1}{4}$, which holds since $h \le \frac{q}{4\alpha}$.
\end{proof}

\section{Convergence in R\'enyi divergence when the estimator is a score function}
\label{sec:renyi-est-scorefxc}

It is common for the estimator to actually be the score function of another distribution $\hat{\nu}$ which approximates $\nu$. Consequently, when employing the score estimator $s = s_{\hat \nu}$, ILD for $\nu$ becomes equivalent to running the exact Langevin dynamics for $\hat \nu$. Therefore, we can characterize the performance of ILD by considering the performance of Langevin dynamics for $\hat \nu$.

Suppose $\hat \nu$ satisfies LSI. By Fact~\ref{fact:renyi-langevin-conti} and Fact~\ref{fact:renyi-ula}, we know that R\'enyi divergence with respect to $\hat \nu$ converges exponentially fast for both Langevin dynamics and ULA targeting $\hat \nu$. Combining the two facts with the generalized triangle inequality for R\'enyi divergence, we obtain the following biased convergence rate of R\'enyi divergence with respect to $\nu$ for ILD and ILA.

\begin{proposition}[Convergence of R\'enyi divergence for ILD]
\label{prop:renyi-ild}
Suppose we estimate the score function of target distribution $\nu$ by that of another distribution $\hat{\nu}$, and $\hat{\nu}$ satisfies $\alpha$-LSI. Let $q \ge 1$. Assume $F_{2q-1, \nu}(\hat \nu) < \infty$. 
Then for $X_t \sim \rho_t$ along the ILD~\eqref{Eq:ILD} and $t \ge t_0 \coloneqq \frac{1}{2\alpha}\log (2q-1)$,
\[R_{q, \nu}(\rho_t) \le \frac{q-1/2}{q-1} e^{-\frac{\alpha (t-t_0)}{q}} R_{2, \hat \nu}(\rho_0) + R_{2q-1, \nu}(\hat \nu).\]
\end{proposition}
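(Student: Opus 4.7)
The plan is to exploit that the ILD~\eqref{Eq:ILD} with score estimator $s = s_{\hat\nu}$ is nothing but the exact Langevin dynamics targeting $\hat\nu$, so $\rho_t/\hat\nu = P_t(\rho_0/\hat\nu)$ where $P_t$ is the Langevin semigroup for $\hat\nu$. Since we get convergence of $\rho_t$ to $\hat\nu$ for free from Fact~\ref{fact:renyi-langevin-conti}, the only real task is to translate R\'enyi divergence with respect to $\hat\nu$ into R\'enyi divergence with respect to the true target $\nu$. I would do this by combining a Cauchy--Schwarz-type generalized triangle inequality with a hypercontractivity argument to lift the R\'enyi order from $2$ (which is what appears on the right-hand side) up to $2q$ (which is what the triangle inequality naturally produces at the current time).

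First, I would prove the generalized triangle inequality. Factoring $\rho^q/\nu^{q-1} = \hat\nu \cdot (\rho/\hat\nu)^q \cdot (\hat\nu/\nu)^{q-1}$ and applying Cauchy--Schwarz under $\hat\nu$ gives
\[
F_{q,\nu}(\rho) = \E_{\hat\nu}\!\left[(\rho/\hat\nu)^q\,(\hat\nu/\nu)^{q-1}\right] \le F_{2q,\hat\nu}(\rho)^{1/2}\,F_{2q-1,\nu}(\hat\nu)^{1/2}.
\]
Taking $\log$ and dividing by $q-1$ yields
\[
R_{q,\nu}(\rho) \le \tfrac{q-1/2}{q-1}\,R_{2q,\hat\nu}(\rho) + R_{2q-1,\nu}(\hat\nu),
\]
which is exactly the coefficient structure appearing in the proposition; the assumption $F_{2q-1,\nu}(\hat\nu)<\infty$ ensures this is meaningful.

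Next, I would bound $R_{2q,\hat\nu}(\rho_t)$ by $R_{2,\hat\nu}(\rho_0)$ times the stated exponential factor. Since $\hat\nu$ is $\alpha$-LSI, Gross's theorem gives that $P_t$ is hypercontractive from $L^2(\hat\nu)$ to $L^{q(t)}(\hat\nu)$ with $q(t)-1 \le e^{2\alpha t}$. Choosing $q(t_0) = 2q$ recovers exactly $t_0 = \tfrac{1}{2\alpha}\log(2q-1)$, so that $F_{2q,\hat\nu}(\rho_{t_0}) \le F_{2,\hat\nu}(\rho_0)^{q}$, i.e.\ $R_{2q,\hat\nu}(\rho_{t_0}) \le \tfrac{q}{2q-1}\,R_{2,\hat\nu}(\rho_0) \le R_{2,\hat\nu}(\rho_0)$ (using $q \ge 1$). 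Then for $t \ge t_0$, Fact~\ref{fact:renyi-langevin-conti} applied at order $2q$ between times $t_0$ and $t$ gives $R_{2q,\hat\nu}(\rho_t) \le e^{-\alpha(t-t_0)/q}\,R_{2,\hat\nu}(\rho_0)$. Substituting into the triangle inequality finishes the proof.

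The main obstacle is recognizing that Cauchy--Schwarz alone only relates $R_{q,\nu}(\rho_t)$ to $R_{2q,\hat\nu}(\rho_t)$, but the final bound must be stated in terms of $R_{2,\hat\nu}(\rho_0)$ because that is what is typically finite and controllable; bridging this gap requires hypercontractivity, and the specific value $t_0 = \tfrac{1}{2\alpha}\log(2q-1)$ is dictated precisely by Gross's equation $q(t)-1=e^{2\alpha t}$ starting from order $2$. Everything else is routine algebraic manipulation.
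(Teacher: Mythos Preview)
Your proposal is correct and follows essentially the same route as the paper's proof: the paper invokes \cite[Lemma~7]{VW19} for step~$(i)$ (the Cauchy--Schwarz generalized triangle inequality you derive explicitly) and \cite[Corollary~2]{VW19} for step~$(ii)$ (the hypercontractivity argument lifting order~$2$ to order~$2q$ at time $t_0$, followed by exponential decay), whereas you unpack both ingredients from scratch. The structure and constants line up exactly.
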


\begin{proof}[Proof of Proposition~\ref{prop:renyi-ild}]
\begin{align*}
    R_{q, \nu}(\rho_t) &\overset{(i)}{\le} \frac{q-1/2}{q-1} R_{2q, \hat{\nu}} (\rho_t) + R_{2q-1, \nu}(\hat{\nu}) \\
    &\overset{(ii)}{\le} \frac{q-1/2}{q-1} e^{-\frac{\alpha (t-t_0)}{q}} R_{2, \hat \nu}(\rho_0) + R_{2q-1, \nu}(\hat \nu),
\end{align*}
where $(i)$ is the decomposition of R\'enyi divergence~\cite[Lemma 7]{VW19} and $(ii)$ is from~\cite[Corollary 2]{VW19}.
\end{proof}

\begin{proposition}[Convergence of R\'enyi divergence for ILA]
\label{prop:renyi-ila}
Suppose we estimate the score function of target distribution $\nu$ by that of another distribution $\hat{\nu}$. Assume $\hat{\nu}$ is $\alpha$-LSI and $L$-smooth. Assume for simplicity that $\alpha \le 1 \le L$ and $q \ge 3$. If $F_{2q-1, \nu}(\hat \nu) < \infty$ and $\,0 < h < \frac{\alpha}{192q^2L^2}$, then after $k > K_0 = \frac{2}{\alpha h} \log \frac{q-1}{2}$ iterations of ILA~\eqref{Eq:ILA},
\[R_{q, \nu}(\rho_k) \le \frac{q-1/2}{q-1} \exp\left(-\frac{\alpha h (k-K_0)}{4}\right) R_{2, \hat \nu}(\rho_0) + R_{2q-1, \nu}(\hat \nu) + \tilde{O}\left( \frac{dhqL^2}{\alpha}\right).\]
\end{proposition}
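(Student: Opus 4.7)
The plan is to mimic the proof of Proposition~\ref{prop:renyi-ild}, substituting the continuous-time convergence (Fact~\ref{fact:renyi-langevin-conti}) with the discrete-time R\'enyi convergence of ULA under LSI (Fact~\ref{fact:renyi-ula}). The starting observation is that when the score estimator is taken to be $s = s_{\hat\nu} = \nabla \log \hat\nu$, the ILA update~\eqref{Eq:ILA} for $\nu$ is algorithmically identical to the ULA update~\eqref{Eq:ULA} targeting $\hat\nu$. So the iterates $\rho_k$ coincide with the ULA iterates for $\hat\nu$, which by assumption is $\alpha$-LSI and $L$-smooth.

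Next, I would invoke the weak triangle inequality (decomposition lemma) for R\'enyi divergence from Lemma 7 of \citet{VW19}:
\[
R_{q,\nu}(\rho_k) \le \frac{q-1/2}{q-1}\, R_{2q,\hat\nu}(\rho_k) + R_{2q-1,\nu}(\hat\nu).
\]
The second term is a constant in $k$ and is finite thanks to the hypothesis $F_{2q-1,\nu}(\hat\nu) < \infty$. The first term is then bounded by applying Fact~\ref{fact:renyi-ula} to the ULA iterates for $\hat\nu$ with R\'enyi parameter $2q$: for sufficiently small step size $h$ and $k \ge K_0$,
\[
R_{2q,\hat\nu}(\rho_k) \le \exp\!\Big(-\tfrac{\alpha h(k-K_0)}{4}\Big)\, R_{2,\hat\nu}(\rho_0) + \tilde{O}\!\left(\tfrac{dhqL^2}{\alpha}\right).
\]
Multiplying by $(q-1/2)/(q-1)$, absorbing that prefactor into the $\tilde O(\cdot)$ term, and adding the bias $R_{2q-1,\nu}(\hat\nu)$ recovers the stated bound.

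There is no substantive new obstacle beyond careful parameter bookkeeping: the step-size restriction and the burn-in $K_0$ stated in Proposition~\ref{prop:renyi-ila} must be chosen so that Fact~\ref{fact:renyi-ula} is legitimately applicable at order $2q$ (which needs $2q \ge 3$, hence the hypothesis $q \ge 3$), and one must verify that the hypotheses $\alpha$-LSI and $L$-smoothness of $\hat\nu$ together with $F_{2q-1,\nu}(\hat\nu) < \infty$ are precisely what is required to invoke the decomposition lemma and the ULA R\'enyi convergence simultaneously. The convergence rate $e^{-\alpha h(k-K_0)/4}$ in the final bound is inherited directly from Fact~\ref{fact:renyi-ula}, which (somewhat remarkably) is independent of the R\'enyi order, so passing from order $q$ to order $2q$ in the decomposition does not worsen the exponential rate.
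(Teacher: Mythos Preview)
Your proposal is correct and follows essentially the same approach as the paper: apply the R\'enyi decomposition lemma \citep[Lemma 7]{VW19} to split $R_{q,\nu}(\rho_k)$ into $\frac{q-1/2}{q-1}R_{2q,\hat\nu}(\rho_k)+R_{2q-1,\nu}(\hat\nu)$, then bound the first term using the ULA R\'enyi convergence result (Fact~\ref{fact:renyi-ula}, i.e.\ \citet[Theorem 4]{CEL+22}) applied to $\hat\nu$.
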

\begin{proof}[Proof of Proposition~\ref{prop:renyi-ila}]
\begin{align*}
    R_{q, \nu}(\rho_k) &\overset{(i)}{\le} \frac{q-1/2}{q-1} R_{2q, \hat{\nu}} (\rho_k) + R_{2q-1, \nu}(\hat{\nu}) \\
    &\overset{(ii)}{\le} \frac{q-1/2}{q-1} \exp\left(-\frac{\alpha h (k-K_0)}{4}\right) R_{2, \hat \nu}(\rho_0) + R_{2q-1, \nu}(\hat \nu) + \tilde{O}\left( \frac{dhqL^2}{\alpha}\right),
\end{align*}
again $(i)$ is from~\cite[Lemma 7]{VW19}, and $(ii)$ is from~ \cite[Theorem 4]{CEL+22}.
\end{proof}
\paragraph{Comparison with Theorem~\ref{thm:renyi-max-error-bd}.}Theorem~\ref{thm:renyi-max-error-bd} requires the target distribution to satisfy LSI and smoothness, and for the score estimator to be Lipschitz with finite $\varepsilon_\infty$ error.
Proposition~\ref{prop:renyi-ila} does not impose any structural assumptions on the target; instead, it assumes the score estimator is the score of an approximate distribution $\hat \nu$, which satisfies LSI and smoothness.

We provide a simple example for comparison: Let the target distribution be $\nu = \N(0, \Sigma)$ on $\R^d$ with $\alpha I_d \preceq \Sigma^{-1} \preceq LI_d$. Suppose we estimate its score $s_\nu(x) = -\Sigma^{-1}x$ by\ $\hat s(x) = -\Sigma^{-1}(x-m)$ for some $m \in \R^d$, which is the score of $\hat{\nu} = \N(m, \Sigma)$. Then the $L^\infty$ error is $\varepsilon_\infty = \|\Sigma^{-1}m\| \le L \|m\|$, consequently the asymptotic bias from Theorem~\ref{thm:renyi-max-error-bd} (as $k \to \infty$ and $h \to 0$) is $O(\frac{q^2L^2 \|m\|^2}{\alpha})$. 
On the other hand, the asymptotic bias from Proposition~\ref{prop:renyi-ila} is $R_{2q-1, \nu}(\hat\nu) = (q-\frac{1}{2}) \|m\|_{\Sigma^{-1}}^2 \le qL\|m\|^2$, which is smaller than $\frac{q^2L^2 \|m\|^2}{\alpha}$ since $q \ge 1$, $\alpha \le L$.

We also note that the assumptions in Proposition~\ref{prop:renyi-ila} might be more applicable than Theorem~\ref{thm:renyi-max-error-bd}.
Our current work does not address the question of finding an approximate distribution $\hat \nu$ that satisfies LSI and smoothness. This is an interesting statistical problem that can be approached e.g.\ via variational inference $\arg\min_{\hat\nu \in \{\text{LSI, smooth}\}} R_{q,\nu}(\hat{\nu})$. We leave a detailed study of this problem for future work.

\section{Proof of Theorem~\ref{thm:conv-ddpm}}
\label{sec:pf-conv-ddpm}

To establish Theorem~\ref{thm:conv-ddpm}, we begin by formulating Lemma~\ref{lemma:one-step-ddpm}. The proof of this lemma relies on three auxiliary results, which we present next.
\begin{lemma}
\label{lemma:continu-eq-bou-score}
The continuity equation for the interpolation SDE~\eqref{eq:interp-sde} is
\[\frac{\partial \rho_{t}}{\partial t} = - \nabla \cdot (\rho_{t} y) - 2 \nabla \cdot \left( \rho_t \E_{\rho_{0 \mid t}}[\hat{s}_{T_0}(Y_0) \mid Y_t = y]\right) + \Delta\rho_t.\]
\end{lemma}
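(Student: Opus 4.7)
The plan is to derive the continuity equation by conditioning on the initial point $Y_0$ of the interpolation SDE, since the drift $\alpha Y_t + 2\hat s_T(Y_0)$ has a frozen-at-$Y_0$ component that breaks Markovianity in $Y_t$ alone but is restored once we condition on $Y_0$.

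First, I would observe that conditional on $Y_0 = y_0$, the process
\[dY_t = (\alpha Y_t + 2\hat s_T(y_0))\, dt + \sqrt{2}\, dW_t, \qquad t \in [0,h]\]
is a standard Itô diffusion with a fixed (deterministic) drift $y \mapsto \alpha y + 2\hat s_T(y_0)$ and constant diffusion $\sqrt{2}I_d$. Therefore the conditional density $\rho_{t \mid 0}(y \mid y_0)$ satisfies the usual Fokker--Planck equation
\[\frac{\partial \rho_{t\mid 0}(y\mid y_0)}{\partial t} = -\nabla_y \cdot \Bigl(\rho_{t\mid 0}(y\mid y_0)\bigl(\alpha y + 2\hat s_T(y_0)\bigr)\Bigr) + \Delta_y \rho_{t\mid 0}(y\mid y_0).\]

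Next, I would recover the marginal law $\rho_t(y) = \int \rho_{t\mid 0}(y\mid y_0)\,\rho_0(y_0)\,dy_0$ by integrating both sides against $\rho_0(y_0)\,dy_0$ and exchanging the $\partial_t$ with the integral in $y_0$. The linear-in-$y$ drift contribution gives
\[\int -\nabla_y \cdot \bigl(\rho_{t\mid 0}(y\mid y_0)\,\alpha y\bigr)\,\rho_0(y_0)\,dy_0 = -\nabla_y \cdot (\rho_t(y)\,\alpha y),\]
since $\alpha y$ does not depend on $y_0$. The Laplacian term integrates to $\Delta_y \rho_t(y)$ in the same way.

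For the $\hat s_T(y_0)$ contribution, I would write $\rho_{t\mid 0}(y\mid y_0)\rho_0(y_0) = \rho_{0,t}(y_0,y)$ and pull the $\nabla_y$ out of the integral:
\[\int -\nabla_y \cdot \bigl(2\hat s_T(y_0)\,\rho_{t\mid 0}(y\mid y_0)\bigr)\,\rho_0(y_0)\,dy_0 = -2\nabla_y \cdot \left(\int \hat s_T(y_0)\,\rho_{0,t}(y_0,y)\,dy_0\right).\]
The inner integral equals $\rho_t(y)\,\E_{\rho_{0\mid t}}[\hat s_T(Y_0)\mid Y_t = y]$ by the definition of conditional expectation. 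Summing the three pieces yields the claimed identity.

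The main obstacle is not conceptual but technical: justifying the interchange of $\partial_t$ and $\nabla_y$ with the integral in $y_0$, which requires mild integrability/smoothness of $\rho_{t\mid 0}$ and its derivatives. Under the standing assumption that $\hat s_T$ is Lipschitz (Assumption \ref{assump:smoothscore}) the conditional density is Gaussian-like (an affine transformation of a shifted Gaussian semigroup), so the exchange is standard and I would invoke it without further comment, as is customary in the Fokker--Planck literature.
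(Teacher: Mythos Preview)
Your proof is correct and follows essentially the same approach as the paper: condition on $Y_0=y_0$, write the Fokker--Planck equation for the conditional density $\rho_{t\mid 0}$, integrate against $\rho_0(y_0)\,dy_0$, and identify the $\hat s_T(y_0)$ term as $\rho_t(y)\,\E_{\rho_{0\mid t}}[\hat s_T(Y_0)\mid Y_t=y]$. The only difference is that you add a brief discussion of the interchange of $\partial_t$ and $\nabla_y$ with the $y_0$-integral, which the paper omits.
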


\begin{lemma}
\label{lemma:score-lipschitz}
Assume the score estimator $\hat{s}_{t'}$ is $L_s$-Lipschitz for $0 \le t' \le T$. If $t \le \frac{1}{12L_s}$, then
\[\| \hat{s}_{t'}(y_t) - \hat{s}_{t'}(y_0) \|^2 \le 36 t^2 L_s^2 \| y_0 \|^2 + 36 L_s^2 t \| z \|^2 + 72 L_s^2 t^2 \| \hat{s}_{t'}(y_t) \|^2\]
where $y_t = e^{t} y_0 + 2(e^{t}-  1) \hat{s}_{t'}(y_0) + \sqrt{e^{2 t} - 1}\, z$ and $z \sim \N(0, I_d)$.
\end{lemma}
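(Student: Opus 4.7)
}
The plan is to follow the same strategy as in the proof of Lemma~\ref{score-lipschitz}, adapted to the DDPM update. First, by the $L_s$-Lipschitzness of $\hat s_{t'}$, it suffices to bound $\|y_t - y_0\|^2$. Writing the increment as
\[
y_t - y_0 \;=\; (e^{\alpha t}-1)\, y_0 \;+\; 2\frac{e^{\alpha t}-1}{\alpha}\, \hat s_{t'}(y_0) \;+\; \sqrt{\tfrac{e^{2\alpha t}-1}{\alpha}}\, z,
\]
I would apply the elementary inequality $\|a+b+c\|^2 \le 3(\|a\|^2+\|b\|^2+\|c\|^2)$, and then use the standard estimates $e^{\alpha t}-1 \le 2\alpha t$ and $e^{2\alpha t}-1 \le 4\alpha t$ for $\alpha t$ small enough (which is implied by $t \le 1/(12 L_s)$ together with the ambient scaling $\alpha \lesssim L_s$ coming from the hypotheses of Theorem~\ref{thm:conv-ddpm}). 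This yields an intermediate bound of the form
\[
\|y_t - y_0\|^2 \;\le\; 12 \alpha^2 t^2\, \|y_0\|^2 \;+\; C_1\, t^2\, \|\hat s_{t'}(y_0)\|^2 \;+\; 12 t\, \|z\|^2
\]
for an explicit constant $C_1$.

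The second step, and the main difference with Lemma~\ref{score-lipschitz}, is that the RHS of the target bound is stated in terms of $\|\hat s_{t'}(y_t)\|$, not $\|\hat s_{t'}(y_0)\|$. To swap the base point, I would use the triangle inequality together with Lipschitzness:
\[
\|\hat s_{t'}(y_0)\| \;\le\; \|\hat s_{t'}(y_t)\| + L_s\|y_t - y_0\|
\;\le\; \|\hat s_{t'}(y_t)\| + L_s\!\left[(e^{\alpha t}-1)\|y_0\| + 2\tfrac{e^{\alpha t}-1}{\alpha}\|\hat s_{t'}(y_0)\| + \sqrt{\tfrac{e^{2\alpha t}-1}{\alpha}}\|z\|\right].
\]
Because $t \le 1/(12 L_s)$, the coefficient $2 L_s (e^{\alpha t}-1)/\alpha$ of $\|\hat s_{t'}(y_0)\|$ is bounded by, say, $1/3$, so I can absorb that term into the left-hand side. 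Squaring and applying $(a+b+c)^2 \le 3(a^2+b^2+c^2)$ gives a bound of the form
\[
\|\hat s_{t'}(y_0)\|^2 \;\le\; C_2\, \|\hat s_{t'}(y_t)\|^2 \;+\; C_3 \alpha^2 t^2\, \|y_0\|^2 \;+\; C_4\, t\, \|z\|^2,
\]
analogous to Eq.~\eqref{eq:bound1}. Substituting this back into the bound on $\|y_t - y_0\|^2$ from the first step, and multiplying by $L_s^2$, collects the three terms in the claimed form.

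The only slightly delicate bookkeeping is choosing the constants so that the final coefficients match $36 \alpha^2 t^2 L_s^2$, $36 L_s^2 t$, and $72 L_s^2 t^2$: one has to track how the $(e^{\alpha t}-1)^2/\alpha^2 \le 4 t^2$ estimate combines with the $1/3$ absorption to keep the constant in front of $\|\hat s_{t'}(y_t)\|^2$ at $72$. I expect this to be the main (though routine) obstacle; there is no deep step, and the argument closely mirrors Lemma~\ref{score-lipschitz} with $\alpha$ introduced through the Ornstein--Uhlenbeck drift.
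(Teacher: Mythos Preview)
Your proposal is correct and follows essentially the same route as the paper's proof: use Lipschitzness to reduce to $\|y_t-y_0\|$, expand via the DDPM update with the estimates $e^{\alpha t}-1\le 2\alpha t$ and $e^{2\alpha t}-1\le 4\alpha t$, then swap $\hat s_{t'}(y_0)$ for $\hat s_{t'}(y_t)$ by the same absorption trick (the $4L_s t\le \tfrac13$ coefficient) and substitute back. The only cosmetic difference is that the paper works with unsquared norms throughout and squares once at the very end (using a weighted $(a+b+c)^2\le 4a^2+2b^2+4c^2$ to land exactly on $36,72,36$), whereas you square earlier; both are routine bookkeeping.
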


\begin{lemma}\label{lemma:deriv-kl-ddpm}
Suppose the assumptions in Theorem \ref{thm:conv-ddpm} hold. Let $\mu_t$ be the distribution at time $t$ along the true backward process~\eqref{eq:backwardOU} and $\rho_t$ be the distribution at time $t$ along the interpolation SDE~\eqref{eq:interp-sde}. 
If $0 < \step \le \frac{1}{96 L_s L}$, then 
\[\frac{d}{dt} H_{\mu_t}(\rho_{t}) \le  -\frac{\alpha_{T_0 - t}}{4} H_{\mu_{t}}(\rho_t) + \frac{65}{8} \error_{\mgf}^2 + 9 L_s (3 + 32 L_s)\, dt. \]
\end{lemma}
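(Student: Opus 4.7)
The plan is to extend the time-derivative KL computation of Lemma~\ref{lemma-time-deriv-kl} from a fixed to a moving reference measure. Starting from
\[
\frac{d}{dt}H_{\mu_t}(\rho_t) \;=\; \int (\partial_t \rho_t)\log\tfrac{\rho_t}{\mu_t}\,dy \;-\; \int \tfrac{\rho_t}{\mu_t}\,\partial_t\mu_t\,dy
\]
(the missing $\int \partial_t\rho_t\,dy$ vanishes), I would substitute the continuity equation for $\mu_t$ coming from the true backward SDE~\eqref{eq:backwardOU} and the one for $\rho_t$ supplied by Lemma~\ref{lemma:continu-eq-bou-score}. Two symmetric integrations by parts make the shared drift $\alpha y$ and the Laplacians combine into a single $-J_{\mu_t}(\rho_t)$, leaving
\[
\frac{d}{dt}H_{\mu_t}(\rho_t) \;=\; -J_{\mu_t}(\rho_t) \;+\; 2\,\E_{\rho_t}\!\Big[\big\langle \E_{\rho_{0\mid t}}[\hat s_T(Y_0)\mid Y_t] - s_{T-t}(Y_t),\;\nabla\log\tfrac{\rho_t}{\mu_t}\big\rangle\Big].
\]

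Next I would apply a weighted Young inequality to peel off a fraction $\beta J_{\mu_t}(\rho_t)$ from the cross term, and Jensen's inequality (tower property) to drop the inner conditional expectation, producing a bound of the form $-(1-\beta)J_{\mu_t}(\rho_t) + C_\beta\,\E_{\rho_{0t}}\!\big[\|\hat s_T(Y_0) - s_{T-t}(Y_t)\|^2\big]$. I would then decompose the pointwise error into three pieces: (A) $\hat s_T(Y_0)-\hat s_T(Y_t)$ (spatial discretization), (B) $\hat s_T(Y_t)-s_T(Y_t)$ (score-estimation error at the discrete grid time $T$), and (C) $s_T(Y_t)-s_{T-t}(Y_t)$ (time-continuity of the exact OU score). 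Piece (A) is handled by Lemma~\ref{lemma:score-lipschitz}; taking expectation and using $\E\|z\|^2 = d$, together with $\|\hat s_T(Y_t)\|^2 \lesssim L_s^2\|Y_t\|^2 + \|\hat s_T(0)\|^2$ and second-moment estimates on $Y_t\sim\rho_t$ (from sub-Gaussianity of $\mu_t$ under $\alpha$-LSI combined with the current $H_{\mu_t}(\rho_t)$), yields a term scaling like $L_s(\alpha + L_s)\,d\,t$, matching the shape $9L_s(3\alpha + 32 L_s)\,dt$ in the statement. Piece (C) is also $O(t)$ via a $t$-regularity estimate on $t\mapsto s_{T-t}$ along the forward OU flow, itself a consequence of $L$-smoothness of $\nu$ and the Mehler representation of the OU semigroup.

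For piece (B), I would invoke the Donsker--Varadhan variational formula to swap the expectation from $\rho_t$ to $\nu_T$ where the MGF assumption applies, picking up $\error_{\mgf}^2 + (1/r)\,H_{\nu_T}(\rho_t)$; since $\nu_T = \mu_0$ and $\mu_t$ differs from $\mu_0$ only by a short reverse-OU evolution, $H_{\nu_T}(\rho_t)$ is then realigned with $H_{\mu_t}(\rho_t)$ up to lower-order residuals absorbable by the leftover Fisher information. Finally, applying the $\alpha$-LSI of $\mu_t=\nu_{T-t}$ (preserved along OU) upgrades $-(1-\beta)J_{\mu_t}(\rho_t)$ to $-2\alpha(1-\beta)H_{\mu_t}(\rho_t)$; balancing $\beta$ against $1/r$ so that the residual coefficient on $H_{\mu_t}(\rho_t)$ equals $-\alpha/4$ forces the specific values $r=65/(6\alpha)$ and $\tfrac{65}{64}$ on $\error_{\mgf}^2$ stated in the lemma.

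\textbf{Main obstacle.} The delicate step is the combined handling of pieces (B) and (C): the MGF bound is only assumed on the discrete grid $\{k\step\}$, so the ``time-freezing'' of $\hat s_T$ within a step must be reconciled with the moving target $s_{T-t}$, and the intermediate KL $H_{\nu_T}(\rho_t)$ must be reconciled with $H_{\mu_t}(\rho_t)$ without disrupting the stable Lyapunov structure. The careful bookkeeping of the Young/Cauchy weights required to match the stated constants $\tfrac{65}{64}$ and $9L_s(3\alpha + 32L_s)$ exactly is the most error-prone aspect of the calculation.
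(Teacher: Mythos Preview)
Your derivation through
\[
\frac{d}{dt}H_{\mu_t}(\rho_t) = -J_{\mu_t}(\rho_t) + 2\,\E_{\rho_{0t}}\!\Big[\big\langle \hat s_T(y_0) - \nabla\log\mu_t(y_t),\ \nabla\log\tfrac{\rho_t}{\mu_t}(y_t)\big\rangle\Big]
\]
and the subsequent Young inequality matches the paper exactly. Where you diverge is in the error decomposition, and there the paper takes a markedly simpler route.

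The paper uses only a \emph{two}-piece split, $\hat s_T(y_0)-\hat s_T(y_t)$ and $\hat s_T(y_t)-\nabla\log\mu_t(y_t)$; it never separates your (B) from (C). Donsker--Varadhan is then applied with $\mu_t$ itself (not $\nu_T$) as the reference measure, so $H_{\mu_t}(\rho_t)$ appears directly and the ``realignment'' step you flag as the main obstacle never arises. No time-regularity estimate on $t\mapsto s_{T-t}$ is invoked anywhere.

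For the term $\|\hat s_T(y_t)\|^2$ coming out of Lemma~\ref{lemma:score-lipschitz}, the paper does not pass through $L_s^2\|Y_t\|^2+\|\hat s_T(0)\|^2$ and second-moment estimates on $\rho_t$; note that $\|\hat s_T(0)\|$ is not controlled by any hypothesis of Theorem~\ref{thm:conv-ddpm}, so that particular route has a real gap. Instead the paper writes $\|\hat s_T(y_t)\|^2 \le 2\|\hat s_T(y_t)-\nabla\log\mu_t(y_t)\|^2 + 2\|\nabla\log\mu_t(y_t)\|^2$: the first half is re-absorbed into the score-error piece already being handled by Donsker--Varadhan, and the second is bounded via \citet[Lemma~12]{VW19} as $\E_{\rho_t}[\|\nabla\log\mu_t\|^2]\le \tfrac{4L^2}{\alpha}H_{\mu_t}(\rho_t)+2dL$, using that $\mu_t=\nu_{T-t}$ is $\alpha$-LSI and $L$-smooth. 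Collecting the resulting multiples of $H_{\mu_t}(\rho_t)$ and invoking $t\le h\le \alpha/(96 L_s L)$ then yields the stated constants directly.

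In short, the complications you anticipate as the main obstacle---the realignment $H_{\nu_T}(\rho_t)\to H_{\mu_t}(\rho_t)$, the Mehler-based time regularity of the exact score, and the second-moment control on $\rho_t$---are artifacts of your three-way split and your choice of Donsker--Varadhan reference. The paper's two-piece split with DV against $\mu_t$, together with the $\|\hat s_T\|^2$ trick via \citet[Lemma~12]{VW19}, eliminates all of them.
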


\subsection{Proof of Lemma \ref{lemma:continu-eq-bou-score}}
\begin{proof}[Proof of Lemma \ref{lemma:continu-eq-bou-score}]
Conditioning on $y_0$, the Fokker-Planck equation for the conditional distribution $\rho_{t \mid 0}$ is
\[\frac{\partial \rho_{t \mid 0}(y_t \mid y_0)}{\partial t} = -\nabla \cdot \left(\rho_{t \mid 0}(y_t \mid y_0) (y_t + 2\hat{s}_{T_0}(y_0) )\right) + \Delta\rho_{t \mid 0}(y_t \mid y_0).\]
Therefore, we have
\begin{align*}
  \frac{\partial \rho_{t}(y_t)}{\partial t} &= \frac{\partial }{\partial t} \int \rho_{t \mid 0}(y_t \mid y_0) \rho_{0}(y_0) dy_0 \\
  &= \int \frac{\partial }{\partial t} \rho_{t \mid 0}(y_t \mid y_0) \rho_{0}(y_0) dy_0 \\
  &= \int \left[ -\nabla \cdot \left(\rho_{t \mid 0}(y_t \mid y_0) ( y_t + 2\hat{s}_{T_0}(y_0) )\right) + \Delta\rho_{t \mid 0}(y_t \mid y_0) \right] \rho_{0}(y_0) dy_0 \\
  &= \int  -\nabla \cdot \left(\rho_{t, 0}(y_t, y_0) ( y_t + 2\hat{s}_{T_0}(y_0) )\right) + \Delta\rho_{t, 0}(y_t, y_0) dy_0 \\
  &= \int  -\nabla \cdot \left(\rho_{t, 0}(y_t, y_0) y_t \right) dy_0 - \int  \nabla \cdot \left(2\rho_{t, 0}(y_t, y_0)\hat{s}_{T_0}(y_0) )\right) + \int \Delta\rho_{t, 0}(y_t, y_0) dy_0 \\
  &= -\nabla \cdot \left(\rho_{t}(y_t) y_t \right) - 2 \nabla \cdot \left( \rho_{t}(y) \E_{\rho_{0 \mid t}}[\hat{s}_{T_0}(y_0) \mid Y_t = y]\right) + \Delta \rho_t (y_t).
\end{align*}
\end{proof}

\subsection{Proof of Lemma \ref{lemma:score-lipschitz}}
\begin{proof}[Proof of Lemma \ref{lemma:score-lipschitz}]
\begin{align*}
    \| \hat{s}_{t'}(y_t) - \hat{s}_{t'}(y_0) \| &\le L_s \| y_t - y_0 \| \\
    &= L_s \left \|( e^{t} - 1)y_0 + 2(e^{ t}-1) \hat s_{t'}(y_0) + \sqrt{e^{2 t}-1} z \right\| \\
    &\le L_s ( e^{t}-1) \| y_0 \| + 2L_s (e^{t}-1) \| \hat s_{t'}(y_0) \| + L_s  \sqrt{e^{2 t}-1}\,\| z \| \\
    &\le 2L_s t \| y_0 \| + 4 L_s t \| \hat s_{t'}(y_0) \| + 2 L_s \sqrt{t} \| z \| \qquad \text{ since } e^{ t} - 1 \le 2  t.
\end{align*}
For the sake of subsequent analysis, we use a bound in terms of $\hat s_{t'}(y_t)$ rather than $\hat s_{t'}(y_0)$. Therefore, we use the following
\[\| \hat s_{t'}(y_0) \| - \| \hat s_{t'}(y_t) \| \le \| \hat s_{t'}(y_t) - \hat s_{t'}(y_0) \| \le L_s \| y_t - y_0 \|.\]
Then we can bound $\hat s_{t'}(y_0)$ as follows,
\begin{align*}
   \| \hat s_{t'}(y_0) \| &\le \| \hat s_{t'}(y_t) \| + L_s \| y_t - y_0 \| \\
   &\le \| \hat s_{t'}(y_t) \| + 2L_s t \| y_0 \| + 4 L_s t \| \hat s_{t'}(y_0) \| + 2 L_s \sqrt{t} \| z \| \\
   &\le \frac{1}{3}\| \hat s_{t'}(y_0) \| + \| \hat s_{t'}(y_t) \| + 2 L_s  t\| y_0 \| + 2 L_s  \sqrt{t} \| z \| \qquad \text{ since } t \le \frac{1}{12L_s}.
\end{align*}
Rearranging the above inequality, we have
\[ \| \hat s_{t'}(y_0) \| \le \frac{3}{2} \| \hat s_{t'}(y_t) \| +  3 L_s t \| y_0 \| + 3 L_s \sqrt{t} \| z \|.\]
Therefore,
\begin{align*}
    \| \hat{s}_{t'}(y_t) - \hat{s}_{t'}(y_0) \| &\le (2L_s t + 12L_s^2 t^2 ) \| y_0 \| + 6L_s t\| \hat s_{t'}(y_t) \| + (12L_s^2 t^{3/2} + 2L_s \sqrt{t})\| z \| \\
    &\le 3L_s t \| y_0 \| + 6L_s t\| \hat s_{t'}(y_t) \| + 3L_s \sqrt{t} \| z \| \qquad \text{since } t \le \frac{1}{12L_s}.
\end{align*}
Then we obtain the desired result,
\[ \| \hat{s}_{t'}(y_t) - \hat{s}_{t'}(y_0) \|^2 \le 36 t^2 L_s^2 \| y_0 \|^2 + 36 t L_s^2 \| z\|^2 + 72t^2L_s^2 \|\hat s_{t'}(y_t) \|.\]
\end{proof}

\subsection{Proof of Lemma \ref{lemma:deriv-kl-ddpm}}\label{sec:pf-lemma-deriv-kl-ddpm}
\begin{proof}[Proof of Lemma \ref{lemma:deriv-kl-ddpm}]
By Lemma \ref{lemma:continu-eq-bou-score}, we have
\[\frac{\partial \rho_{t}}{\partial t} = - \nabla \cdot (\rho_{t} y) - 2 \nabla \cdot \left( \rho_t \E_{\rho_{0 \mid t}}[\hat{s}_{T_0}(y_0) \mid Y_t = y]\right) + \Delta\rho_t.\]
On the other hand, the continuity equation of the true backward process~\eqref{eq:backwardOU} is
\[\frac{\partial \mu_t}{\partial t} = -\nabla \cdot (\mu_t y) - \Delta \mu_t.\]
It follows that
\begin{align*}
    \frac{\partial}{\partial t} H_{\mu_t}(\rho_t) &= \frac{d}{dt} \int \rho_t \log \frac{\rho_t}{\mu_t} dy \\
    &= \int \frac{\partial \rho_{t}}{\partial t} \log \frac{\rho_t}{\mu_t} dy + \int \mu_t \frac{\partial}{\partial t}\left( \frac{\rho_t}{\mu_t}\right) dy \\
    &= \int \frac{\partial \rho_{t}}{\partial t} \log \frac{\rho_t}{\mu_t} dy + \int \mu_t \left( \frac{1}{\mu_t} \frac{\partial \rho_t}{\partial t} - \frac{\rho_t}{\mu_t^2} \frac{\partial \mu_t}{\partial t}\right) dy \\
    &= \int \frac{\partial \rho_{t}}{\partial t} \log \frac{\rho_t}{\mu_t} dy + \int \frac{\partial \rho_t}{\partial t} dy - \int \frac{\rho_t}{\mu_t} \frac{\partial \mu_t}{\partial t} dy \\
    &= \int \frac{\partial \rho_{t}}{\partial t} \log \frac{\rho_t}{\mu_t} dy - \int \frac{\rho_t}{\mu_t} \frac{\partial \mu_t}{\partial t} dy \\
    &= \int \left[  - \nabla \cdot (\rho_{t} y) - 2 \nabla \cdot \left( \rho_t \E_{\rho_{0 \mid t}}[\hat{s}_{T_0}(y_0) \mid Y_t = y]\right) + \Delta\rho_t \right] \log \frac{\rho_t}{\mu_t} dy - \int \left[  -\nabla \cdot (\mu_t y) - \Delta \mu_t \right]  \frac{\rho_t}{\mu_t} dy \\
    &= \int \left[ \nabla \cdot (\mu_t y)\right] \frac{\rho_t}{\mu_t} dy - \int \left[ \nabla \cdot (\rho_t y)\right] \log \frac{\rho_t}{\mu_t} dy   + \int \Delta \rho_t \log \frac{\rho_t}{\mu_t} dy - \int \Delta \mu_t \frac{\rho_{t}}{\mu_{t}} dy \\
    & \qquad - 2\int \nabla \cdot \left( \rho_t \E_{\rho_{0 \mid t}}[\hat{s}_{T_0}(y_0) \mid Y_t = y]\right) \log \frac{\rho_{t}}{\mu_{t}} dy + 2 \int \Delta \mu_t \frac{\rho_{t}}{\mu_{t}} dy.
\end{align*}
By noting that
\begin{align*}
    &\int \left[ \nabla \cdot (\mu_t y)\right] \frac{\rho_t}{\mu_t} dy - \int \left[ \nabla \cdot (\rho_t y)\right] \log \frac{\rho_t}{\mu_t} dy  \\
    =& -\int \langle \mu_t y, \nabla \frac{\rho_t}{\mu_t} \rangle dy + \int \langle \rho_t y, \nabla \log \frac{\rho_t}{\mu_t} \rangle dy \qquad \text{by integration by parts} \\
    =& -\int \langle \rho_t y, \nabla \log \frac{\rho_t}{\mu_t}\rangle dy + \int \langle \rho_t y, \nabla \log \frac{\rho_t}{\mu_t} \rangle dy \\
    = & 0,
\end{align*}
\begin{align*}
    \int \Delta \rho_t \log \frac{\rho_t}{\mu_t} dy - \int \Delta \mu_t \frac{\rho_{t}}{\mu_{t}} dy &= -\int \langle \nabla \rho_t , \nabla \log \frac{\rho_t}{\mu_t} \rangle dy + \int \langle \nabla \mu_t , \nabla \frac{\rho_{t}}{\mu_{t}} \rangle dy \\
    &= -\int \langle \nabla \rho_t , \nabla \log \frac{\rho_t}{\mu_t} \rangle dy + \int \langle \nabla \mu_t , \frac{\rho_t}{\mu_t} \nabla \log \frac{\rho_t}{\mu_t}\rangle dy \\
    &= -\int \rho_t \langle \nabla \log \rho_t , \nabla \log \frac{\rho_t}{\mu_t} \rangle dy + \int \rho_t \langle \nabla \log \mu_t ,  \nabla \log \frac{\rho_t}{\mu_t}\rangle dy \\
    &= -\int \rho_t \| \nabla \log \frac{\rho_t}{\mu_t} \|^2 dy\\
    &= -J_{\mu_{t}}(\rho_t),
\end{align*}
and
\[ 2 \int \Delta \mu_t \frac{\rho_{t}}{\mu_{t}} dy = -2\int \langle \nabla \mu_t, \nabla \frac{\rho_{t}}{\mu_{t}} \rangle dy = -2 \int \rho_t \langle \nabla \log \mu_t, \log \frac{\rho_t}{\mu_t} \rangle dy,\]
we obtain
\begin{align*}
    \frac{\partial}{\partial t}H_{\mu_{t}}(\rho_{t}) &= -J_{\mu_{t}}(\rho_t) + 2\int \rho_t \langle  \E_{\rho_{0 \mid t}}[\hat{s}_{T_0}(y_0) \mid Y_t = y] -\nabla \log \mu_t, \nabla \log \frac{\rho_{t}}{\mu_{t}}\rangle dy \\
    &= -J_{\mu_{t}}(\rho_t) + 2\E_{\rho_{0t}}\left[ \langle  \hat{s}_{T_0}(y_0) -\nabla \log \mu_t(y_t), \nabla \log \frac{\rho_{t}(y_t)}{\mu_{t}(y_t)}\rangle \right] \qquad \text{by renaming $y$ as $y_t$} \\
    &\le -J_{\mu_{t}}(\rho_t) + 4 \E_{\rho_{0t}}\left[ \| \hat{s}_{T_0}(y_0) -\nabla \log \mu_t(y_t) \|^2 \right] + \frac{1}{4}\E_{\rho_{t}}\left[ \| \nabla \log \frac{\rho_{t}}{\mu_{t}}\|^2 \right] \\
    &= -\frac{3}{4}J_{\mu_{t}}(\rho_t) + 4 \E_{\rho_{0t}}\left[ \| \hat{s}_{T_0}(y_0) -\nabla \log \mu_t(y_t) \|^2 \right] \\
    &= -\frac{3}{4}J_{\mu_{t}}(\rho_t) + 4 \E_{\rho_{0t}}\left[ \| \hat{s}_{T_0}(y_0) - \hat{s}_{T_0}(y_t) + \hat{s}_{T_0}(y_t) -\nabla \log \mu_t(y_t) \|^2 \right] \\
    &\le -\frac{3}{4}J_{\mu_{t}}(\rho_t) + 8 \E_{\rho_{0t}}\left[ \| \hat{s}_{T_0}(y_0) - \hat{s}_{T_0}(y_t)\|^2 \right] + 8 \E_{\rho_{t}}\left[ \| \hat{s}_{T_0}(y_t) -\nabla \log \mu_t(y_t) \|^2 \right]
\end{align*}
By Lemma \ref{lemma:score-lipschitz}, we can bound the second term above as follows
\begin{align*}
    \E_{\rho_{0t}}\left[ \| \hat{s}_{T_0}(y_0) - \hat{s}_{T_0}(y_t)\|^2 \right] &\le  \E_{\rho_{0t}}\left[36 t^2 L_s^2 \| y_0 \|^2 + 36 L_s^2 t \| z \|^2 + 72 L_s^2 t^2 \| \hat{s}_{T_0}(y_t) \|^2 \right] \\
    &= 36 t^2 L_s^2 d +  36 L_s^2 td +  72 L_s^2 t^2 \E_{\rho_{t}}\left[\| \hat{s}_{T_0}(y_t) \|^2 \right] \\
    &\le 36
    t^2 L_s^2 d +  36 L_s^2 td +  144 L_s^2 t^2 \big( \E_{\rho_{t}}\left[\| \hat{s}_{T_0}(y) - \nabla \log \mu_t(y) \|^2 \right] \\
    &\qquad + \E_{\rho_{t}}\left[\| \nabla \log \mu_t(y) \|^2 \right] \big).
\end{align*}
Since $\mu_t = \nu_{T_0-t}$ is $L$-smooth, by \cite[Lemma 16]{CEL+22} we have
\[\E_{\rho_{t}}\left[\| \nabla \log \mu_t(y) \|^2 \right] \le J_{\mu_{t}}(\rho_{t}) + 2dL.\]
Then we apply the Donsker-Varadhan variational characterizations of KL divergence $\E_{P}[f(x)] \leq \log \E_{Q}e^{f(x)} + H_Q(P)$ to perform a change of measure,
\begin{align*}
  \E_{\rho_{t}}\left[\| \hat{s}_{T_0} - \nabla \log \mu_t \|^2 \right] &\le \frac{6 \alpha_{T_0 - t}}{65} \log \E_{\mu_{t}}[\exp(\frac{65}{6\alpha_{T_0 - t}}\|\hat s_{t} - \nabla \log \mu_t \|^2)] + \frac{6 \alpha_{T_0 - t}}{65} H_{\mu_{t}}(\rho_t) \\
  &\le \error_{\mgf}^2 + \frac{6 \alpha_{T_0 - t}}{65} H_{\mu_{t}}(\rho_t).
\end{align*}
Putting everything together and using $t^2 \le \step^2 \le 1/96^2 L_s^2 L^2$, we obtain the desired result
\[ \frac{\partial}{\partial t} H_{\mu_{t}}(\rho_t) \le  -\frac{\alpha_{T_0 - t}}{2} H_{\mu_{t}}(\rho_t) + \frac{65}{8} \error_{\mgf}^2 + 9 L_s (3 + 32 L_s)\, dt. \]
\end{proof}

\subsection{Proof of Lemma \ref{lemma:one-step-ddpm}}
\label{sec:pf-lemma-one-step-ddpm}
\begin{proof}[Proof of Lemma \ref{lemma:one-step-ddpm}]
Let $A_t \coloneqq \int_0^t \alpha_{T_0 - s} ds = \int_0^t \frac{\alpha}{\alpha + (1-\alpha) e^{-2(T_0 - s)}} ds$. Then $\dot A_t = \alpha_{T_0 - t}$.
Following Lemma \ref{lemma:deriv-kl-ddpm}, we have
\[\frac{\partial}{\partial t}e^{\frac{A_t}{2}} H_{\mu_{t}}(\rho_{t}) \le e^{\frac{A_t}{2}} \left( \frac{65}{8} \error_{\mgf}^2 + 9 L_s (3 + 32 L_s)\, dt \right).\]
Integrating from $0$ to $\step$ yields
\begin{align*}
    H_{\mu_{h}}(\rho_{h}) \le e^{-\frac{A_\step}{2}}  H_{\mu_{0}}(\rho_{0}) + \int_0^\step e^{\frac{A_t - A_\step}{2}} \left( \frac{65}{8} \error_{\mgf}^2 + 9 L_s (3 + 32 L_s)\, dt \right) \textnormal{d} t 
\end{align*}
Note that $A_t =\frac{1}{2} \log (\alpha e^{2T_0} + 1 -\alpha) - \frac{1}{2} \log(\alpha e^{2(T_0-t)} + 1 -\alpha)$, thus $A_t - A_\step \le 0$ for $t \le \step$. Therefore,
\begin{align*}
     H_{\mu_{h}}(\rho_{h}) &\le e^{-\frac{A_\step}{2}}  H_{\mu_{0}}(\rho_{0}) + \int_0^\step \left( \frac{65}{8} \error_{\mgf}^2 + 9 L_s (3 + 32 L_s)\, dt \right)\textnormal{d} t \\
     &= \left(\frac{\alpha e^{2T_1} + 1-\alpha}{\alpha e^{2T_0} + 1-\alpha}\right)^{1/4}  H_{\mu_{0}}(\rho_{0}) + \frac{65}{8} \error_{\mgf}^2 h+ \frac{9}{2} L_s (3 + 32 L_s)\, d\step^2
\end{align*}
Renaming $\rho_0 = \rho_k$, $\rho_h = \rho_{k+1}$, $\mu_0 = \mu_k$ and $\mu_h = \mu_{k+1}$, we get the desired contraction
\[H_{\mu_{k+1}}(\rho_{k+1}) \le \left(\frac{\alpha e^{2T_{k+1}} + 1-\alpha}{\alpha e^{2T_k} + 1-\alpha}\right)^{1/4}  H_{\mu_{k}}(\rho_{k}) + \frac{65}{8} \error_{\mgf}^2 h+ \frac{9}{2} L_s (3 + 32 L_s)\, d\step^2.\]
\end{proof}

\subsection{Proof of Lemma \ref{lem:conv-ou}}
\label{sec:pf-lsi-ou}
Since we are measuring KL divergence with $\nu_T$ (instead of $\gamma$), we need a control on how the LSI constant evolves along the OU process, as stated in the following lemma. 
\begin{lemma}\label{lemma:lsi-ou}
Let $X_0 \sim \nu_0 = \nu$ where $\nu$ is $\alpha$-LSI ($\alpha > 0$) evolve along the following OU process targeting $\N(0, \beta^{-1}I_d)$:
\begin{align}\label{eq:ou-sde}
dX_t = -\beta X_t dt + \sqrt{2} dW_t.
\end{align}
At time $t$, $X_t \sim \nu_t$ where $\nu_t$ is $\alpha_t$-LSI and $\alpha_t = \frac{\alpha \beta}{\alpha + (\beta - \alpha) e^{-2\beta t}}$. In particular, if $\beta = \alpha$, then $\alpha_t = \alpha$. 
\end{lemma}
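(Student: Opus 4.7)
The plan is to identify $\nu_t$ as the convolution of a rescaled copy of $\nu$ and a Gaussian, and then combine the behavior of the LSI constant under rescaling with the behavior under convolution. Specifically, solving the linear SDE~\eqref{eq:ou-sde} explicitly gives
\[ X_t = e^{-\beta t} X_0 + \sqrt{\tfrac{1 - e^{-2\beta t}}{\beta}}\, Z, \qquad Z \sim \N(0, I_d) \text{ independent of } X_0,\]
so $\nu_t = (T_{e^{-\beta t}})_{\#}\nu \,\ast\, \N(0, \sigma_t^2 I_d)$ where $T_c(x) = cx$ and $\sigma_t^2 = (1 - e^{-2\beta t})/\beta$.

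Next I would invoke two standard properties of the LSI constant. First, if $\mu$ is $a$-LSI and $c > 0$, then the pushforward $(T_c)_{\#}\mu$ is $(a/c^2)$-LSI, which follows by a change-of-variables in the defining inequality $H_\mu(\rho) \le \tfrac{1}{2a} J_\mu(\rho)$ (the Fisher information scales as $1/c^2$ while KL is invariant). Applying this with $c = e^{-\beta t}$ shows that $(T_{e^{-\beta t}})_{\#}\nu$ is $\alpha e^{2\beta t}$-LSI. Second, $\N(0, \sigma_t^2 I_d)$ is $(1/\sigma_t^2)$-LSI, i.e. $\beta/(1 - e^{-2\beta t})$-LSI. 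Third, the convolution property of LSI (Chafa\"i's stability result, as in e.g. Bakry–Gentil–Ledoux) states that if $\mu_1$ is $a_1$-LSI and $\mu_2$ is $a_2$-LSI, then $\mu_1 \ast \mu_2$ is $a$-LSI with $1/a \le 1/a_1 + 1/a_2$.

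Combining these, $\nu_t$ is $\alpha_t$-LSI with
\[ \frac{1}{\alpha_t} \le \frac{e^{-2\beta t}}{\alpha} + \frac{1 - e^{-2\beta t}}{\beta} = \frac{\beta e^{-2\beta t} + \alpha(1 - e^{-2\beta t})}{\alpha \beta} = \frac{\alpha + (\beta - \alpha)e^{-2\beta t}}{\alpha \beta},\]
which upon inversion gives exactly the claimed formula $\alpha_t = \alpha\beta / (\alpha + (\beta-\alpha) e^{-2\beta t})$. The special case $\beta = \alpha$ yields $\alpha_t = \alpha$ immediately.

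The main obstacle is conceptual rather than computational: the lemma relies on the convolution-stability of LSI, which is a nontrivial but well-known fact. Everything else is an explicit SDE solution and bookkeeping of constants, so the proof should be short once those ingredients are invoked.
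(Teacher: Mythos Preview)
Your proposal is correct and follows essentially the same approach as the paper: solve the OU SDE explicitly to write $\nu_t$ as a rescaled copy of $\nu$ convolved with a Gaussian, then combine the scaling and convolution stability of LSI to obtain the stated constant. The paper cites \cite[Lemmas 16, 17]{VW19} for the scaling and convolution facts where you invoke them directly (Chafa\"i / Bakry--Gentil--Ledoux), but the argument is the same.
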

\begin{proof}[Proof of Lemma \ref{lemma:lsi-ou}]
Eq.~\eqref{eq:ou-sde} is equivalent to $ d(e^{\beta t} X_t) = \sqrt{2} e^{\beta t} dt$, therefore we have
\[X_t \overset{d}{=} e^{-\beta t} X_0 + \sqrt{\frac{1-e^{-2\beta t}}{\beta}} Z,\]
where $Z$ is a standard Gaussian in $\R^d$. 
By \cite[Lemma 16, 17]{VW19}, the distribution of $e^{-\beta t} X_0$ satisfies LSI with constant $\alpha e^{2\beta t}$ and the LSI constant of $\nu_t$ is \[\left( \alpha^{-1} e^{-2 \beta t} + \frac{1-e^{-2\beta t}}{\beta}\right)^{-1} = \frac{\alpha \beta }{\alpha + (\beta - \alpha) e^{-2\beta t}} \in (\min(\alpha, \beta), \max(\alpha, \beta)).\]
In particular, when $\beta = \alpha$, we have $\alpha_t \equiv \alpha$. 
\end{proof}

\begin{proof}[Proof of Lemma \ref{lem:conv-ou}]
\begin{align*}
    \frac{d}{dt} H_{\nu_t} (\gamma) &= -\int \frac{\gamma}{\nu_t} \frac{\partial \nu_t}{\partial t} dx \\
    &= -\int \frac{\gamma}{\nu_t} \;\nabla \cdot (\nu_t \nabla \log \frac{\nu_t}{\gamma}) dx \\
    &= \int \langle \nabla \frac{\gamma}{\nu_t}, \;\nu_t \nabla \log \frac{\nu_t}{\gamma} \rangle \qquad \text{by integration by parts} \\
    &=  \int \langle \frac{\gamma}{\nu_t} \nabla \log \frac{\gamma}{\nu_t}, \;\nu_t \nabla \log \frac{\nu_t}{\gamma} \rangle \\
    &=  \int \gamma \langle \nabla \log \frac{\gamma}{\nu_t}, \; \nabla \log \frac{\nu_t}{\gamma} \rangle \\
    &= -\mathbb{E}_{\gamma}[\| \nabla \log \frac{\gamma}{\nu_t} \|^2] \\
    &= - J_{\nu_t}(\gamma) \\
    &\le -2 \alpha_t H_{\nu_t} (\gamma) \qquad \text{since $\nu_t$ is $\alpha_t$-LSI by Lemma \ref{lemma:lsi-ou}}. 
\end{align*}
This is equivalent to 
\begin{align*}
&\frac{d}{dt} \log H_{\nu_t} (\gamma) \le -2\alpha_t \\
\implies & \log  H_{\nu_T} (\gamma) - \log  H_{\nu_0} (\gamma) \le -2 \int_0^T \frac{\alpha}{\alpha + (1-\alpha) e^{-2t}} \textnormal{d}t = -\log (\alpha e^{2T} + 1-\alpha)
\end{align*}
Therefore, we obtain
\[H_{\nu_T} (\gamma) \le \frac{H_{\nu_0}(\gamma)}{\alpha e^{2T} + 1-\alpha}.\]
\end{proof}

\subsection{Proof of Theorem \ref{thm:conv-ddpm}}
\label{sec:pf-conv-ddpm2}
\begin{proof}[Proof of Theorem \ref{thm:conv-ddpm}]
Let $B_{k} = (\alpha e^{2T_k} + 1 -\alpha)^{1/4}$ for $k=0, 1, \cdots, K$.
Applying the recursion in Lemma \ref{lemma:one-step-ddpm} $K$ times, we obtain
\begin{align}
\label{eq:convergence-backward-diffusion}
  H_{\mu_{K}}(\rho_{K}) &\le \frac{B_K}{B_0}  H_{\mu_{0}}(\rho_{0}) +\sum_{i=0}^{K}\frac{B_K}{B_i} \left(  \frac{65}{8} \error_{\mgf}^2 h+ \frac{9}{2} L_s (3 + 32 L_s)\, d\step^2\right).
\end{align}
Recall that $B_K = 1$ and $T_i = T-ih = (K-i)h$,
\begin{align*}
    \sum_{i=0}^{K}\frac{B_K}{B_i} = \sum_{i=0}^{K} \frac{1}{(\alpha e^{2T_i} + 1 -\alpha)^{1/4}}.
\end{align*}
By noting that $(\alpha e^{2ih} + 1 -\alpha)^{-1/4} \le \min(\alpha^{-1/4} e^{-ih/2}, (1-\alpha)^{-1/4})$ and $\alpha^{-1/4} e^{-ih/2} \le (1-\alpha)^{-1/4} \iff i \ge \frac{1}{2h} \log \frac{1-\alpha}{\alpha}$, and letting $I =\frac{1}{2h} \log \frac{1-\alpha}{\alpha} $, the summation can be bounded as follows:
\begin{align*}
    \sum_{i=0}^K \frac{1}{B_i} &\le \sum_{i=0}^{I-1} (1-\alpha)^{-1/4} + \sum_{i=I}^K \alpha^{-1/4} e^{-ih/2} \\
    &\le I (1-\alpha)^{-1/4} + \alpha^{-1/4} \frac{e^{-Jh/2}}{1-e^{-h/2}} \\
    &\le \frac{1}{2h (1-\alpha)^{1/4}} \log \frac{1}{\alpha} + \frac{8}{3h (1-\alpha)^{1/4}}.
\end{align*}
Since $\alpha < 1/2$, then
$$\sum_{i=0}^K \frac{1}{B_i} \lesssim \frac{1}{h} \log \frac{1}{\alpha}.$$

Since $1-e^{-c} \ge \frac{3}{4}c$ for $0 < c= \frac{\step }{2} \le \frac{1}{4}$ which is satisfied, we have
\begin{align}
\label{eq:bound-kl-ddpm}
    H_{\mu_{K}}(\rho_{K}) &\lesssim \alpha^{-1/4} e^{-\frac{Kh}{2}} H_{\mu_{0}}(\rho_{0}) + \left( \error_{\mgf}^2 + L_s^2\, d\step\right)\log \frac{1}{\alpha}.
\end{align}
By Lemma~\ref{lem:conv-ou}, the following holds
\begin{align}
\label{eq:convergence-forward-process}
H_{\mu_{0}}(\rho_{0}) = H_{\nu_T} (\gamma) \le \frac{H_{\nu_0}(\gamma)}{\alpha e^{2T} + 1-\alpha} \le \frac{e^{-2Kh}}{\alpha} H_{\nu}(\gamma).   
\end{align}
Plugging it into~\eqref{eq:bound-kl-ddpm} gives us the desired bound
\[H_{\nu}(\rho_{K}) \lesssim \alpha^{-5/4} e^{-\frac{5Kh}{2}} H_{\nu}(\gamma)+ \left(\error_{\mgf}^2 + L_s^2\, d\step\right)\log \frac{1}{\alpha}.\]

\end{proof}

\section{Proof of Lemma~\ref{lem-subgau-kde}}

We use the following alternative definition of sub-Gaussian to prove Lemma \ref{lem-subgau-kde}. 
\begin{lemma}[Theorem 2.6 in \cite{Wainwright2019}]
If $\rho$ is $\sigma$-sub-Gaussian, then for all $0 \le \lambda < 1$:
\[\mathbb{E}_{\rho}\left[\exp\left(\frac{\lambda \|X\|^2}{2\sigma^2}\right)\right] \le \frac{1}{(1-\lambda)^{1/2}}.\]
\end{lemma}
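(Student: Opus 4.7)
The plan is to first establish a pointwise bound $\|s_\eta(x)-s(x)\|^2 \lesssim L^2\bigl(\eta d + \eta^2\|s(x)\|^2\bigr)$, then convert $\|s(x)\|^2$ to $\|x\|^2$ via Lipschitzness of $s$, and finally invoke the cited sub-Gaussian quadratic-MGF inequality to control the $\|X\|^2$-integral under $\rho$.

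For the pointwise bound, consider the coupling $X = Z+Y$ with $Z\sim\rho$ and $Y\sim\N(0,\eta I_d)$ independent, so $X\sim\rho_\eta$ and the posterior $q_x$ of $Y$ given $X=x$ has density $\propto \rho(x-y)\,e^{-\|y\|^2/(2\eta)}$. Tweedie's formula gives both $s_\eta(x) = \E_{q_x}[s(x-Y)]$ and $\E_{q_x}[Y] = -\eta\,s_\eta(x)$; combined with $L$-Lipschitzness of $s$ and Jensen, the first identity gives $\|s_\eta(x)-s(x)\|^2 \le L^2\,\E_{q_x}[\|Y\|^2]$. The key step is to bound $\E_{q_x}[\|Y\|^2]$ \emph{linearly} in $d$. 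Since $\nabla_y^2(-\log q_x) = -\nabla^2\log\rho(x-y) + \eta^{-1}I \succeq (\eta^{-1}-L)I$ by $L$-smoothness, for $\eta L \le \tfrac{1}{2}$ the posterior is $\Omega(1/\eta)$-strongly log-concave, so the Brascamp-Lieb inequality yields $\mathrm{tr}\,\mathrm{Cov}_{q_x}[Y] \lesssim \eta d$. Together with $\|\E_{q_x}[Y]\|^2 = \eta^2\|s_\eta(x)\|^2$, this gives $\E_{q_x}[\|Y\|^2] \lesssim \eta d + \eta^2\|s_\eta(x)\|^2$, and a standard self-referential absorption using $\|s_\eta\|^2 \le 2\|s\|^2 + 2\|s_\eta-s\|^2$ (valid while $L\eta$ is small) produces the advertised pointwise bound.

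From there, $L$-Lipschitzness of $s$ yields $\|s(x)\|^2 \le 2\|s(0)\|^2 + 2L^2\|x\|^2$, and the hypothesis $\eta \le d/\|s(0)\|^2$ absorbs $\eta^2\|s(0)\|^2$ into $\eta d$. Hence $\|s_\eta(x)-s(x)\|^2 \lesssim L^2\eta d + L^4\eta^2\|x\|^2$ pointwise, so
\[\E_\rho\bigl[e^{r\|s_\eta-s\|^2}\bigr] \;\le\; e^{C r L^2\eta d}\cdot \E_\rho\bigl[e^{C r L^4\eta^2\|X\|^2}\bigr].\]
The second factor is handled by the $\sigma$-sub-Gaussian quadratic-MGF bound cited right after the lemma, applied with $\lambda \asymp \sigma^2 r L^4\eta^2$. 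The hypothesis $\eta \le 1/(2\sqrt{2}\,\sigma\sqrt{r}\,L^2)$ is precisely the constraint that keeps $\lambda$ bounded below $1$, giving a $\log$ of this factor of order $\sigma^2 r L^4\eta^2$. Dividing through by $r$ and combining the two contributions yields $\error_{\mgf}^2 \lesssim \eta L^2 d + \sigma^2\eta^2 L^4 = \eta L^2(d + \sigma^2\eta L^2)$.

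The main obstacle I expect is the linear-in-$d$ pointwise estimate: a naive Taylor-plus-Gaussian approach---upper bounding the numerator $\int\|y\|^2\rho(x-y)\N(y;0,\eta I_d)\,dy$ by substituting $\rho(x-y)\le\rho(x)e^{-s(x)^\top y+L\|y\|^2/2}$ and lower bounding $\rho_\eta(x)$ by the reverse estimate---produces a ratio carrying a multiplicative $[(1+\eta L)/(1-\eta L)]^{d/2} = e^{\Omega(d\eta L)}$ term that survives $\tfrac{1}{r}\log$ and spoils the scaling. Exploiting strong log-concavity of $q_x$ via Brascamp-Lieb sidesteps this defect and is what yields the clean dimension dependence in the final bound.
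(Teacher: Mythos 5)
The statement you were asked to prove is the sub-Gaussian quadratic-MGF bound itself (Theorem 2.6 of Wainwright, restated as the auxiliary lemma in Appendix G): if $\rho$ is $\sigma$-sub-Gaussian then $\E_{\rho}[\exp(\lambda\|X\|^2/(2\sigma^2))] \le (1-\lambda)^{-1/2}$ for all $0 \le \lambda < 1$. Your write-up never proves this. What you prove instead is the downstream result, Lemma~\ref{lem-subgau-kde} (the MGF error of the population-level KDE score estimator), and at the decisive step you explicitly invoke ``the $\sigma$-sub-Gaussian quadratic-MGF bound cited right after the lemma'' to control $\E_\rho[e^{C r L^4 \eta^2 \|X\|^2}]$. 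As a proof of the statement at hand this is circular: the inequality to be established is used as a black box, and nothing in your argument addresses it. (In the paper this lemma is imported from Wainwright's book without proof, so the comparison is not about style: the missing content is precisely the textbook argument or a substitute for it.)

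A self-contained proof is short and is worth knowing: for a centered real $\sigma$-sub-Gaussian $X$, introduce an independent $Z \sim \N(0,1)$ and use Gaussian linearization, $e^{\lambda X^2/(2\sigma^2)} = \E_Z\bigl[e^{(\sqrt{\lambda}X/\sigma) Z}\bigr]$; exchanging the order of integration (Tonelli) and applying the sub-Gaussian MGF bound $\E_X[e^{tX}] \le e^{\sigma^2 t^2/2}$ with $t = \sqrt{\lambda}Z/\sigma$ gives $\E_X[e^{\lambda X^2/(2\sigma^2)}] \le \E_Z[e^{\lambda Z^2/2}] = (1-\lambda)^{-1/2}$, the last identity being the Gaussian chi-square MGF, finite exactly for $\lambda < 1$. (For the vector form used in the paper one runs the same argument with $Z \sim \N(0,I_d)$ and $\E_X[e^{v^\top X}] \le e^{\sigma^2\|v\|^2/2}$, which yields the exponent $-d/2$ rather than $-1/2$; this only affects constants in Lemma~\ref{lem-subgau-kde} after taking $\frac{1}{r}\log$.) Your Brascamp--Lieb derivation of the pointwise bound $\|s_\eta(x)-s(x)\|^2 \lesssim L^2(\eta d + \eta^2\|s(x)\|^2)$ is a reasonable alternative to the paper's appeal to Lemma 13 of \cite{CCL+22}, but it belongs to the proof of Lemma~\ref{lem-subgau-kde}, not to the statement you were asked to prove.
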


\label{sec:pf-subgau-kde}
\begin{proof}[Proof of Lemma \ref{lem-subgau-kde}]
By \cite[Lemma 13]{CCL+22},
\begin{align*}
    \|s_\eta(x) - s(x)\| &\lesssim L \sqrt{\eta d} + L\eta \|s(x)\| \\
    &\le L \sqrt{\eta d} + L\eta (\|s(0)\| + L \|x\|)
\end{align*}
where $\lesssim$ hides absolute constants. If $\eta \le d / \|s(0)\|^2$, then
\[\|s_\eta(x) - s(x)\| \lesssim 2L \sqrt{\eta d} + L^2 \eta \|x\|\]
It follows that $\|s_\eta(x) - s(x)\|^2 \lesssim 8L^2 \eta d + 2L^4 \eta^2 \|x\|^2.$ Then we have
\begin{align*}
    \E_\rho[\exp[r\|s_\eta-s\|^2]]
    &\lesssim \exp(8 r L^2 \eta d) \, \E_\rho[\exp(2 r L^4 \eta^2 \|x\|^2)]
    \le \frac{\exp(8 r L^2 \eta d)}{(1-4\sigma^2 r L^4 \eta^2)^{1/2}}
\end{align*}
as long as $4\sigma^2 r L^4 \eta^2 < 1$, i.e.\
$\eta < \frac{1}{2 \sigma \sqrt{r} L^2}$. By noting that $1-x \ge e^{-2x}$ for $0 \le x \le \frac{1}{2}$, we have if
$\eta \le \frac{1}{2 \sqrt{2} \sigma \sqrt{r} L^2}$,
\begin{align*}
    \E_\rho[\exp(r\|s_\eta - s\|^2)]
    &\lesssim \exp(8 r L^2 \eta d + 4\sigma^2 r L^4 \eta^2).
\end{align*}
Therefore, we obtain the desired error bound
\begin{align*}
    \frac{1}{r}\log \E_\rho[e^{r\|s_\eta - s\|^2}]
    \lesssim \eta L^2 (d + \sigma^2\eta L^2).
\end{align*}
\end{proof}

\section{Smoothness is preserved along the heat flow}
\label{sec:pf-heat-flow-smooth}

\begin{lemma}\label{lemma:heat-flow-smooth}
Assume $\rho \propto e^{-f}$ where $f$ is $L$-smooth. Let it evolve along the heat flow,
then at time $t \in (0, \frac{1}{2L})$,
$ \rho_t =  \rho * \N(0, tI_d)$ is $2L$-smooth.
\end{lemma}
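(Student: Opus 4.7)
The plan is to write $\rho_t(y) = (\rho\ast g_t)(y) \propto \int e^{-V_y(x)}\,dx$ with $V_y(x) = f(x) + \|y-x\|^2/(2t)$, so that $\rho_t(y)$ is (up to a $y$-independent factor) the partition function of the tilted family $\pi_y(x) \propto e^{-V_y(x)}$. A direct exponential-family computation---differentiate $\log \int e^{-V_y(x)}\,dx$ once in $y$ to obtain $\nabla_y\log\rho_t(y) = -\tfrac{1}{t}(y-\E_{\pi_y}[X])$, then once more---yields the identity
\[
\nabla^2 \log \rho_t(y) \;=\; -\frac{1}{t}\,I \;+\; \frac{1}{t^2}\,\Cov_{\pi_y}(X).
\]
The task thus reduces to matching upper and lower bounds on $\Cov_{\pi_y}(X)$, uniformly in $y$.

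For the upper bound on $\Cov_{\pi_y}$, note that $\|\nabla^2 f\|_{\op}\le L$ gives $\nabla^2 V_y = \nabla^2 f + \tfrac{1}{t} I \succeq (\tfrac{1}{t}-L)\,I$, so $\pi_y$ is $(\tfrac{1}{t}-L)$-strongly log-concave whenever $t<1/L$. The Brascamp--Lieb variance inequality applied to linear functionals then gives $\Cov_{\pi_y}(X) \preceq \tfrac{t}{1-tL}\,I$. Substituting into the identity above and using $t<\tfrac{1}{2L}$,
\[
\nabla^2 \log \rho_t(y) \;\preceq\; \frac{L}{1-tL}\,I \;\preceq\; 2L\,I.
\]

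For the matching lower bound on $\Cov_{\pi_y}$ I would use Stein's identity: integration by parts in each coordinate gives $\E_{\pi_y}\!\left[\nabla V_y(X)(X-m)^{\top}\right] = I$ with $m=\E_{\pi_y}[X]$. For any unit vector $v$, Cauchy--Schwarz applied to the scalar identity $1 = \E_{\pi_y}[(v^{\top}\nabla V_y)(v^{\top}(X-m))]$ yields
\[
1 \;\le\; \bigl(v^{\top}\E_{\pi_y}[\nabla V_y\,\nabla V_y^{\top}]\,v\bigr)\,\bigl(v^{\top}\Cov_{\pi_y}(X)\,v\bigr).
\]
A second integration by parts gives the Bochner-type identity $\E_{\pi_y}[\nabla V_y \nabla V_y^{\top}] = \E_{\pi_y}[\nabla^2 V_y] \preceq (L+\tfrac{1}{t})\,I$, hence $\Cov_{\pi_y}(X) \succeq \tfrac{t}{1+tL}\,I$, which in turn gives
\[
\nabla^2 \log \rho_t(y) \;\succeq\; -\frac{L}{1+tL}\,I \;\succeq\; -L\,I \;\succeq\; -2L\,I.
\]
Combining the two bounds yields $\|\nabla^2 \log \rho_t(y)\|_{\op}\le 2L$, i.e.\ $\rho_t$ is $2L$-smooth.

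The main obstacle is the lower bound on $\Cov_{\pi_y}$: the Brascamp--Lieb upper bound is routine, but the matching lower bound is less standard and hinges on combining Stein's identity with a Cauchy--Schwarz step and the integration-by-parts identity $\E[\nabla V_y \nabla V_y^{\top}] = \E[\nabla^2 V_y]$. Some care is also needed to justify vanishing of boundary terms in these integrations by parts, which follows from the strong log-concavity of $\pi_y$ (and hence its Gaussian tails) whenever $t<1/L$.
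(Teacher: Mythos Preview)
Your proof is correct and follows essentially the same route as the paper: both derive the Tweedie-type identity $-\nabla^2\log\rho_t(y)=\tfrac{1}{t}I-\tfrac{1}{t^2}\Cov_{\pi_y}(X)$, upper-bound the covariance via strong log-concavity of $\pi_y$, and lower-bound it using the Cram\'er--Rao-type inequality $\Cov_{\pi_y}(X)\succeq (L+\tfrac{1}{t})^{-1}I$. The only cosmetic difference is in how that last lower bound is obtained: you combine Stein's identity with Cauchy--Schwarz and the Bochner identity $\E[\nabla V_y\nabla V_y^\top]=\E[\nabla^2 V_y]$, whereas the paper packages the same two integrations by parts as the nonnegativity of the relative Fisher-information matrix $\E_{\pi_y}[(\nabla\log\tfrac{\pi_y}{\nu})(\nabla\log\tfrac{\pi_y}{\nu})^\top]=\E_{\pi_y}[\nabla^2 V_y]-\Cov_{\pi_y}(X)^{-1}\succeq 0$, yielding the slightly sharper matrix inequality $\Cov_{\pi_y}(X)^{-1}\preceq \E_{\pi_y}[\nabla^2 V_y]$ directly.
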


\begin{proof}
First, we derive 
\begin{align}
\label{eq:score}
    s_t(y) = \nabla \log \rho_t (y) =\frac{\E_{\rho_{0 \mid t=y}}[X] - y}{t}.
\end{align}
Note that $\rho_{t \mid 0}(y \mid x) = (2\pi t) ^{-d/2}\exp(-\frac{\| y - x \|^2}{2t})$ and 
$\nabla\, \rho_{t \mid 0}(y \mid x) = \rho_{t \mid 0}(y \mid x) \frac{(x - y)}{t}$
where $\nabla = \nabla_y$ is derivative with respect to $y$.
Since
$\rho_t = \rho \ast \N(0, tI_d) = \int \rho(x) \rho_{t \mid 0}(y \mid x) dx$, the 
score function at time $t$ is
\begin{align*}
    s_t(y) &= \nabla \log \rho_t(y) \\
    &= \frac{\int \rho(x) \nabla \rho_{t \mid 0}(y \mid x) dx}{\rho_t(y)} \\
    &= \frac{\int \rho(x) \rho_{t \mid 0}(y \mid x)  \frac{(x - y)}{t} dx}{\rho_t(y)} \\
    &= \int \rho_{0 \mid t=y}(x \mid y) \left(\frac{x - y}{t} \right) dx \\
    &=\frac{\E_{\rho_{0 \mid t=y}}[X] - y}{t}. 
\end{align*}
Next, we derive the following
\begin{align}\label{Eq:HessRhot}
    -\nabla^2 \log \rho_t (y) = \frac{I_d}{t}- \frac{\text{Cov}_{\rho_{0 \mid t=y}}[X]}{t^2}.
\end{align}
Noting that 
\begin{align*}
    \nabla \rho_{0 \mid t} (y \mid x) &= \nabla\, \frac{\rho_{t \mid 0} (y\mid x) \rho(x)}{\rho_t(y)} \\
    &=\frac{\nabla \,\rho_{t \mid 0} (y\mid x) \rho(x)}{\rho_t (y)} - \frac{\rho_{t \mid 0} (y\mid x) \rho(x) \nabla \rho_t(y)}{ \rho_t^2(y)} \\
    &=  \rho_{0 \mid t} (y \mid x) \frac{x-y}{t} - \rho_{0 \mid t} (y \mid x) \nabla \log \rho_t(y).
\end{align*}
we have the gradient of posterior mean is
\begin{align*}
    \nabla \E_{\rho_{0 \mid t=y}}[X] &= \int \nabla \rho_{0 \mid t} (y \mid x) x^T dx \\
    &= \E_{\rho_{0 \mid t=y}}\left[\frac{ (x-y)x^T}{t} - \nabla \log \rho_t(y) x^T \right] \\
    &= \frac{\E_{\rho_{0 \mid t=y}}[XX^T]}{t} - \frac{\E_{\rho_{0 \mid t=y}}[X] \E_{\rho_{0 \mid t=y}}[X^T]}{t} \qquad \text{by Eq~\eqref{eq:score}} \\
    &= \frac{\text{Cov}_{\rho_{0 \mid t=y}}[X]}{t}.
\end{align*}
Hence, we obtain
\begin{align*}
    -\nabla^2 \log \rho_t (y) = \frac{I_d}{t}- \frac{\text{Cov}_{\rho_{0 \mid t=y}}[X]}{t^2}.
\end{align*}

We now bound the covariance term for any $y \in \R^d$.
Since $\rho_{0 \mid t}(x \mid y) \propto e^{-f(x) - \frac{1}{2t} \| y-x \|^2}$, we have
\[-\nabla_x^2 \log \rho_{0 \mid t}(x\mid y) = \nabla_x^2 \left(f(x) + \frac{1}{2t} \| y-x \|^2\right) = \nabla^2f(x) + \frac{1}{t} I_d\]
(note the derivative above is with respect to $x$).
Since $\nabla^2f(x) \preceq LI_d$, we have 
\[-\nabla_x^2 \log \rho_{0 \mid t}(x\mid y)\preceq (L + \frac{1}{t}) I_d.\]
This implies (see Lemma~\ref{Lem:Cov} below):
\[\Cov_{\rho_{0 \mid t=y}}[X] \succeq \frac{1}{L + 1/t} I_d.\] Therefore, we obtain an upper bound of the Hessian matrix~\eqref{Eq:HessRhot}:
\begin{align}
\label{eq:upbd}
-\nabla^2 \log \rho_t (y) \preceq \left( \frac{1}{t} - \frac{1}{t (tL+1)} \right) I_d = \frac{L}{tL + 1} I_d.
\end{align}

To get a lower bound, we note that since $\nabla^2 f(x) \succeq -LI_d$,
\[-\nabla_x^2 \log \rho_{0 \mid t}(x\mid y) \succeq (-L + \frac{1}{t}) I_d \succeq 0\]
so for $t < \frac{1}{L}$, $\rho_{0 \mid t}(\cdot \mid y)$ is $(\frac{1}{t}-L)$-strongly log-concave, which implies
\[\Cov_{\rho_{0 \mid t=y}}[X] \preceq \frac{1}{1/t-L} I_d.\]
Therefore,
\begin{align}
\label{eq:lowbd}
-\nabla^2 \log \rho_t (y) \succeq \left(\frac{1}{t}- \frac{1}{t(1-tL)}\right)I_d = - \frac{L}{1-tL}I_d.
\end{align}
Combining Eq.~\eqref{eq:upbd} and \eqref{eq:lowbd}:
\[ - \frac{L}{1-tL}I_d \preceq -\nabla^2 \log \rho_t (y) \preceq \frac{L}{1+tL} I_d.\]
For $0 \le t < \frac{1}{L}$, $\frac{L}{1-tL} \ge \frac{L}{1+tL}$.
Therefore, $\rho_t$ is $\frac{L}{1-tL}$-smooth. If $t \le \frac{1}{2L}$, then we have $\frac{L}{1-tL} \le 2L$, so we conclude $\rho_t$ is $2L$-smooth for $0 \le t \le \frac{1}{2L}$.
\end{proof}

We also have the following estimate which appears in   \cite{KP2021, BH1976}.
Here we provide an alternate proof.

\begin{lemma}\label{Lem:Cov}
Suppose a density $\rho$ satisfies $-\nabla^2 \log \rho(x) \preceq L I.$
    Then
    \[\Cov_\rho(X) \succeq \frac{1}{L} I.\]
\end{lemma}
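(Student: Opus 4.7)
The plan is to recognize this as a Cramér–Rao / Stein-type inequality: the upper Hessian bound on $-\log \rho$ yields an upper bound on the Fisher information matrix $\mathbb{E}_\rho[(\nabla \log \rho)(\nabla \log \rho)^\top]$, which in turn forces a lower bound on the covariance via an integration-by-parts duality.

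Concretely, I would reduce to a one-dimensional statement by fixing an arbitrary $v \in \mathbb{R}^d$ and aiming to show $v^\top \Cov_\rho(X) v \ge \|v\|^2 / L$. Let $\bar X = \mathbb{E}_\rho[X]$ and define $\phi(x) = v^\top (x - \bar X)$, so that $\nabla \phi = v$. Integrating $\nabla \cdot (\rho \phi v) = 0$ over $\mathbb{R}^d$ (assuming sufficient decay of $\rho$) gives the Stein identity
\[
\mathbb{E}_\rho[\phi(X)\, v^\top \nabla \log \rho(X)] = -\|v\|^2.
\]
Cauchy–Schwarz then yields
\[
\|v\|^4 \le \mathbb{E}_\rho[\phi(X)^2]\, \mathbb{E}_\rho[(v^\top \nabla \log \rho(X))^2] = \bigl(v^\top \Cov_\rho(X) v\bigr)\, \bigl(v^\top J v\bigr),
\]
where $J = \mathbb{E}_\rho[(\nabla \log \rho)(\nabla \log \rho)^\top]$ is the Fisher information matrix.

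The second main ingredient is another integration by parts to identify $J$ with $\mathbb{E}_\rho[-\nabla^2 \log \rho]$: componentwise,
\[
\int \rho\, (\partial_i \log \rho)(\partial_j \log \rho)\, dx = \int (\partial_i \rho)(\partial_j \log \rho)\, dx = -\int \rho\, \partial_i \partial_j \log \rho\, dx.
\]
Combined with the hypothesis $-\nabla^2 \log \rho(x) \preceq L I$ pointwise, we get $J \preceq L I$, hence $v^\top J v \le L \|v\|^2$. Plugging this into the Cauchy–Schwarz bound yields $v^\top \Cov_\rho(X) v \ge \|v\|^2/L$ for every $v$, which is the claim.

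The main technical obstacle is justifying the two integration-by-parts steps, since the lemma only assumes the upper Hessian bound and says nothing about integrability or decay of $\rho$ and $|\nabla \log \rho|$. In the intended application (Lemma~\ref{lemma:heat-flow-smooth}, where $\rho_{0 \mid t}(\cdot \mid y) \propto e^{-f - \|y-\cdot\|^2/(2t)}$), these boundary terms vanish because of the quadratic confinement, so the argument goes through cleanly; for a fully general statement one would add a standard regularity/decay assumption or approximate $\rho$ by truncated smooth densities and pass to the limit. All other steps are routine.
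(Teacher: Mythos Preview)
Your proof is correct and is the classical Cram\'er--Rao derivation: a Stein identity plus Cauchy--Schwarz, combined with the integration-by-parts identity $J=\E_\rho[-\nabla^2\log\rho]$. The paper uses the same two integration-by-parts ingredients but packages the final inequality differently: instead of fixing a direction $v$ and applying scalar Cauchy--Schwarz, it introduces the Gaussian $\nu=\N(m,C)$ with the same mean and covariance as $\rho$ and observes that the relative Fisher information matrix
\[
\tilde J_\nu(\rho)=\E_\rho\!\left[\Big(\nabla\log\tfrac{\rho}{\nu}\Big)\Big(\nabla\log\tfrac{\rho}{\nu}\Big)^{\!\top}\right]
= \E_\rho[-\nabla^2\log\rho]-C^{-1}
\]
is positive semidefinite, giving $C^{-1}\preceq LI$ directly as a matrix inequality. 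This is exactly the matrix form of your Cauchy--Schwarz step (positivity of the second moment of $\nabla\log\rho+C^{-1}(X-m)$), so the two arguments are really the same idea; the Gaussian-comparison framing just avoids the reduction to one dimension and yields the matrix bound in one line. Your remark about the boundary terms applies equally to both proofs.
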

\begin{proof}
    Let $\nu = \N(m,C)$ be a Gaussian with the same mean $m = \E_\rho[X]$ and covariance $C = \Cov_\rho(X)$ as $\rho$.
    Note $-\nabla \log \nu(x) = C^{-1}(x-m)$.
    By calculation, we can show that the relative Fisher information matrix is:
    \begin{align*}
        \tilde J_\nu(\rho) &\coloneqq \E_\rho\left[\left(\nabla \log \frac{\rho}{\nu}\right)\left(\nabla \log \frac{\rho}{\nu}\right)^\top \right] \\
        &= \E_\rho\left[\left(\nabla \log \rho\right)\left(\nabla \log \rho\right)^\top \right] + 
        \E_\rho\left[\left(\nabla \log \rho\right)\left(C^{-1}(x-m)\right)^\top \right]  \\
        &\qquad +  \E_\rho\left[\left(C^{-1}(x-m)\right)\left(\nabla \log \rho\right)^\top \right] + \E_\rho\left[\left(C^{-1}(x-m)\right)\left(C^{-1}(x-m)\right)^\top \right] \\
        &= \E_\rho[-\nabla^2 \log \rho] - C^{-1} - C^{-1} + C^{-1} C C^{-1} \\
        &\preceq LI - C^{-1}
    \end{align*}
    where the third equality above holds by integration by parts, and the last inequality holds by $L$-smoothness of $\rho$.
    Since $\tilde J_\nu(\rho) \succeq 0$, this implies $C^{-1} \preceq LI$ or equivalently 
    $C \succeq \frac{1}{L} I$, as desired.
\end{proof}

\end{document}